\newcolumntype{C}[1]{p{#1}}
\newtheorem{theorem}{Theorem}
\newtheorem{example}[theorem]{Example}
\newtheorem{definition}[theorem]{Definition}
\newtheorem{lemma}[theorem]{Lemma}
\newtheorem{proposition}[theorem]{Proposition}
\newtheorem{corollary}{Corollary}[theorem]
\newtheorem*{claim*}{Claim}
\newtheorem*{rep@theorem}{\rep@title}
\newcommand{\newreptheorem}[2]{%
	\newenvironment{rep#1}[1]{%
		\def\rep@title{#2 \ref{##1}}%
		\begin{rep@theorem}}%
		{\end{rep@theorem}}}
\pgfplotsset{width=10cm,compat=1.9,tick label style={font=\Huge}}
\newcommand{\mulsetl}{\{\!\!\{}
\newcommand{\mulsetr}{\}\!\!\}}
\newcommand{\rsfoc}{\mathcal{RSFOC}_2}
\newcommand{\foc}{\mathcal{FOC}_2}
\newcommand{\feature}{\mathbf{x}}
\newcommand{\fset}{\textbf{X}}
\newcommand{\rgnn}{R-GNN\xspace}
\newcommand{\rrgnn}{R$^2$-GNN\xspace}
\newcommand{\relations}{P_2}
\newcommand{\ggraph}{\mathcal{G}_u}
\newcommand{\bgraph}{\mathcal{G}_b}
\newcommand{\sgraph}{\mathcal{G}_s}
\newcommand{\primal}{\emph{primal}}
\newcommand{\iaux}{\emph{aux1}}
\newcommand{\eaux}{\emph{aux2}}
\newcommand{\mg}{multi-relational graphs\xspace}
\title{Calibrate and Boost Logical Expressiveness of GNN Over Multi-Relational and Temporal Graphs}
\author{%
  Yeyuan Chen\thanks{Equal contribution, listed in alphabetical order.}\\
  University of Michigan\\
  \texttt{yeyuanch@umich.edu}\\
  \And
  Dingmin Wang\footnotemark[1]\\
  University of Oxford\\
  \texttt{dingmin.wang@cs.ox.ac.uk}\\
}
\begin{document}
\doparttoc 
\faketableofcontents 

\part{} 

\maketitle

\begin{abstract}

As a powerful framework for graph representation learning, Graph Neural Networks (GNNs) have garnered significant attention in recent years. However, to the best of our knowledge, there has been no formal analysis of the logical expressiveness of GNNs as Boolean node classifiers over multi-relational graphs, where each edge carries a specific relation type. In this paper, we investigate $\foc$, a fragment of first-order logic with two variables and counting quantifiers. On the negative side, we demonstrate that the R$^2$-GNN architecture, which extends the local message passing GNN by incorporating global readout, fails to capture $\foc$ classifiers in the general case. Nevertheless, on the positive side, we establish that \rrgnn models are equivalent to $\foc$ classifiers under certain restricted yet reasonable scenarios. To address the limitations of \rrgnn regarding expressiveness, we propose a simple \emph{graph transformation} technique, akin to a preprocessing step, which can be executed in linear time. This transformation enables \rrgnn to effectively capture any $\foc$ classifiers when applied to the "transformed" input graph. Moreover, we extend our analysis of expressiveness and \emph{graph transformation} to temporal graphs, exploring several temporal GNN architectures and providing an expressiveness hierarchy for them. To validate our findings, we implement \rrgnn and the \emph{graph transformation} technique and conduct empirical tests in node classification tasks against various well-known GNN architectures that support multi-relational or temporal graphs. Our experimental results consistently demonstrate that \rrgnn with the graph transformation outperform the baseline methods on both synthetic and real-world datasets. The code is available at \url{https://github.com/hdmmblz/multi-graph}.

\end{abstract}

\section{Introduction}
Graph Neural Networks~(GNNs) have become a standard paradigm for learning with graph structured data, such as knowledge graphs~\cite{park2019estimating,cucala2021explainable,ourss} and molecules~\cite{hao2020asgn,gasteiger2021gemnet,guo2021few}. GNNs take as input a graph where each node is labelled by a feature vector, and then they recursively update the feature vector of each node by processing a subset of the feature vectors from the previous layer. For example, many GNNs update a node's feature vector by combining its value in the previous layer with the output of some aggregation function applied to its \emph{neighbours}' feature vectors in the previous layer; in this case, after $k$ iterations, a node's feature vector can capture structural information about the node’s $k$-hop neighborhood. GNNs have proved to be very efficient in many applications like knowledge graph completion and recommender systems. Most previous work on GNNs mainly revolves around finding GNN architectures (e.g.\ using different aggregation functions or graph-level pooling schemes) which offer good empirical performance~\cite{kipf2016semi,xu2018powerful,corso2020principal}. The theoretical properties of different architectures, however, are not yet well understood. 

In \cite{xu2018powerful}, the authors first proposed a theoretical framework to analyze the expressive power of GNNs by establishing a close connection between GNNs and the Weisfeiler-Lehman (1-WL) test for checking graph isomorphism. Similarly, \citet{geerts2022expressiveness} provides an elegant way to easily obtain bounds on the separation power of GNNs in terms of the Weisfeiler-Leman (k-WL) tests. However, the characterization in terms of the Weisfeiler-Lehman test only calibrates distinguishing ability. It cannot answer \emph{which Boolean node classifier can be expressed by GNNs}. To this end, 
\citet{Barcelo2020The} consider a class of GNNs named ACR-GNNs proposed in \cite{battaglia2018relational}, where the update function uses a ``global'' aggregation of the features of all nodes in the graph in addition to the typical aggregation of feature vectors of neighbour nodes. Then, the authors of the paper prove that in the single-relational~\footnote{The ``single-relational'' means there is only one type of edges in the graph. } scenario, ACR-GNNs can capture every Boolean node classifier expressible in the logic $\foc$.

However, most knowledge graphs need multiple relation types. For example, in a family tree, there are multiple different relation types such as "father" and "spouse". In this paper, we consider 
 the abstraction of a widely used GNN architecture called R-GCN~\cite{schlichtkrull2018modeling}, which is applicable to \mg. Following \cite{Barcelo2020The}, we define \rrgnn as a generalization of R-GCN by adding readout functions to the neighborhood aggregation scheme. We show that although adding readout functions enables GNNs to aggregate information of isolated nodes that can not be collected by the neighborhood-based aggregation mechanism, \rrgnn are still unable to capture all Boolean node classifiers expressible as formulas in logic $\foc$ in multi-relational scenarios if applied ``directly'' to the input. 
%
This leaves us with the following questions:
(1) Are there reasonable and practical sub-classes of \mg for which $\foc$ can be captured by \rrgnn? (2) Is there some simple way to encode input graphs, so that all $\foc$ node classifiers can be captured by \rrgnn for all \mg?

In this paper, we provide answers to the above questions. Moreover, we show that our theoretical findings also transfer to temporal knowledge graphs, which are studied extensively in \cite{park2022evokg} and \cite{gao2022equivalence}. In particular, we leverage the findings from \cite{gao2022equivalence} which shows that a temporal graph can be transformed into an ``equivalent'' static multi-relational graph.
Consequently, our results, originally formulated for static multi-relational graphs, naturally extend to the domain of temporal knowledge graphs. Our contributions are as follows:
\begin{compactitem}
    \item We calibrate the logic expressiveness of \rrgnn as node classifiers over different sub-classes of multi-relational graphs.
    \item In light of some negative results about the expressiveness of \rrgnn found in the multi-relational scenario, there is a compelling need to boost the power of \rrgnn. To address this challenge, we propose a \emph{ graph transformation} and show that such a transformation enables R$^2$-GNN to capture each classifier expressible as a $\foc$ formula in all multi-relational graphs.
    %
    \item We expand the scope of expressiveness results and graph transformation from static \mg to \emph{temporal} settings. Within this context, we propose several temporal GNN architectures and subject them to a comparative analysis with frameworks outlined in \cite{gao2022equivalence}. Ultimately, we derive an expressiveness hierarchy.
\end{compactitem}

\section{Preliminaries}
\subsection{Multi-relational Graphs}\label{multigraph}
A \emph{multi-relational} graph is a $4$-tuple  
$G = (V, \mathcal{E}, P_1, P_2)$, where $V$, $P_1$, $P_2$ are finite sets of \emph{nodes}, \emph{types}
and \emph{relations}~(a.k.a, unary/binary predicates)\footnote{\noindent For directed graphs, we assume $P_2$ contains relations both in two directions (with inverse-predicates). Moreover, we assume there exists an "equality relation" $\mathsf{EQ}\in P_2$ such that $\forall x,y\in V, x=y\Leftrightarrow \mathsf{EQ}(x,y)=1$.}, respectively, 
%
and $\mathcal{E}$ is a set of triples of the form $(v_1, p_2, v_2)$ or $(v, type, p_1)$, where $p_1 \in P_1$, $p_2 \in P_2$, $v_1, v_2, v \in V$, and $type$ is a special symbol. 

Next, given arbitrary~(but fixed) finite sets $P_1$ and $P_2$ of unary and binary predicates, respectively, we define the following three kinds of graph classes:
\begin{compactitem}
    \item a \emph{universal} graph class can be any set of graphs of the form $(V, \mathcal{E}, P_1, P_2)$.
    \item a \emph{bounded} graph class is a universal graph class for which there exists $n \in \mathbb{N}$ such that each graph in the class has no more than $n$ nodes;
    \item a \emph{simple} graph class is a universal graph class where for each graph $(V, \mathcal{E}, P_1, P_2)$ in the class, and for each pair of nodes $v_1, v_2 \in V$, there exists at most one 
    triple in $\mathcal{E}$ of the form $(v_1,p_2,v_2)$, where $p_2 \in P_2$.  
\end{compactitem}
We typically use symbols $\ggraph$, $\bgraph$, and $\sgraph$ to denote universal, bounded, and simple graph classes, respectively. 

\begin{definition}
For a given graph class over predicates $P_1$ and $P_2$, 
a \emph{Boolean node classifier} is a function $\mathcal{C}$ such that for each graph $G=(V, \mathcal{E}, P_1, P_2)$ in that graph class, and each $v \in V$, $\mathcal{C}$ classifies $v$ as $true$ or $false$. 
\end{definition}

\subsection{Graph Neural Networks}
\paragraph{Node Encoding}\label{nodeenc} We leverage a GNN as a Boolean node classifier for multi-relational graphs, which cannot be directly processed by GNN architectures, requiring graphs where each node is labelled by an initial feature vector. 
Therefore, we require some form of \emph{encoding} to map a multi-relational graph to a suitable
input for a GNN. Such an encoding should keep graph permutation invariance ~\cite{geerts2021expressiveness} since we don't want a GNN to have different outputs for isomorphic graphs. 
 Inspired by~\cite{liu2021indigo}
 for a multi-relational graph $G=(V, \mathcal{E}, P_1,P_2)$ and an ordering
 $p_1, p_2, \cdots, p_k$ of the predicates in $P_1$,
 we define an initialization function $I(\cdot)$ which maps each node $v \in V$ to a Boolean feature vector $I(v)=\mathbf{x}_v$ with a fixed dimension $|P_1|$, where the $i$th component of the vector is set to 1 if and only if the node $v$ is of the type $p_i$, that is, $(\mathbf{x}_v)_i =1$ if and only if $(v, type, p_i) \in \mathcal{E}$. If $P_1$ is an empty set, we specify that each node has a  1-dimension feature vector whose value is $1$. Clearly, this encoding is permutation invariant. 

\paragraph{\rgnn}
R-GCN \cite{schlichtkrull2018modeling} is a widely-used GNN architecture that can be applied to multi-relational graphs. By allowing different aggregation and combination functions, we extend R-GCN to a  more general form which we call R-GNN. Formally, let $\left\{\{A_{j}^{(i)}\}_{j=1}^{|P_2|}\right\}_{i=1}^{L}$ and $\{C^{(i)}\}_{i=1}^{L}$
 be two sets of \emph{aggregation} and \emph{combination} functions. An R-GNN computes vectors $\mathbf{ x}^{(i)}_v$ for every node $v$  of the \emph{multi-relational} graph $G=(V, \mathcal{E}, P_1,P_2)$ on each layer $i$, via the recursive formula
\begin{equation}\label{acgnn}
    \textbf{x}_v^{(i)}=C^{(i)}\left( \textbf{x}_v^{(i-1)},\left(A_j^{(i)}(\mulsetl \textbf{x}_u^{(i-1)}|u\in \mathcal{N}_{G,j}(v)\mulsetr)\right)_{j=1}^{|\relations|}\right)
\end{equation}
where $x_v^{(0)}$ is the initial feature vector as encoded by $I(\cdot)$, $\mulsetl \cdot \mulsetr$ denotes a \emph{multiset},  $( \cdot )_{j=1}^{|P_2|}$ denotes a tuple of size $|P_2|$, $\mathcal{N}_{G,j}(v)$ denotes the neighbours of $v$ via a binary relation $p_j \in \relations$, that is, nodes $w \in V$ such that $(v,p_j,w) \in \mathcal{E}$.

\noindent 
\paragraph{\rrgnn}
\rrgnn  extends \rgnn by specifying readout functions $\{R^{(i)}\}_{i=1}^{L}$
, which aggregates the feature vectors of all the nodes in a graph. The vector $\mathbf{x}^{(i)}_v$ of each node $v$ in $G$ on each layer $i$, is computed by the following formula

\begin{equation}\label{acrgnn}
    \textbf{x}_v^{(i)}=C^{(i)}\left( \textbf{x}_v^{(i-1)},\left(A_j^{(i)}(\mulsetl \textbf{x}_u^{(i-1)}|u\in \mathcal{N}_{G,j}(v)\mulsetr)\right)_{j=1}^{|\relations|}, R^{(i)}(\mulsetl \textbf{x}_u^{(i-1)}|u\in V\mulsetr)\right) 
\end{equation}

Every layer in an R$^2$-GNN first computes the aggregation over all the nodes in $G$; then, for every node $v$, it computes the aggregation over the neighbors of $v$; and finally, it combines the features of $v$ with the two aggregation vectors; the result of this operation is the new feature 
vector for $v$. Please note that an R-GNN can be seen as a special type of R$^2$-GNN where the combination function simply ignores the output of the readout function.

It is worth noting that R-GNN as well as R$^2$-GNN is not a specific model architecture; it is a framework that contains a bunch of different GNN architectures. In the paper, we mentioned it's generalized from R-GCN (\cite{schlichtkrull2018modeling}), but our primary objective is to establish a comprehensive framework that serves as an abstraction of most Message-Passing GNNs (MPGNN). In the definitions (\Cref{acgnn,acrgnn}), the functions can be set as any functions, such as matrix multiplications or QKV-attentions. Most commonly used GNN such as R-GCN (\cite{schlichtkrull2018modeling}) and R-GAT (\cite{busbridge2019relational}) are captured (upper-bounded) within our R-GNN frameworks. Other related works, such as (\cite{Barcelo2020The,huang2023theory,qiu2023logical}) also use intrinsically the same framework as our R-GNN/R$^2$-GNN, which has been widely adopted and studied within the GNN community. We believe that analyzing these frameworks can yield common insights applicable to numerous existing GNNs

\paragraph{GNN-based Boolean node classifier} In order to translate the output of a GNN to a Boolean value, we apply a Boolean classification function $CLS: \mathbb{R}^d \to \{true,false\}$,
where $d$ is the dimension of the feature vectors $\textbf{x}_v^L$. Hence, a Boolean node classifier based on an R$^2$-GNN $\mathcal{M}$ proceeds
in three steps: (1) encode the input multi-relational graph $G$ as described above,
(2) apply the R$^2$-GNN, and (3) apply $CLS$ to the output of the  R$^2$-GNN.
This produces a $true$ or $false$ value for each node of $G$. In what follows, we abuse the language and represent a family of GNN-based Boolean node classifiers by the name of the corresponding GNN architecture; for example, R$^2$-GNN is the set of all R$^2$-GNN-based Boolean node classifiers.


\subsection{Logic $\foc$ Formulas}
In this paper, we focus on the logic $\foc$, a fragment of first-order logic that only allows formulas with at most two variables, but in turn permits to use \emph{counting quantifiers}. Formally, given two finite sets $P_1$ and $P_2$ of \emph{unary} and \emph{binary} predicates, respectively, a $\foc$  formula $\varphi$ is inductively defined according to the following grammar:
\begin{equation}\label{focdef1}
\varphi ::= A(x) \mid r(x,y) \mid \varphi\wedge\varphi \mid  \varphi\vee\varphi \mid  \neg\varphi \mid \exists^{\ge n}y(\varphi) \text{ where } A\in P_1 \text{ and } r\in P_2
\end{equation}

where $x/y$ in the above rules can be replaced by one another. But please note that $x$ and $y$ are the only variable names we are allowed to use (Though we can reuse these two names). In particular, a $\foc$ formula $\varphi$ with exactly one free variable $x$ represents a Boolean node classifier for multi-relational graphs as follows: a node $v$ is assigned to $true$ iff the formula $\varphi_v$ obtained by substituting $x$ by $v$ is satisfied by the (logical) model represented by the multi-relational graph. Similarly as the GNN-based Boolean node classifiers, in what follows, we abuse the language and represent the family of $\foc$ Boolean node classifiers by its name $\foc$. 
\subsection{Inclusion and Equality Relationships}
In this paper, we will mainly talk about inclusion/non-inclusion/equality/strict-inclusion relationships between different node classifier families on certain graph classes. To avoid ambiguity, we give formal definitions of these relationships here. These definitions are all quite natural.
\begin{definition}
For any two sets of node classifier $A,B$, and graph class $\mathcal{G}$, We say:
\begin{compactitem}
\item $A\subseteq B$ on $\mathcal{G}$, iff for any node classifier $a\in A$, there exists some node classifier $b\in B$ such that for all graph $G\in\mathcal{G}$ and $v\in V(G)$, it satisfies $a(G,v)=b(G,v)$ (Namely, $a$ and $b$ evaluate the same for all instances in $\mathcal{G}$). It implies $B$ is more expressive than $A$ on $\mathcal{G}$.
\item $A\nsubseteq B$ on $\mathcal{G}$, iff the above condition in item 1 doesn't hold.
\item $A\subsetneq B$ on $\mathcal{G}$, iff $A\subseteq B$ but $B\nsubseteq A$. It implies $B$ is \textbf{strictly} more expressive than $A$ on $\mathcal{G}$.
\item $A=B$ on $\mathcal{G}$, iff $A\subseteq B$ and $B\subseteq A$. It implies $A$ and $B$ has the same expressivity on $\mathcal{G}$.
\end{compactitem}
\end{definition}
\section{Related Work}
The relationship between first-order logic and the Weisfeiler-Lehman test was initially established by \cite{63543}. Subsequently, more recent works such as \cite{xu2018powerful}, have connected the Weisfeiler-Lehman test with expressivity of GNN. 
This line of research has been followed by numerous studies, including \cite{maron2020provably}, which explore the distinguishability of GNNs using the Weisfeiler-Lehman test technique. In particular, 
\cite{Barcelo2020The} introduced the calibration of logical expressivity in GNN-based classifiers and proposed a connection between $\foc$ and R$^2$-GNN in single-relational scenario. This led to the emergence of related works, such as \cite{huang2023theory}, \cite{geerts2021expressiveness}, and \cite{qiu2023logical}, all of which delve into the logical expressivity of GNNs. Moreover, the theoretical analysis provided in \cite{gao2022equivalence} has inspired us to extend our results to temporal graph scenarios.
%
%
\section{Logic expressiveness of \rrgnn in \mg}
\begin{figure}[t!]
  \vspace{-0.8cm}
  \centering
  \setlength\tabcolsep{10pt}
  \renewcommand{\arraystretch}{1.4}
  \begin{tabular}{cc}
  \begin{tikzpicture}[scale=0.75]
        \node at (-3, -1) {$G_1:$}; 
        \node[circle, dotted, draw, inner sep=2pt,minimum size=14pt] (a) at (-2,-2) {a};
  
        \node[circle,draw=black,inner sep=2pt,minimum size=12pt] (b) at (0,0) {b};
        \draw[line width=0.5mm,black, -] (a) to[bend right] node[midway,below,sloped] {$p_1$} (b);
        \draw[line width=0.5mm,black, -] (a) to[bend left] node[midway,above,sloped] {$p_2$} (b);
        
        \node[circle,draw=black,inner sep=2pt,minimum size=12pt] (d) at (3,0) {d};
        
        \node[circle,draw=black,inner sep=2pt,minimum size=12pt] (c) at (1,-2) {c};
        
        \draw[line width=0.5mm,black, -] (c) to[bend right] node[midway,below,sloped] {$p_1$}  (d);
        \draw[line width=0.5mm,black, -] (c) to[bend left] node[midway,above,sloped] {$p_2$} (d);
        
  \end{tikzpicture}
  
  &
   \begin{tikzpicture}[scale=0.8]
        \node at (-2.5, -1) {$G_2:$}; 
        \node[circle, dotted, draw, inner sep=2pt,minimum size=12pt] (a) at (-2,-2) {a};
        \node[circle,draw=black,inner sep=2pt,minimum size=12pt] (b) at (0,0) {b};
        
        \node[circle,draw=black,inner sep=2pt,minimum size=12pt] (c) at (1,-2) {c};

        \node[circle,draw=black,inner sep=2pt,minimum size=12pt] (d) at (3,0) {d};
                
        \draw[line width=0.5mm,black, -] (a) -- node[midway, sloped, above] {$p_1$} (b);
        
        \draw[line width=0.5mm,black, -] (a) -- node[below]{$p_2$} (c);

        \draw[line width=0.5mm,black, -] (c) -- node[midway,below,sloped] {$p_1$}  (d);
        
        \draw[line width=0.5mm,black, -] (b) -- node[above] {$p_2$} (d);
        
  \end{tikzpicture}
  \\
  \multicolumn{2}{c}{Node classifier: $\varphi(x):=\exists^{\ge 1}y(p_1(x,y)\wedge p_2(x,y))$.}
  \end{tabular}
  \caption{Multi-edge graphs $G_1$ and $G_2$, and a $\foc$ formula $\varphi(x)$ that distinguishes them; $\varphi(x)$ evaluates node $a$ in $G_1$ to $true$ and node $a$ in $G_2$ to $false$. 
  \label{graphcompare}}
\end{figure}
Our analysis begins with the observation that certain Boolean classifiers can be represented as $\foc$ formulas, but remain beyond the expressiveness of any R$^2$-GNN (and consequently, any R-GNN or R-GCN). An illustrative example of this distinction is provided in Figure~\ref{graphcompare}. In this example, we make the assumption that $P_1$ is empty, thereby ensuring that all nodes in both $G_1$ and $G_2$ possess identical initial feature vectors. Additionally, $P_2$ is defined to comprise precisely two relations, namely, $p_1$ and $p_2$. It is evident that no R$^2$-GNN can distinguish the node $a$ in $G_1$ from node $a$ in $G_2$ -- that is, when an R$^2$-GNN performs the neighbour-based aggregation, it cannot distinguish whether the $p_1$-neighbour of $a$ and the $p_2$-neighbour of $a$ are the same. Moreover, the global readout aggregation cannot help in distinguishing those nodes because all nodes have the same feature vector.

We proceed to formalize this intuition and, in the reverse direction, offer a corresponding result. We demonstrate that there exist Boolean classifiers that fall within the scope of \rrgnn but elude capture by any $\foc$ formula.

\begin{proposition}\label{inclusion}
$\foc \not\subseteq $ \rrgnn and \rrgnn $\not\subseteq \foc$  on some universal graph class.
\end{proposition}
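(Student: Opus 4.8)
The plan is to prove the two non-inclusions separately, each by exhibiting a witness.

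\textbf{Direction $\foc \not\subseteq \rrgnn$.} I would use exactly the example already set up in Figure~\ref{graphcompare}. Take $P_1 = \emptyset$, $P_2 = \{p_1, p_2, \mathsf{EQ}, \ldots\}$ (plus inverse relations as required by the paper's conventions), and let $\mathcal{G}_u$ be any universal graph class containing both $G_1$ and $G_2$. The candidate classifier is $\varphi(x) := \exists^{\ge 1} y\,(p_1(x,y) \wedge p_2(x,y))$, which is clearly a $\foc$ formula. First I would check that $\varphi$ separates the two nodes: in $G_1$, node $a$ has a single neighbour $b$ reached by both $p_1$ and $p_2$, so $\varphi(a)$ is $true$; in $G_2$, node $a$'s $p_1$-neighbour is $b$ and its $p_2$-neighbour is $c$, with $b \neq c$, so $\varphi(a)$ is $false$. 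The core argument is then that \emph{no} $\rrgnn$ can separate $(G_1, a)$ from $(G_2, a)$. Since $P_1 = \emptyset$, every node in both graphs receives the identical initial feature vector. I would prove by induction on the layer $i$ that $\mathbf{x}_v^{(i)}$ is the same for every node $v$ in $G_1 \cup G_2$: for the neighbour aggregation $A_j^{(i)}$, each node in either graph has exactly one $p_1$-neighbour and one $p_2$-neighbour (and its inverses), so the multisets $\mulsetl \mathbf{x}_u^{(i-1)} \mid u \in \mathcal{N}_{G,j}(v) \mulsetr$ are all equal by the inductive hypothesis; the readout $R^{(i)}$ sees the same multiset of node features in both graphs since both have four nodes with identical current features; hence $C^{(i)}$ produces the same output everywhere. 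Applying $CLS$ to the final layer then gives the same Boolean value to $a$ in $G_1$ and $a$ in $G_2$, so no $\rrgnn$-based classifier agrees with $\varphi$ on all of $\mathcal{G}_u$. (Strictly, the multiset-equality bookkeeping for the readout needs the two graphs to have the same number of nodes, which is why I pick $G_1, G_2$ both on four nodes; this is the one place to be slightly careful.)

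\textbf{Direction $\rrgnn \not\subseteq \foc$.} Here I would exploit the fact that a single $\rrgnn$ (and even a single R-GNN) with counting/summation aggregation can test global cardinality properties that $\foc$, being a \emph{local} logic, cannot express on unbounded graph classes. A clean witness: let $\mathcal{G}_u$ consist of all graphs with $P_1 = \emptyset$ over a single trivial relation, and take the classifier "$v$ is $true$ iff the graph has an even number of nodes." A one-layer $\rrgnn$ whose readout function sums up the (all-equal) feature vectors computes $|V|$ at every node, and $CLS$ can then output $true$ precisely when this count is even; so this classifier lies in $\rrgnn$. On the other hand, $\foc$ formulas have a fixed quantifier rank, and a standard Ehrenfeucht--Fra\"iss\'e / locality argument (or: $\foc \subseteq \mathrm{C}^2 \subseteq$ first-order logic with counting, which cannot express parity of the domain size) shows no $\foc$ formula can define this classifier on all sufficiently large graphs. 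Thus $\rrgnn \not\subseteq \foc$ on this universal class.

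\textbf{Main obstacle.} The first direction is the conceptually important one and the bookkeeping is routine once the right two four-node graphs are fixed; the only subtlety is making sure the readout argument is airtight given that $\rrgnn$ allows \emph{arbitrary} aggregation functions, so the induction must be phrased purely in terms of equality of the relevant multisets rather than of any particular numeric aggregate. The second direction's obstacle is choosing a $\rrgnn$-expressible-but-not-$\foc$ property and citing (rather than re-deriving) the locality/non-definability of parity for counting logic; I would keep that part short and point to the classical inexpressibility of domain parity in $\mathrm{C}^2$. Combining the two witnesses on a common universal graph class (e.g.\ one large enough to contain $G_1$, $G_2$, and arbitrarily large trivial graphs) yields the proposition as stated.
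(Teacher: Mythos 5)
Your first direction is essentially the paper's own proof: same witness graphs $G_1,G_2$ from Figure~\ref{graphcompare}, same formula $\varphi(x):=\exists^{\ge 1}y(p_1(x,y)\wedge p_2(x,y))$, and the same layer-by-layer induction showing that \emph{every} node of both graphs carries the same feature at every layer (each node has exactly one $p_1$- and one $p_2$-neighbour, and both graphs present the readout with the same four-element multiset), so no choice of aggregation/combination/readout functions can separate $(G_1,a)$ from $(G_2,a)$. Your second direction is correct but genuinely different from the paper's. The paper's witness is the classifier ``$v$ has strictly more $r_1$-neighbours than $r_2$-neighbours,'' realized by a one-layer \rgnn with sum aggregation and weights $+1/-1$ (no readout needed), and the $\foc$-inexpressibility is proved by hand via an induction on quantifier depth over the star graphs $G(n),H(n)$, showing that a formula with thresholds $\le m$ cannot compare the two unbounded neighbour counts. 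Your witness, parity of $|V|$ computed by the global readout, is shorter on both sides: the GNN side is a one-line readout construction, and the logic side reduces to the classical fact that counting quantifiers with bounded thresholds cannot express domain-size parity (with the paper's mandatory $\mathsf{EQ}$ relation, a formula whose thresholds are $\le m$ cannot distinguish $m{+}1$ from $m{+}2$ isolated nodes, one of which is even). What the paper's choice buys is that its separating classifier lies already in \rgnn without readout and its witness class $\{G(n),H(n)\}$ is a \emph{simple} graph class, which the paper immediately reuses as \Cref{strongeracrfoc} to obtain the strictness claim in \Cref{single-edge}; your parity example also lives on a simple graph class, so it could serve the same purpose, but you should spell out the two-graph inexpressibility argument rather than only citing locality of $\mathrm{C}^2$ if the proof is to remain self-contained. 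Your closing remark about combining the witnesses into a single universal graph class is a reasonable extra precaution; the paper itself uses two different graph classes for the two non-inclusions.
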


We prove \Cref{inclusion} in the Appendix. Here, we give some intuition about the proof. The first result is proved using the example shown in Figure~\ref{graphcompare}, which we have already discussed. To show \rrgnn $\not\subseteq \foc$, we construct a classifier $c$ which classifies a node into true iff \emph{the node has a larger number of $r_1$-type neighbors than that of $r_2$-type neighbors.} We can prove that we can easily construct an R$^2$-GNN to capture $c$. However, for $\foc$, this cannot be done, since we can only use counting quantifiers expressing that there exist at most or at least a specific number of neighbours connected via a particular relation, but our target classifier requires comparing 
 indefinite numbers of neighbours via two relations. Thus, we proceed by contradiction,
assume that there exists a $\foc$ classifier equivalent to $c$,
and then find two large enough graphs with nodes
that cannot be distinguished by the classifier (but can be distinguished by $c$).

In some real-world applications, it is often possible to find an upper bound on the size of any possible input graph or to ensure that any input graph will contain at most one relation between every two nodes. For this reason, we next present 
 restricted but positive\&practical expressiveness results on bounded and simple graph classes.

\begin{theorem}\label{single-edge}
$\foc \subseteq$ \rrgnn on any simple graph class, and $\foc\subsetneq$ \rrgnn on some simple graph class.
\end{theorem}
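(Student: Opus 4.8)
I would prove the two assertions separately: the containment $\foc\subseteq$ \rrgnn on an \emph{arbitrary} simple graph class, by compiling each $\foc$ formula into an \rrgnn through a structural induction; and the strict separation, by recycling the counting classifier already used for \Cref{inclusion}.

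For the containment, fix a simple graph class over predicates $P_1,P_2$. The plan is to show, by induction on subformulas with at most one free variable (a closed subformula being a degenerate case, constant on each graph), that there is an \rrgnn whose final layer outputs the truth value of the target formula $\varphi(x)$ at $v$, for every graph in the class and every node $v$. Concretely I would build one \rrgnn with $L$ the quantifier rank of $\varphi$, whose layer-$i$ feature vector at every node records simultaneously the truth values at that node of \emph{all} one-free-variable subformulas of quantifier rank at most $i$ (the usual "accumulate all subformulas" device, making the feature dimension the number of subformulas). The base case, atomic $A(x)$ with $A\in P_1$, is read directly from the encoding $I(\cdot)$; the connectives $\wedge,\vee,\neg$ are computed inside $C^{(i)}$ from coordinates already present in $\mathbf{x}_v^{(i-1)}$; so the only substantial case is a counting quantifier $\psi(x)=\exists^{\ge N}y\,\theta(x,y)$.

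In that case I would first put $\theta(x,y)$ into a normal form as a Boolean combination of (a) subformulas with only $x$ free (or closed); (b) subformulas with only $y$ free --- each of strictly smaller quantifier rank, hence available in the layer-$(i-1)$ feature vectors by the induction hypothesis and usable as a derived unary predicate; and (c) "link atoms" $p(x,y)$, $p(y,x)$, $\mathsf{EQ}(x,y)$ with $p\in P_2$; such a normal form always exists because $\foc$ has only two variables, so every quantified subformula of $\theta$ has at most one free variable. \emph{Simplicity is exactly what makes this case go through}: on a simple graph the binary relation holding on a given ordered pair of distinct nodes is unique, so there are only finitely many, pairwise exclusive \emph{link types} between $x$ and $y$ (one for each $p\in P_2$, one for "no relation between $x$ and $y$", and one for $x=y$), and $\{y:\theta(x,y)\}$ splits as a disjoint union over link types $\ell$ of sets $\{y:\mathrm{link}(x,y)=\ell\wedge\chi_\ell(y)\}$ for suitable one-variable $\chi_\ell$. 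For $\ell$ the relation $p_j$ this set is precisely the $p_j$-neighbours of $v$ satisfying $\chi_\ell$; for $\ell$ the type "$x=y$" it is $\{v\}$ or $\emptyset$ according to $\chi_\ell(v)$; for the "no relation" type it is $\{u\in V:\chi_\ell(u)\}$ minus the union over $j$ of the $p_j$-neighbours of $v$ satisfying $\chi_\ell$ (finitely many inclusion--exclusion terms, since $P_2$ is finite). Hence $\psi(x)$ holds at $v$ iff a fixed integer-linear threshold in these cardinalities is met: each per-relation count is produced by taking $A_j^{(i)}$ to be the sum of the relevant indicator coordinate; each global count $\#\{u\in V:\chi(u)\}$ is produced by the readout $R^{(i)}$ (the one place the readout is genuinely needed, exactly as in the single-relational analysis of \cite{Barcelo2020The}); and the thresholding, the $x$-atoms, and the $x=y$ case are assembled by $C^{(i)}$ from $\mathbf{x}_v^{(i-1)}$ and these aggregates. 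Applying $CLS$ to the coordinate tracking $\varphi$ then yields the required \rrgnn. This counting-quantifier case --- lining up the finitely many link types with the per-relation neighbourhoods and the global readout, and bookkeeping the equality/self-loop type and the inclusion--exclusion terms inside the \rrgnn framework --- is the step I expect to be the main obstacle.

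For the strict separation I would take $\sgraph$ to be, say, the class of all simple graphs and reuse the classifier $c$ from the proof of \Cref{inclusion}: $c(G,v)=true$ iff $v$ has strictly more $r_1$-neighbours than $r_2$-neighbours. A one-layer \rrgnn computes $c$ (aggregate the constant $1$ over $r_1$-neighbours and over $r_2$-neighbours, subtract, threshold at $1$), while the graphs witnessing $c\notin\foc$ there can be chosen to send distinct relations to distinct neighbours, hence to contain at most one triple between any two nodes, i.e.\ to lie in $\sgraph$; so $c\notin\foc$ on $\sgraph$. Combined with the containment, this gives $\foc\subsetneq$ \rrgnn on $\sgraph$.
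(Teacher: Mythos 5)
Your proposal is correct and follows essentially the same route as the paper: the paper first proves $\foc=\rsfoc$, rewriting each quantified subformula as $\exists^{\ge n}y(\varphi_S(x,y)\wedge\varphi'(y))$ where $\varphi_S$ fixes the exact set of relations between $x$ and $y$ (your ``link types''), notes that simplicity forces $|S|\le 1$, and then handles $|S|=1$ by per-relation neighbour aggregation and $|S|=0$ by readout minus the neighbour sums, exactly as you describe; the strictness half likewise reuses the counting classifier from \Cref{inclusion} after observing that its witness graphs form a simple graph class. The only differences are organizational (the paper factors the normal form into a separate lemma and composes $0/1$-GNNs via a closure lemma rather than accumulating all subformulas in one network), not substantive.
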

The key idea of the construction is that we will first transform the $\foc$ formula into a new form which we call \emph{relation-specified} $\foc$ ~(an equivalent form to $\foc$, see more details in our Appendix), and then we are able to construct an equivalent R$^2$-GNN inductively over the parser tree of the transformed formula. 

Having \Cref{single-edge}, one may wonder about the inclusion relationship of \rrgnn and $\foc$ in the backward direction. In \Cref{inclusion}, we showed that for arbitrary universal graph classes, this inclusion relationship fails. However, given a bounded graph class, we can show that for each R$^2$-GNN Boolean node classifier, one can write an equivalent $\foc$ classifier. An intuition about why this is the case is that all graphs in a bounded graph class will have at most $n$ constants, for some known $n \in \mathbb{N}$, so for each R$^2$-GNN classifier, we can construct an equivalent $\foc$ classifier with a finite number of sub-formulas to recover the features obtained at different layers of R$^2$-GNN.

\begin{theorem}\label{ACR-GNN}
\rrgnn $\subseteq$ $\foc$ on any bounded graph class, and \rrgnn $\subsetneq$ $\foc$ on some bounded graph class.
\end{theorem}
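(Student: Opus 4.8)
The plan is to prove the two halves separately. For the inclusion \rrgnn $\subseteq \foc$ on a bounded graph class $\bgraph$, fix $n \in \mathbb{N}$ bounding the number of nodes of every graph in $\bgraph$, and fix an R$^2$-GNN $\mathcal{M}$ with $L$ layers together with its classification function $CLS$. The key observation is that, since every graph has at most $n$ nodes, the multiset aggregations in \Cref{acrgnn} range over multisets of size at most $n$, so at each layer the feature vector of a node is determined by: its own previous feature vector, the multiset (of size $\le n$) of previous feature vectors of its $p_j$-neighbours for each $j$, and the multiset (of size $\le n$) of previous feature vectors of all nodes. I would proceed by induction on the layer $i$: maintain the invariant that for every possible Boolean ``state'' reachable at layer $i$ there is a $\foc$ formula $\psi^{(i)}_{\text{state}}(x)$ with one free variable that holds at $v$ exactly when $\mathbf{x}^{(i)}_v$ equals that state. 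The crucial point is that there are only finitely many reachable states: at layer $0$ the state is the type-encoding $I(v)$, of which there are at most $2^{|P_1|}$ possibilities, and inductively the number of reachable states at layer $i+1$ is finite because each is a function of finitely many layer-$i$ states aggregated in bounded multisets. To go from layer $i$ to $i+1$, I express ``$v$ has exactly $k$ $p_j$-neighbours in state $s$'' using the counting quantifier $\exists^{\ge k} y\,(p_j(x,y) \wedge \psi^{(i)}_s(y))$ together with its negation at $k+1$; ``exactly $k$ nodes of the whole graph are in state $s$'' is handled similarly but with a sentence $\exists^{\ge k} y\,\psi^{(i)}_s(y)$ (no occurrence of $x$), which is still a legal $\foc$ formula; a finite conjunction over all $j$, all states $s$, and all counts $k \le n$ pins down the exact aggregation inputs, and hence the layer-$(i+1)$ state, via a finite disjunction over the finitely many combinations. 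The final $\foc$ classifier is the disjunction of $\psi^{(L)}_s(x)$ over all layer-$L$ states $s$ with $CLS(s) = true$.

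For the strictness half, \rrgnn $\subsetneq \foc$ on some bounded graph class, I must exhibit a bounded graph class on which the inclusion is proper, i.e.\ a $\foc$ classifier that no R$^2$-GNN captures there. The natural candidate is exactly the class and classifier already used for $\foc \not\subseteq$ \rrgnn in \Cref{inclusion}: take the graphs $G_1$ and $G_2$ of Figure~\ref{graphcompare}, let $\bgraph = \{G_1, G_2\}$ (a bounded class, since both graphs have four nodes), let $P_1 = \emptyset$, $P_2 = \{p_1, p_2\}$ (plus the inverse relations and $\mathsf{EQ}$), and let $\varphi(x) := \exists^{\ge 1} y\,(p_1(x,y) \wedge p_2(x,y))$. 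Since $I$ assigns every node of $G_1$ and $G_2$ the same $1$-dimensional feature vector (as $P_1 = \emptyset$), and an R$^2$-GNN's neighbour aggregation receives, for node $a$ in both graphs, the multiset $\{\!\!\{ \mathbf{1} \mulsetr\}$ along each relation (one $p_1$-neighbour and one $p_2$-neighbour in each case) while the global readout receives the same multiset of four identical vectors, a straightforward induction on layers shows $\mathbf{x}^{(i)}_{a \text{ in } G_1} = \mathbf{x}^{(i)}_{a \text{ in } G_2}$ for all $i$; hence no R$^2$-GNN can separate these two nodes, whereas $\varphi$ evaluates $a$ in $G_1$ to $true$ and $a$ in $G_2$ to $false$. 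Combined with the first half (which gives \rrgnn $\subseteq \foc$ on this very class), this yields the strict inclusion.

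I expect the main obstacle to be the bookkeeping in the inclusion direction: one must argue carefully that the set of reachable states stays finite and that this finiteness is \emph{uniform} across all graphs in the class (it is, because it depends only on $n$, $|P_1|$, $|P_2|$, and $\mathcal{M}$, not on the particular graph), and one must be scrupulous that every formula built uses only the two variables $x$ and $y$ — in particular the readout sub-formulas must be written as closed formulas $\exists^{\ge k} y\,\psi^{(i)}_s(y)$ where $\psi^{(i)}_s$ is then itself rewritten with its free variable renamed to $y$, which is fine since $\foc$ permits reusing variable names. A minor subtlety is that the formulas $\psi^{(i)}_s(y)$ obtained from the induction have free variable $x$; substituting $y$ for $x$ throughout may require $\alpha$-renaming of bound occurrences, but with only two variable names available this is routine and standard for two-variable logics. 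Everything else is a direct unwinding of \Cref{acrgnn} and the semantics of counting quantifiers.
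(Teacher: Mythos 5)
Your proof is correct, but it takes a genuinely different route from the paper's. The paper argues ``logic-first'': it first shows (\Cref{finite}) that on a bounded class there are only finitely many inequivalent $\foc$ formulas of quantifier depth at most $L$, forms truth-tables over \emph{all} of them, and proves an indistinguishability lemma (\Cref{truthtable}) stating that nodes agreeing on their depth-$L$ truth-table cannot be separated by any $L$-layer \rrgnn; the classifier is then the disjunction of the truth-table formulas accepted by the model. You instead argue ``GNN-first'': you compile the network directly into logic by tracking its finitely many reachable feature states (finite because bounded node count forces bounded multisets at every layer, uniformly over the class) and building, by induction on layers, a characteristic $\foc$ formula for each state via exact per-relation and global counts with thresholds at most $n$. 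Your route is more elementary and self-contained — it needs neither \Cref{finite} nor the contrapositive distinguishing argument — and it would yield a formula whose size is governed by the number of reachable GNN states rather than by an enumeration of all depth-$L$ formulas, which is potentially much smaller than the tower-of-exponentials bound in \Cref{complexity}. What the paper's detour buys is the standalone statement that $\foc(L)$ upper-bounds the distinguishing power of $L$-layer \rrgnn, which it reuses for the complexity analysis. Two small cosmetic points: the reachable states after layer one need not be Boolean vectors (the aggregation and combination functions are arbitrary), though your argument only needs finiteness of the reachable value set, which holds; and your handling of the two-variable constraint and of the readout counting the node itself both match \Cref{acrgnn} correctly. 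Your strictness argument via the bounded class $\{G_1,G_2\}$ of \Cref{graphcompare} is exactly the paper's.
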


Combining \Cref{single-edge} and \Cref{ACR-GNN}, we have the following corollary.
\begin{corollary}\label{cor9}
\rrgnn $=\foc$ on any \emph{bounded} \emph{simple} graph class. 
\end{corollary}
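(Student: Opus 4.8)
The plan is to obtain \Cref{cor9} as an immediate consequence of \Cref{single-edge} and \Cref{ACR-GNN}, so the "proof" is really just a bookkeeping argument about how the inclusion relations interact with the hierarchy of graph classes. The crucial observation is that, by the definitions in \Cref{multigraph}, a \emph{bounded simple} graph class is nothing more than a universal graph class satisfying \emph{both} the size bound and the single-triple-per-pair condition; in particular it is at the same time a legitimate \emph{simple} graph class and a legitimate \emph{bounded} graph class. Hence both theorems are applicable to it verbatim, with no need to re-examine their proofs.

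Concretely, I would first invoke \Cref{single-edge}, instantiated with the given bounded simple graph class $\sgraph$ playing the role of "simple graph class": this yields $\foc \subseteq \rrgnn$ on $\sgraph$. I would then invoke \Cref{ACR-GNN}, instantiated with the same $\sgraph$ now playing the role of "bounded graph class": this yields $\rrgnn \subseteq \foc$ on $\sgraph$. By the definition of equality of node-classifier families on a graph class (namely, inclusion in both directions), combining the two inclusions gives $\rrgnn = \foc$ on $\sgraph$, which is exactly the statement of \Cref{cor9}.

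If one wants to be fully explicit, it is worth stating once the obvious monotonicity principle that underlies this: if $A \subseteq B$ on a graph class $\mathcal{G}$ and $\mathcal{G}' \subseteq \mathcal{G}$, then $A \subseteq B$ on $\mathcal{G}'$, because the classifier $b \in B$ witnessing agreement with a given $a \in A$ on all instances of $\mathcal{G}$ a fortiori agrees with $a$ on the instances of the smaller class $\mathcal{G}'$. This is only needed if one reads the two theorems as asserting something about the largest simple (resp. bounded) class; under the "for any such class" reading above it is not even required. There is essentially no obstacle here — the only point demanding a sentence of care is the verification that "bounded simple" is not excluded from the scope of either \Cref{single-edge} or \Cref{ACR-GNN}, and it is not, since imposing both structural restrictions simultaneously only shrinks the class and never takes it outside the families to which those theorems apply.
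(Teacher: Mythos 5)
Your proposal is correct and matches the paper exactly: \Cref{cor9} is stated there as an immediate combination of \Cref{single-edge} (giving $\foc \subseteq$ \rrgnn on the class qua simple class) and \Cref{ACR-GNN} (giving \rrgnn $\subseteq \foc$ on the class qua bounded class), with no further argument. Your extra remark that a bounded simple class falls within the scope of both theorems is the only point of care, and you handle it correctly.
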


At Last, one may be curious about the complexity of logical classifier in \Cref{ACR-GNN}. Here we can give a rather loose bound as follows:

\begin{theorem}\label{complexity}
For any bounded graph class $\bgraph$. Suppose any $G\in\bgraph$ has no more than $N$ nodes, and $\bgraph$ has unary predicate set $P_1$ and relation (binary predicate) set $P_2$. Let $m_1:=|P_1|,m_2:=|P_2|$, then for any node classifier $c$, suppose $c$ can be represented as an \rrgnn with depth (layer number) $L$, then by \Cref{ACR-GNN} there is a $\foc$ classifier $\varphi$ equivalent to $c$ over $\bgraph$, and the following hold:
\begin{compactitem}
\item The quantifier depth of $\varphi$ is no more than $L$.

\item The size of $\varphi$ (quantified by the number of nodes of $\varphi$'s parse tree) is no more than $2^{2f(L)}$, where $f(L):=2^{2^{2(N+1)f(L-1)}},f(0)=O(2^{2^{2(m_1+m_2)}})$.
\end{compactitem}
\end{theorem}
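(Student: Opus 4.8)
The plan is to revisit the construction behind \Cref{ACR-GNN} and track, layer by layer, how large the equivalent $\foc$ formula grows. Recall that the proof of \Cref{ACR-GNN} builds, for each layer $i$ of the R$^2$-GNN, a finite family of formulas — one for each reachable feature vector $\mathbf{x}_v^{(i)}$ — where the formula $\psi_{\mathbf{w}}^{(i)}(x)$ asserts ``node $x$ has feature vector $\mathbf{w}$ after $i$ layers''. Over a bounded graph class with at most $N$ nodes, only finitely many feature vectors are ever realised, and the formula for layer $i$ is assembled from the formulas for layer $i-1$ by (a) counting, for each relation $p_j \in P_2$ and each layer-$(i-1)$ type $\mathbf{w}$, how many $p_j$-neighbours of $x$ satisfy $\psi_{\mathbf{w}}^{(i-1)}$ (a bounded disjunction over threshold counts $0,1,\dots,N$ of formulas $\exists^{\ge k} y(p_j(x,y)\wedge \psi_{\mathbf{w}}^{(i-1)}(y))$), and (b) counting how many nodes in the whole graph satisfy $\psi_{\mathbf{w}}^{(i-1)}$ (the readout, handled similarly, but one must first express ``$y$ is an arbitrary node'' which with only two variables requires care — this is where the relation-specified normal form from the proof of \Cref{single-edge} is reused). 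Each such step adds one quantifier, giving the quantifier-depth bound $L$ immediately, and multiplies the size by roughly the number of (relation, type) pairs times $N$, together with the blow-up coming from the number of distinct layer-$(i-1)$ types.

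First I would pin down the recurrence for the number of distinct feature vectors (equivalently, the number of distinct formulas) at each layer. At layer $0$ there are at most $2^{m_1}$ types. The combination function at layer $i$ takes as input the previous feature of $x$, a tuple of $m_2$ aggregated neighbour-vectors, and one readout vector; since each aggregation over a multiset of size $\le N$ of vectors drawn from a set of size $T_{i-1}$ is itself determined by a vector of $\le N$-bounded counts, the number of possible layer-$i$ types is bounded by something like $T_{i-1}\cdot (N{+}1)^{m_2 T_{i-1}}\cdot (N{+}1)^{T_{i-1}}$, which is doubly exponential in $T_{i-1}$; iterating $L$ times and coarsely overestimating yields the tower $f(L)=2^{2^{2(N+1)f(L-1)}}$ with $f(0)=O(2^{2^{2(m_1+m_2)}})$. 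Then I would bound the size of a single formula $\psi_{\mathbf{w}}^{(i)}$: it is a Boolean combination of $O(m_2 T_{i-1} N)$ subformulas, each of which is $\exists^{\ge k}y$ applied to (a copy of) some $\psi_{\mathbf{w}'}^{(i-1)}$, so $\mathrm{size}(i)\le c\cdot m_2 T_{i-1} N \cdot \mathrm{size}(i-1)$ for an absolute constant $c$; unrolling and absorbing all the polynomial-in-$N$ and exponential-in-$T$ factors into the dominant tower gives the claimed $2^{2f(L)}$ bound on the final formula $\varphi$, which is a disjunction over the $\le T_L$ accepting types of $\psi_{\mathbf{w}}^{(L)}$.

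The routine part is the arithmetic of composing these exponentials; I would state the recurrences cleanly, verify the base case, and then argue monotonicity so that the crude bound $2^{2f(L)}$ dominates everything (the product $\prod_{i\le L}(c\, m_2 N\, T_i)$, the outermost disjunction of size $T_L$, and the nested substitutions). The main obstacle, and the place I would spend the most care, is the readout: with only two variables one cannot simply write $\exists^{\ge k} y\,(\top \wedge \psi_{\mathbf{w}}^{(i-1)}(y))$ and freely — one must check that the relation-specified $\foc$ machinery used for \Cref{single-edge} lets us express ``count all nodes satisfying $\psi_{\mathbf{w}}^{(i-1)}$'' without reintroducing the free variable $x$ inside the scope of the $y$-quantifier in a way that would inflate the count or break two-variable-ness, and that doing so only adds a constant number of quantifiers and a constant factor to the size (so it does not disturb the tower). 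Once that bookkeeping is in place, the two bullet points follow by a straightforward induction on $L$.
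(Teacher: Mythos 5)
Your proof is correct in outline but takes a genuinely different route from the paper's. The paper argues ``top-down'': it first shows (Proposition \ref{finite}) that up to equivalence on $\bgraph$ there are only $f(L)$ many $\foc$ formulas of quantifier depth $\le L$, then proves (Lemma \ref{truthtable}) that two nodes agreeing on every such formula cannot be separated by any $L$-layer \rrgnn, and finally writes $\varphi$ as a disjunction of truth-table conjunctions $\varphi_T$ over the accepting equivalence classes; the tower $f$ arises from iterated DNF enumeration over all depth-$k$ formulas, not from the GNN's feature vectors. You instead go ``bottom-up'', unfolding the network and defining one formula $\psi_{\mathbf{w}}^{(i)}$ per reachable feature vector. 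Both yield quantifier depth $L$ immediately, and your approach has the advantage that the number of formulas you must track at level $i$ is only \emph{singly} exponential in $T_{i-1}$ (the count $(N+1)^{m_2 T_{i-1}}\cdot(N+1)^{T_{i-1}}\cdot T_{i-1}$ you write down is singly, not ``doubly'', exponential in $T_{i-1}$ as you claim), so in principle you get a tighter bound than the paper's $f$; since $f$ dominates it, the stated bound still follows.

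Two points need repair, though neither is fatal given how loose the target bound is. First, your size recurrence $\mathrm{size}(i)\le c\,m_2T_{i-1}N\cdot\mathrm{size}(i-1)$ undercounts: $\psi_{\mathbf{w}}^{(i)}$ is not a linear-size Boolean combination of the $O(m_2T_{i-1}N)$ threshold atoms but a disjunction over \emph{all input configurations} (previous feature of $x$, one count profile per relation, one global count profile) that $C^{(i)}$ maps to $\mathbf{w}$, of which there can be $(N+1)^{(m_2+1)T_{i-1}}\cdot T_{i-1}$ many; the correct recurrence is of the form $\mathrm{size}(i)\le 2^{O(m_2NT_{i-1})}\cdot\mathrm{size}(i-1)$, which is still absorbed by $2^{2f(L)}$, but you should state it honestly. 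Second, the readout is not actually an obstacle: $\exists^{\ge k}y\,\psi_{\mathbf{w}}^{(i-1)}(y)$ is already a legal $\foc$ formula (with no free occurrence of $x$ in the scope, which is harmless), so no relation-specified machinery is needed there --- that normal form is used in the paper only for the converse inclusion $\foc\subseteq$ \rrgnn.
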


The key idea of \Cref{complexity} is the following: First, by \Cref{truthtable} in our appendix, the combination of \textbf{ALL} $\foc$ logical classifiers with quantifier depth no more than $L$ can already distinguish accepting and rejecting instances of $c$.
 Then by \Cref{finite} (This is a key point of this bound; please refer to our appendix), We know the number of intrinsically different bounded-depth 
 $\foc$ classifiers is finite, so we only need to get an upper bound on this number. Finally, we can get the desired bound by iteratively using the fact that a boolean combination of a set of formulas can be always written as DNF (disjunctive normal form). The tower of power of two comes from $L$ rounds of DNF enumerations. Although the bound seems scary, it is a rather loose bound. We give a detailed proof of \Cref{complexity} in the appendix along with the proof of \Cref{ACR-GNN}.
\section{\rrgnn capture $\foc$ over transformed \mg }\label{sec:acrgnntg}
As we pointed out in the previous section, one of the reasons why \rrgnn cannot capture $\foc$ classifiers over arbitrary universal graph classes is that in \mg, they cannot distinguish whether information about having a neighbour connected via a particular relation comes from the same neighbour node or different neighbour nodes. Towards solving this problem, we propose a \emph{graph transformation} $F$~(see \Cref{trans_def}), which enables \rrgnn to capture all $\foc$ classifiers on \mg. Similar transformation operations have also been used and proved to be an effective way to encode \mg in previous studies, e.g., MGNNs~\cite{cucala2021explainable}, Indigo~\cite{liu2021indigo} and Time-then-Graph~\cite{gao2022equivalence}. 
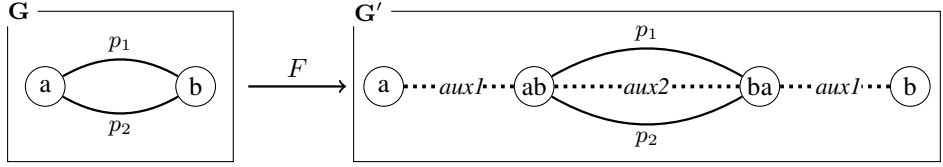
\begin{figure}[t!]
  \centering

  \begin{tikzpicture}

        \draw[draw=black] (-0.5,0.8) -- (-0.5,-1) -- (2.5, -1) -- (2.5,1) -- (-0.1, 1);
        \node at (-0.35, 1) {\small $\mathbf{G}$};
        \node[circle, draw=black, inner sep=0pt,minimum size=15pt] (a) at (0,0) {a};
        \node[circle, draw=black, inner sep=0pt,minimum size=15pt] (b) at (2, 0) {b};
        \draw[line width=0.3mm,black, -] (a) to[bend left] node[midway,yshift=0.2cm]  {\small $p_1$}   (b);
        \draw[line width=0.3mm,black, -] (a) to[bend right] node[midway,yshift=-0.2cm]  {\small $p_2$}  (b);

        \draw[draw=black] (4.1,0.8) -- (4.1,-1) -- (11.9, -1) -- (11.9,1) -- (4.6, 1);
        \node at (4.3, 1) {\small $\mathbf{G'}$};

        \draw[->, line width=0.03cm] (2.7,0) to node[above,midway] {$F$} (4,0); 
        
        \node[circle, draw=black, inner sep=0pt,minimum size=15pt] (a) at (4.5,0) {a};
        
        \node[circle, draw=black, inner sep=0pt,minimum size=15pt] (ab) at (6.5,0) {ab};
        
        \node[circle, draw=black, inner sep=0pt,minimum size=15pt] (b) at (11.5,0) {b};
        
        \node[circle, draw=black, inner sep=0pt,minimum size=15pt] (ba) at (9.5,0) {ba};

        
         \draw[line width=0.3mm,black, -] (ab) to[bend left] node[midway,yshift=0.2cm]  {\small $p_1$}  (ba);
         
         \draw[line width=0.3mm,black, -] (ab) to[bend right] node[midway,yshift=-0.2cm]  {\small $p_2$}  (ba);
         
         
         \draw[dotted,line width=0.05cm] (ab) -- (7.7,0);
         \draw[dotted,line width=0.05cm] (ba) -- (8.3, 0);
         \node at (8,0) {\small \eaux};

        
         \draw[dotted,line width=0.05cm] (a) -- (5.25,0);
         \draw[dotted,line width=0.05cm] (ab) -- (5.8, 0);
         \node at (5.55,0) {\small \iaux};

        
         \draw[dotted,line width=0.05cm] (ba) -- (10.25,0);
         \draw[dotted,line width=0.05cm] (b) -- (10.85, 0);
         \node at (10.55,0) {\small \iaux};

        
    \end{tikzpicture} 
  \caption{Graph Transformation.} \label{fig:single}
\end{figure}
\begin{definition}\label{trans_def}
Given a multigraph $G=(V, \mathcal{E}, P_1, P_2)$, the transformation $F$ will map $G$ to another graph $F(G)=(V', \mathcal{E}', P'_1, P'_2)$ with changes described as follows: 
\begin{compactitem}
    \item for any two nodes $a, b\in V$, if there exists at least one relation $p\in P_2$ between $a$ and $b$, we add two new nodes $\text{ab}$ and $\text{ba}$ to $V'$.
    \item  we add a new unary predicate \{\primal\} and two new binary predicates \{\iaux,\eaux\}. Hence,  $F(P_1):=P'_1 = P_1 \cup \{\primal\}$, and $F(P_2):=P'_2 = P_2 \cup  \{ \iaux, \eaux\}$. For each node $v' \in V'$, $\primal(v')=1$ iff $v'$ is also in  $V$; otherwise, $\primal(v')=0$;    
    \item for each triplet of the form $(a,p_2, b)$ in $\mathcal{E}$, we add to $\mathcal{E}'$ four new triples: $(\text{ab},\iaux, a)$, $(\text{ba},\iaux, b)$ and $(\text{ab},\eaux, \text{ba})$ as well as  $(\text{ab},p_2, \text{ba}) $. 
\end{compactitem}
\end{definition}

An example is in  Figure~\ref{fig:single}. We can see that after applying the \emph{graph transformation}, we need to execute two more hops to propagate information from node $a$ to node $b$. However, now we
are able to distinguish whether the information about different relations comes from the same node or different nodes. This transformation can be implemented and stored in linear time/space complexity $O(|V|+|\mathcal{E}|)$, which is very efficient.

\begin{wrapfigure}{r}{0.3\textwidth}
\centering
      \begin{tikzpicture}
         \draw (0,0) ellipse (1.9cm and 1.2cm) node at (0,0.8) {\small  \rrgnn$\circ F$};
        \draw (-0.7,-0.2) ellipse  (1cm and 0.5cm) node at (-0.8,-0.2) {\small \rrgnn};
        \draw (0.8,-0.2) ellipse  (0.8cm and 0.5cm) node at (0.86,-0.2) {\small $\foc$};
   \end{tikzpicture}
    \caption{\small Relations of
    \rrgnn, $\foc$ and \rrgnn $\circ F$.}
    \label{fig:relations}

\end{wrapfigure}

\begin{definition}Given a classifier $\mathcal{C}$ and a transformation function $F$, we define $\mathcal{C}\circ F$ to be a new classifier,  an extension of $\mathcal{C}$ with an additional transformation operation on the input graph.
\end{definition}
With \emph{graph transformation} $F$, we get a more powerful class of classifiers than R$^2$-GNN. We analyze the logical expressiveness of \rrgnn  $\circ F$ in \mg, which means first transform a graph $G$ to $F(G)$ and then run an R$^2$-GNN on $F(G)$. We will see in the following that this transformation $F$ boosts the logical expressiveness of \rrgnn prominently. 

\begin{theorem}\label{theoremacracrf}
\rrgnn $\subseteq$  \rrgnn $\circ F$ on any \emph{universal} graph class.
\end{theorem}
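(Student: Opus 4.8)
The plan is to simulate any $L$-layer \rrgnn{} $\mathcal{M}$ (with classification function $CLS$) that runs on the original graph $G$ by a $3L$-layer \rrgnn{} $\mathcal{M}'$ that runs on $F(G)$, so that after the $3L$ layers the feature vector of every \primal{} node $v$ of $F(G)$ contains, in a fixed ``main'' block of coordinates, exactly the vector $\mathbf{x}_v^{(L)}$ computed by $\mathcal{M}$ at $v$ in $G$. Applying $CLS$ to that block then yields a classifier in $\rrgnn \circ F$ that agrees with the given \rrgnn{}-classifier on every node of every $G$, which is precisely the inclusion $\rrgnn \subseteq \rrgnn \circ F$ (the restriction to a universal graph class is immediate since the construction is graph-by-graph).

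I would cut the $3L$ layers of $\mathcal{M}'$ into $L$ blocks of three sub-layers, the $i$-th block simulating the $i$-th layer of $\mathcal{M}$, maintaining the invariant that at the start of block $i$ every \primal{} node $v$ holds $\mathbf{x}_v^{(i-1)}$ in its main block and every auxiliary node holds $\mathbf{0}$ there, while one dedicated coordinate stores the \primal{} indicator (copied from $F(G)$ and never modified). The only structure of $F(G)$ that matters, all read off \Cref{trans_def}, is: a \primal{} node $a$ is joined, by \iaux{} edges pointing into $a$, exactly to the auxiliary nodes $ab$ with $b$ a $G$-neighbour of $a$; each such $ab$ is joined to its twin $ba$ by \eaux{} and by every original relation $p_j$ with $p_j(a,b)$; and $ba$ has a single \iaux{} edge into $b$. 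The three sub-layers of block $i$ then do: \textbf{(i.1)} via the \iaux{} aggregation each auxiliary node $ab$ copies the main block of its unique \iaux{}-neighbour $a$ into a scratch block, so it now stores $\mathbf{x}_a^{(i-1)}$; simultaneously the global readout is taken with a readout function that first deletes from the multiset every vector whose \primal{}-coordinate is $0$, so its value is exactly $\rho_i := R^{(i)}\big(\mulsetl \mathbf{x}_u^{(i-1)} \mid u\in V \mulsetr\big)$, which every node stores in a further scratch coordinate; \textbf{(i.2)} via the per-relation aggregations each $ab$ reads the scratch block of its $p_j$-neighbour $ba$ for every $j$, and its combination function packs the outcome into a single vector encoding the tuple $T_{ab} := \big(\,j \mapsto \mathbf{x}_b^{(i-1)}\ \text{if}\ p_j(a,b),\ \bot\ \text{otherwise}\,\big)_{j=1}^{|P_2|}$; \textbf{(i.3)} via the reverse of the \iaux{} relation each \primal{} node $a$ aggregates the multiset $\mulsetl T_{ab} \mid b \in \mathcal{N}_G(a) \mulsetr$, and its aggregation function \emph{demultiplexes} it, obtaining for every $j$ the sub-multiset $M_j := \mulsetl \mathbf{x}_b^{(i-1)} \mid b\in\mathcal{N}_{G,j}(a) \mulsetr$ (project the tuples to coordinate $j$, discard the $\bot$'s) and applying the original $A_j^{(i)}$ to each $M_j$; its combination then outputs $C^{(i)}\big(\mathbf{x}_a^{(i-1)}, (A_j^{(i)}(M_j))_j, \rho_i\big)$, which by definition of $\mathcal{M}$ is $\mathbf{x}_a^{(i)}$, into the main block, and the same sub-layer resets all auxiliary main blocks to $\mathbf{0}$, restoring the invariant for block $i+1$. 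The base case is supplied by the encoding $I(\cdot)$ of $F(G)$ (the first $|P_1|$ coordinates of the initial vector of a \primal{} node $v$ are $\mathbf{x}_v^{(0)}$, and those of an auxiliary node are $\mathbf{0}$), so an induction on $i$ establishes the claim.

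What is left is bookkeeping, but two points deserve attention. First, one must check that the edge picture of $F(G)$ is exactly as stated, including the corner cases caused by the equality relation (for every \primal{} $a$ the node $aa$ exists, with an \iaux{} edge into $a$ and an \eaux{}/$\mathsf{EQ}$ self-loop, which is consistent with $a \in \mathcal{N}_{G,\mathsf{EQ}}(a)$) and by inverse predicates (they only add the \eaux{} edges and relation edges in the reverse direction between $ab$ and $ba$ and are harmless); this is immediate from \Cref{trans_def}. Second -- and this is the step I expect to be the real obstacle -- one must verify that the multiset arithmetic in sub-layers (i.2)--(i.3) is exact: because $b \mapsto ab$ is injective and $\mathcal{N}_G(a)$ is a set, every $G$-neighbour $b$ contributes $T_{ab}$ with multiplicity one, so projecting to coordinate $j$ and discarding $\bot$'s reproduces $M_j$ with precisely the multiplicities that layer $i$ of $\mathcal{M}$ sees; and one must be convinced that a \emph{single} aggregation function can carry out this demultiplexing for all $|P_2|$ relations simultaneously and feed the results through $C^{(i)}$. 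Since the \rrgnn{} framework allows arbitrary aggregation, combination and readout functions, each function invoked above (block copy, tuple packing, \primal{}-filtered readout, demultiplexing aggregation) is admissible, so once the demultiplexing identity is granted the induction closes and composing with $CLS$ on the main block finishes the proof.
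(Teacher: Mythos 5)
Your proposal is correct and follows essentially the same route as the paper's proof: the paper likewise simulates each layer of the original \rrgnn by three layers on $F(G)$ (copy the primal feature to the auxiliary node $ab$ via \iaux, exchange it across the $ab$/$ba$ pair while zeroing out the per-relation slices for absent relations --- your tuple $T_{ab}$ with $\bot$ markers --- and then aggregate back into the primal node, with the readout restricted to \primal{} nodes), differing only in that the paper organizes this as an induction on $L$ that appends three layers at the end rather than as $L$ explicit blocks. The points you flag (multiplicity-one contribution of each neighbour $b$ through the injective map $b\mapsto ab$, and packing the demultiplexing into the arbitrary aggregation/combination functions) are exactly the facts the paper's construction relies on.
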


\begin{theorem}\label{theoremfocacrf}
$\foc \subseteq$ \rrgnn $\circ F$ on any  \emph{universal} graph class.
\end{theorem}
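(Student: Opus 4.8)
\textbf{Proof proposal for \Cref{theoremfocacrf}.}

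The plan is to prove $\foc \subseteq \rrgnn \circ F$ by exhibiting, for each $\foc$ formula $\varphi(x)$, an R$^2$-GNN $\mathcal{M}_\varphi$ such that $\mathcal{M}_\varphi$ evaluated on the primal nodes of $F(G)$ recovers exactly the truth value of $\varphi$ on $G$. The key structural observation is that the transformation $F$ already sends every \emph{simple} graph class to a (still) simple graph class — after transformation, every pair of nodes is joined by at most one triple in each direction, because all original multi-edges between $a$ and $b$ have been rerouted through the fresh intermediary nodes $\text{ab}$ and $\text{ba}$, which sit on distinct $\iaux$/$\eaux$/$p_2$ edges. Since \Cref{single-edge} already gives $\foc \subseteq \rrgnn$ on any simple graph class, the real content is a \emph{translation step}: I need to show that any $\foc$ classifier over the original vocabulary $(P_1,P_2)$ can be rewritten as a $\foc$ classifier over the enriched vocabulary $(P_1',P_2')$ of $F(G)$, evaluated only at primal nodes, and then invoke \Cref{single-edge} on $F(G)$.

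Concretely, I would proceed as follows. First, define a syntactic rewriting $\varphi \mapsto \widehat{\varphi}$ on $\foc$ formulas that relativizes all quantifiers to \primal\ nodes and replaces each atom $r(x,y)$ for $r \in P_2$ by a two-hop pattern through the auxiliary nodes: intuitively, ``there is a \primal-to-auxiliary $\iaux$-edge from $x$ into some $\text{ab}$, an $\eaux$-and-$r$ edge from $\text{ab}$ to the matching $\text{ba}$, and an $\iaux$-edge from $\text{ba}$ back to $y$.'' The subtlety is that this pattern nominally needs three points ($x$, the auxiliary pair, $y$), so I must be careful to stay within two variables; the trick is that the auxiliary node is uniquely determined by the pair $(x,y)$, so one can fold the quantification using the $\eaux$ relation to ``bounce'' between the two auxiliary copies while reusing the variable that will eventually hold $y$. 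I would verify by induction on the structure of $\varphi$ that for every $G$ and every $v \in V(G)$, $G,v \models \varphi \iff F(G), v \models \widehat{\varphi}$, using the defining clauses of $F$ in \Cref{trans_def} — in particular that $\primal$ holds exactly on the image of $V$, and that the $\iaux/\eaux/p_2$ triples are in exact bijection with the original $p_2$-triples. Since $F(G)$ lies in a simple graph class, \Cref{single-edge} yields an R$^2$-GNN capturing $\widehat{\varphi}$ on $F(G)$; composing with $F$ gives the desired R$^2$-GNN $\circ\, F$ classifier equivalent to $\varphi$ on $G$, and since the input class was an arbitrary universal class, we are done.

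The main obstacle I anticipate is the two-variable bookkeeping in the rewriting of binary atoms: the naive relativized pattern for $r(x,y)$ wants to quantify over the intermediary node $\text{ab}$ (or $\text{ba}$) as a genuine third object, which is not allowed in $\foc$. Resolving this requires exploiting that $\eaux$ is a perfect matching between the two auxiliary copies and that $\iaux$ links each auxiliary node to a \emph{unique} primal endpoint, so the existential over the auxiliary node can be simulated by a bounded alternation of the two variables — essentially the same ``relation-specified $\foc$'' normal-form reasoning referenced in the discussion after \Cref{single-edge}. A secondary, more routine obstacle is checking that the rewriting commutes correctly with negation and with the counting quantifiers $\exists^{\ge n}$: because the auxiliary node attached to a given ordered pair is unique, the multiplicity of witnesses is preserved, so $\exists^{\ge n} y\,r(x,y)$ translates without blow-up in the counting threshold — but this must be stated and verified explicitly. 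Once these two points are handled, the remainder is a direct appeal to \Cref{single-edge} and is essentially mechanical.
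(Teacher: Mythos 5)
Your reduction hinges on the claim that $F$ maps every input graph to a graph lying in a \emph{simple} graph class, so that \Cref{single-edge} can be applied to $F(G)$ as a black box. That claim is false, and it is the load-bearing step of your argument. By \Cref{trans_def}, for \emph{each} triple $(a,p,b)\in\mathcal{E}$ the transformation adds $(\text{ab},p,\text{ba})$ to $\mathcal{E}'$ \emph{in addition to} $(\text{ab},\eaux,\text{ba})$; hence whenever $a$ and $b$ are related at all, the ordered pair $(\text{ab},\text{ba})$ carries at least two parallel triples ($\eaux$ plus every original relation between $a$ and $b$), and more if $a,b$ were multiply related. So $F(G)$ violates the definition of a simple graph class even when $G$ itself is simple, and \Cref{single-edge} cannot be invoked on $F(G)$. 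The paper avoids this by not invoking \Cref{single-edge} wholesale: it first normalizes $\varphi$ into relation-specified form ($\rsfoc$), rewrites each guarded quantifier into a formula over $P'$ whose existentials are guarded by the \emph{single} relations $\iaux$ and $\eaux$ (together with a test $\widehat{\varphi}_S$ evaluated on the auxiliary node), and then reuses only the piece of the \Cref{single-edge} construction that remains valid on arbitrary, non-simple graphs, namely the one-relation-guard case $\exists^{\ge n}y(r(x,y)\wedge\varphi'(y))$.

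Second, you classify the interaction of the rewriting with negation and counting as routine, but this is where the other genuine difficulty sits. After rewriting atoms, a guard asserting that $x$ and $y$ stand in \emph{no} relation (the $S=\emptyset$ case of the relation-specified normal form, which arises as soon as negated atoms appear under a counting quantifier) has no auxiliary-node witness at all: the node $\text{xy}$ simply does not exist in $F(G)$, so the number of such $y$ cannot be obtained by any neighborhood aggregation on the transformed graph. The paper handles this with a separate construction that uses the global readout to count all primal nodes satisfying $\varphi'$ and then subtracts, via the $\iaux$-aggregation, the number of related nodes satisfying it; your proposal contains no mechanism for this case. The parts of your plan that do align with the paper---relativizing to \primal, simulating the three-point pattern within two variables by bouncing along $\eaux$, and the observation that witness multiplicities are preserved because the auxiliary node of an ordered pair is unique---are correct in spirit and essentially match the paper's $\widehat{\varphi}$ construction, but without repairing the two gaps above the proof does not go through.
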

\Cref{theoremacracrf} demonstrates that \rrgnn with \emph{graph transformation} $F$ have more  expressiveness than \rrgnn; and \Cref{theoremfocacrf} shows the connection between $\foc$ and  \rrgnn equipped with \emph{graph transformation} $F$.  We depict their relations in Figure~\ref{fig:relations}. \Cref{theoremacracrf} is a natural result since no information is lost in the process of transformation, while \Cref{theoremfocacrf} is an extension on \Cref{single-edge}, whose formal proofs can be found in the Appendix. As for the backward direction, we have the result shown in \Cref{theoremacrffoc}.
\begin{theorem}\label{theoremacrffoc}
 \rrgnn $\circ F \subseteq \foc$ on  any \emph{bounded} graph class.
\end{theorem}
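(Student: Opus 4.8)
The plan is to reduce \Cref{theoremacrffoc} to \Cref{ACR-GNN}. Fix a bounded graph class $\bgraph$ over predicates $P_1,P_2$ whose graphs have at most $N$ nodes, and set $F(\bgraph):=\{F(G)\mid G\in\bgraph\}$. Each connected ordered pair of nodes of $G$ contributes one fresh ``pair node'' to $F(G)$, so $|V(F(G))|\le N+N^2$; hence $F(\bgraph)$ is again a bounded graph class, now over the fixed extended signature with unary predicates $P_1\cup\{\primal\}$ and binary predicates $P_2\cup\{\iaux,\eaux\}$ (the equality relation on the new node set is added as usual). Therefore, for any \rrgnn classifier $\mathcal{M}$ over the extended signature, \Cref{ACR-GNN} applied to $F(\bgraph)$ yields a $\foc$ formula $\psi(x)$ over the extended signature that agrees with $\mathcal{M}$ on every graph in $F(\bgraph)$. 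Since the primal nodes of $F(G)$ are exactly the nodes of $G$, it then suffices to build a $\foc$ formula $\varphi(x)$ over $P_1,P_2$ with $(G,v)\models\varphi\iff(F(G),v)\models\psi$ for all $G\in\bgraph$ and $v\in V(G)$: such a $\varphi$ witnesses $\mathcal{M}\circ F\in\foc$ on $\bgraph$.

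We obtain $\varphi$ from $\psi$ by a ``backward translation'' along $F$, by induction on the structure of $\psi$, exploiting the rigid local shape of $F(G)$: a bundle of $P_2$-edges between nodes $a,b$ of $G$ becomes a length-three path $a - \mathrm{ab} - \mathrm{ba} - b$ in $F(G)$ in which $\mathrm{ab},\mathrm{ba}$ are non-primal, $\mathrm{ab}$ has $a$ as its unique $\iaux$-neighbour and $\mathrm{ba}$ as its unique $\eaux$-neighbour (and symmetrically), and the $P_2$-predicates linking $\mathrm{ab}$ and $\mathrm{ba}$ are exactly those linking $a$ and $b$ in $G$. All these facts are $\foc$-definable over the extended signature, and in particular every pair node has Gaifman-degree two; hence a subformula of $\psi$ evaluated at a primal node can never simultaneously name both endpoints of a pair node, and one step of a walk across a pair gadget corresponds to one step along a $P_2$-edge of $G$. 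Translating such bounded-variable walks into $\foc$ over $P_1,P_2$ is a careful induction: one tracks which of the two variables currently names which node, encodes the relation-set $S$ on a middle edge as $\bigwedge_{p\in S}p(x,y)\wedge\bigwedge_{p\in P_2\setminus S}\neg p(x,y)$, and --- using $|V(G)|\le N$ --- replaces any correlation between two nodes visited by the walk by a finite disjunction over their ($N$-bounded) $\foc$-types; quantifiers of $\psi$ ranging over the primal nodes or over the neighbours of the currently named node translate in the same way.

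The only genuinely delicate case is a quantifier $\exists^{\ge k}y\,\theta$ of $\psi$ whose range is \emph{all} pair nodes --- a global readout over pair nodes. By the local analysis, on pair nodes $\theta$ decomposes into a Boolean combination of a property of the ``tail'' $a$, a property of the ``head'' $b$ (each possibly also referring to the free variable $x$), and the relation-set between $a$ and $b$; so $\exists^{\ge k}y\,\theta$ demands at least $k$ ordered pairs $(a,b)$ of nodes of $G$ satisfying a prescribed combination of (unary type of $a$, unary type of $b$, relation-set). Counting \emph{pairs} in this way is not $\foc$-expressible in general --- which is exactly why \Cref{theoremacrffoc} is restricted to bounded classes, cf.\ \Cref{inclusion} --- but over $\bgraph$ it becomes a finite Boolean combination of $\foc$ threshold sentences: since $|V(G)|\le N$, the formulas $\exists^{\ge m}x\bigl(\tau(x)\wedge\exists^{\ge j}y(\cdots)\bigr)$ and their negations, over all $m,j\le N$, pin down for every unary type $\tau$ the exact number of nodes $a$ of type $\tau$ having exactly $j$ admissible neighbours $b$, and summing these finitely many quantities recovers the pair count and hence the truth value of $\exists^{\ge k}y\,\theta$. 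Feeding this back into the induction produces $\varphi$ and completes the reduction. I expect this last case --- reexpressing the global pair-count as a $\foc$ formula --- to be the main obstacle, since it both forces the boundedness hypothesis and requires care in showing how $\foc$ threshold formulas over $P_1,P_2$ reconstruct the relevant degree distributions.
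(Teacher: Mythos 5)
Your proposal is correct and follows essentially the same route as the paper: apply \Cref{ACR-GNN} to the image class $F(\bgraph)$ (which is still bounded, by at most $N+N^2$ nodes) and then establish $\foc\circ F\subseteq\foc$ by an induction on the formula, exploiting the rigid local shape of the gadget, with the only delicate step being the re-expression of counts over pair nodes as counts of ordered node pairs in $G$. One minor correction: your aside that ordered-pair counting is ``not $\foc$-expressible in general'' is contradicted by the paper's \Cref{enum}, which expresses the threshold ``at least $n$ ordered pairs satisfying $\varphi(x,y)$'' in $\foc$ with no boundedness assumption (boundedness is instead needed for the $S=\emptyset$ case, where a difference of counts must be evaluated); your bounded-class workaround for the pair count is nevertheless valid, so this does not affect the proof.
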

The proof of the theorem is relatively straightforward based on previous results: by \Cref{ACR-GNN}, it follows that \rrgnn $\circ F\subseteq\foc\circ F$ on any bounded graph class. Then, it suffices to prove $\foc\circ F\subseteq \foc$, which we do by using induction over the quantifier depth.

By combining \Cref{theoremfocacrf} and \Cref{theoremacrffoc}, we obtain \Cref{boundedgnnfoc}, stating that $\foc$ and \rrgnn $\circ F$ have the same expressiveness with respect to bounded graph classes.  \Cref{boundedgnnfoc} does not hold for arbitrary universal graph classes, but 
our finding is nevertheless exciting because, in many real-world applications there are upper bounds over input graph size.

\begin{corollary} \label{boundedgnnfoc}
\rrgnn  $\circ F = \foc$  on any bounded graph class.
\end{corollary}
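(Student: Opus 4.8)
The plan is to obtain \Cref{boundedgnnfoc} by simply combining the two inclusion directions already established for bounded graph classes. For the direction $\foc \subseteq \rrgnn \circ F$, I would invoke \Cref{theoremfocacrf}, which gives this inclusion over \emph{any} universal graph class, and observe that a bounded graph class is a special case of a universal graph class, so the inclusion holds a fortiori on any bounded graph class. For the reverse direction $\rrgnn \circ F \subseteq \foc$ on any bounded graph class, I would invoke \Cref{theoremacrffoc} directly. Since $A = B$ on $\mathcal{G}$ is defined precisely as the conjunction $A \subseteq B$ and $B \subseteq A$ on $\mathcal{G}$, these two facts together yield the claimed equality on any bounded graph class.

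The one subtlety worth spelling out is that \Cref{theoremacrffoc} is really a two-step argument that has to be chained correctly: its proof first uses \Cref{ACR-GNN} to get $\rrgnn \circ F \subseteq \foc \circ F$ on the bounded class, because $F(G)$ is again a graph in a bounded class (the transformation $F$ only adds $O(|\mathcal{E}|)$ new nodes, so if every $G$ has at most $n$ nodes then every $F(G)$ has at most $n + n^2$ or so nodes, which is still a uniform bound), and then it reduces $\foc \circ F \subseteq \foc$ by induction on quantifier depth, rewriting quantification over nodes of $F(G)$ as quantification over nodes of $G$ using the fresh predicates \primal, \iaux, \eaux. For the purposes of this corollary I may treat \Cref{theoremacrffoc} as a black box, so the only thing to check is that the bounded-class hypothesis propagates through both inclusions — which it does, since \Cref{theoremfocacrf} holds on all universal classes and \Cref{theoremacrffoc} is stated for bounded classes.

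I expect there to be essentially no obstacle: the corollary is a formal consequence of two theorems proved earlier, and the proof is a single sentence invoking \Cref{theoremfocacrf} and \Cref{theoremacrffoc} together with the definition of $=$ on a graph class. If anything, the only thing to be careful about is not to overclaim — the equality genuinely fails on general universal classes (by \Cref{inclusion}, $\foc \not\subseteq \rrgnn$ there, and although \Cref{theoremfocacrf} restores $\foc \subseteq \rrgnn \circ F$, the reverse inclusion $\rrgnn \circ F \subseteq \foc$ is only claimed for bounded classes), so the statement must remain scoped to bounded graph classes, exactly as written.
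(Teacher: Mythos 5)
Your proposal is correct and matches the paper's own derivation exactly: the corollary is obtained by combining \Cref{theoremfocacrf} (which gives $\foc \subseteq$ \rrgnn $\circ F$ on any universal, hence any bounded, graph class) with \Cref{theoremacrffoc} (which gives the reverse inclusion on bounded classes). Your additional remarks on why the bounded hypothesis propagates through $F$ and why the equality cannot be extended to arbitrary universal classes are accurate and consistent with the paper.
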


To show the strict separation as in \Cref{fig:relations}, we can combine \Cref{inclusion,single-edge,theoremacracrf} and \Cref{theoremfocacrf} to directly get the following:
\begin{corollary}\label{strictsep}
\rrgnn $\subsetneq$ \rrgnn$\circ F$ on some universal graph class, and $\foc\subsetneq$ \rrgnn on some simple graph class.
\end{corollary}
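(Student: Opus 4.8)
\textbf{Proof proposal for Corollary~\ref{strictsep}.}

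The plan is to assemble the two strict separations purely from results already established earlier in the paper, so no new combinatorial construction is needed; the work is bookkeeping about which inclusion comes from where and on which graph class. I would first treat the statement $\foc\subsetneq$ \rrgnn on some simple graph class, which is in fact the second half of \Cref{single-edge} verbatim, so nothing remains to prove there beyond citing that theorem. The content of the corollary is therefore concentrated in the first claim, \rrgnn $\subsetneq$ \rrgnn$\circ F$ on some universal graph class.

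For that claim I would proceed in two steps matching the definition of $\subsetneq$. The inclusion direction \rrgnn $\subseteq$ \rrgnn$\circ F$ on every universal graph class is exactly \Cref{theoremacracrf}, so it holds in particular on whatever universal class I end up choosing. For the non-inclusion direction \rrgnn$\circ F \not\subseteq$ \rrgnn on that same class, the idea is to exhibit a single classifier that lies in \rrgnn$\circ F$ but in no \rrgnn. I would take the $\foc$ classifier $\varphi(x):=\exists^{\ge 1}y\,(p_1(x,y)\wedge p_2(x,y))$ from Figure~\ref{graphcompare}, together with the two graphs $G_1,G_2$ described there. By the discussion accompanying \Cref{inclusion}, $\varphi$ is not equivalent to any \rrgnn classifier — concretely, no R$^2$-GNN distinguishes node $a$ of $G_1$ from node $a$ of $G_2$, while $\varphi$ separates them — so $\varphi\notin$ \rrgnn on the universal class $\{G_1,G_2\}$ (and hence on any universal class containing these two graphs). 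On the other hand, $\varphi\in\foc\subseteq$ \rrgnn$\circ F$ on every universal graph class by \Cref{theoremfocacrf}. Combining, $\varphi$ witnesses \rrgnn$\circ F\not\subseteq$ \rrgnn on the universal class $\{G_1,G_2\}$, which together with \Cref{theoremacracrf} gives \rrgnn $\subsetneq$ \rrgnn$\circ F$ there.

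The only mild subtlety — the ``main obstacle'', such as it is — is a matter of quantifier placement in the definitions: the non-inclusion requires a witness classifier and a witness graph class, and one must make sure the \emph{same} universal class $\mathcal{G}$ simultaneously (i) contains $G_1$ and $G_2$ so that $\varphi$ fails to be captured by any \rrgnn on $\mathcal{G}$, and (ii) is a universal class so that \Cref{theoremacracrf,theoremfocacrf} apply. Both are satisfied by taking $\mathcal{G}=\{G_1,G_2\}$ (with the $P_1,P_2$ fixed as in the figure, $P_1=\emptyset$, $P_2=\{p_1,p_2\}$ plus the conventional inverse and equality predicates), since any finite set of graphs over fixed predicate sets is a universal graph class by definition. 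I would therefore end by explicitly naming this class in the proof so that the two invoked theorems and the figure's separation argument all speak about the same $\mathcal{G}$, and then state that this, with \Cref{single-edge}, yields the corollary.
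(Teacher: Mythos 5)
Your proposal is correct and follows essentially the same route as the paper, which proves \Cref{strictsep} by directly combining \Cref{inclusion,single-edge,theoremacracrf,theoremfocacrf}: the classifier $\varphi(x):=\exists^{\ge 1}y(p_1(x,y)\wedge p_2(x,y))$ on the universal class $\{G_1,G_2\}$ from Figure~\ref{graphcompare} witnesses the non-inclusion, and the second claim is the second half of \Cref{single-edge}. Your explicit attention to fixing a single graph class $\mathcal{G}=\{G_1,G_2\}$ on which all the cited results apply simultaneously is a correct and slightly more careful spelling-out of what the paper leaves implicit.
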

One may think after transformation $F$, the logic $\foc\circ F$ with new predicateds becomes stronger as well. However by a similar proof as for \Cref{theoremfocacrf} and \Cref{enum}, we can actually show $\foc\circ F\subseteq \foc$ always holds, so $F$ won't bring added power for $\foc$. However, it indeed make \rrgnn strictly more expressive.
\section{Temporal Graphs}

As stated in \cite{gao2022equivalence}, a temporal knowledge graph, composed of multiple snapshots, can consistently undergo transformation into an equivalent static representation as a multi-relational graph. Consequently, this signifies that our theoretical results initially devised for \mg can be extended to apply to temporal graphs, albeit through a certain manner of transfer.


Following previous work~\cite{jin2019recurrent,pareja2020evolvegcn,park2022evokg,gao2022equivalence}, we define a temporal knowledge graph as a set of graph ``snapshots'' distributed over a sequence of \textbf{finite} and  \textbf{discrete} time points $\{1, 2, \dots, T\}$.
Formally, a temporal knowledge graph is a set $G = \{G_1, \cdots, G_T\}$ for some $T \in \mathbb{N}$, where each $G_t$ is a static multi-relational graph. All these $G_t$ share the same node set and predicate set.

In a temporal knowledge graph, a relation or unary fact between two nodes might hold or disappear across the given timestamps. For example, a node $a$ may be connected to a node $b$ via a relation $p$ in the first snapshot, but not in the second; in this case, we have $(a, p, b)$ in $G_1$ not in $G_2$.
To keep track of which relations hold
at which snapshots, we propose \emph{temporal predicates}, an operation which we define in \Cref{temporalize}. 
\begin{definition}\label{temporalize}
Given a temporal graph $G = \{G_1, \cdots, G_T\}$, where each $G_t$ is of the form $(V_t, \mathcal{E}_t, P_1, P_2)$,
\textit{temporal predicates} are 
obtained from $G$ by replacing, for each $t \in \{1, \dots, T\}$ and each $p \in P_2$, each triple $(v_a, p, v_b) \in \mathcal{E}_t$ with $(v_a, p^t,v_b)$, where $p^t$ is a fresh predicate, unique for $p$ and $t$. Similarly, each unary fact $(v_a,q)\in \mathcal{E}_t, q\in P_1$ should be replaced by $(v_a,q^t)$.

\end{definition}

Note that temporalising introduces $T \times |P|$ new predicates in total. 
By \emph{temporalizing predicates}, we assign a superscript to each predicate and use it to distinguish relations over different timestamps. 

\begin{definition}\label{collapse}
Given a temporal knowledge graph $G = \{G_1, \dots, G_T\}$, the collapse function $H$ maps $G$ to the static graph $H(G)$ obtained by taking the union of graphs over all timestamps in the temporalization of $G$.
\end{definition}
As we have proved in \Cref{sec:acrgnntg}, for \mg, \rrgnn with \emph{graph transformation} is more powerful than the pure \rrgnn. Here, we transfer these theoretical findings in \mg to the setting of temporal knowledge graphs. To be more specific, after \emph{temporalizing predicates},  we apply a \emph{graph transformation} to each graph snapshot.

\begin{definition}
Given a classifier $\mathcal{C}$ over temporal knowledge graphs, we define $\mathcal{C} \circ F^T$ to be a new classifier that takes any temporal knowledge graph as input, applies \emph{graph transformation} to each snapshot, and then applies classifier $\mathcal{C}$.
\end{definition}


\paragraph{R$^2$-TGNN} \citet{gao2022equivalence} casts node representation in temporal graphs into two frameworks: \emph{time-and-graph} and \emph{time-then-graph}. Due to space constraints, we refer interested readers to \cite{gao2022equivalence}  for more details about the two frameworks. Here, we define a more general GNN-based framework abbreviated as R$^2$-TGNN, where each R$^2$-TGNN is a sequence $\{\mathcal{A}_t\}_{t=1}^{T}$, where each $\mathcal{A}_t$ is an R$^2$-GNN model.Given a temporal knowledge graph $G=\{G_1, \dots, G_T\}$, where $G_t = (V_t, \mathcal{E}_t, P_1, P_2)$ for each $t \in \{1, \dots, T\}$. The updating rule is as follows:
\begin{equation}\label{acrtgnn}
\textbf{x}_v^t=\mathcal{A}_t\biggl(G_t,v,\textbf{y}^{t}\biggl) \text{~~~~~~where~~~~~}\textbf{y}_v^{t}=[ I_{G_t}(v):\textbf{x}_v^{t-1}], \forall v\in V(G_t) 
\end{equation}

where $I_{G_t}(v)$ is the one-hot initial feature vector of node $v$ at timestamp $t$,
and $\mathcal{A}_t(G_t,v,\textbf{y}^t)$ calculates the new feature vector of $v$ by running the R$^2$-GNN model $\mathcal{A}_t$ on $G_t$, but using $\textbf{y}^t$ as the initial feature vectors. As shown in \Cref{tgnnrelation},  R$^2$-TGNN composed with $F^T$ have the same expressiveness as \emph{time-then-graph}\footnote{Since temporalized predicates and timestamps make the definitions of bounded/simple/universal graph class vague, we no longer distinguish them in temporal settings. In theorem statements of this section, $=,\subseteq$ always hold for any temproral graph class, and $\subsetneq,\nsubseteq$ hold for some temporal graph class}, while being more powerful than \emph{time-and-graph}. 

\begin{theorem}\label{tgnnrelation}
time-and-graph $\subsetneq$  R$^2$-TGNN $\circ F^T=$ time-then-graph.
\end{theorem}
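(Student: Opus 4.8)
The plan is to establish the displayed chain in three pieces: the two inclusions that together give the equality R$^2$-TGNN $\circ F^T =$ time-then-graph, and the strict inclusion of time-and-graph into the middle term. Throughout I will lean on the static multi-relational results already proved --- in particular \Cref{theoremfocacrf,theoremacracrf,theoremacrffoc} and \Cref{boundedgnnfoc} --- and on the fact from \cite{gao2022equivalence} that a time-then-graph model, when its sequence encoder and its static GNN are taken as expressive as the framework allows, is equivalent to a single static GNN run on the temporalization-and-collapse $H(G)$ of the input, the whole temporal history of each node pair being carried by a single edge feature. Since collapsing a temporalization produces several parallel predicates $p^{t_1},p^{t_2},\dots$ between a pair and folding them into one edge is exactly the effect of the transformation $F$, morally ``time-then-graph $=$ \rrgnn run on $F$ applied to the temporalized collapse'', so the theorem is the temporal lift of the \Cref{boundedgnnfoc}-style equivalences.

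For the inclusion time-then-graph $\subseteq$ R$^2$-TGNN $\circ F^T$, I would emulate such a static run by a sequential one. Since $T$ is finite, an R$^2$-TGNN on $F(G_1),\dots,F(G_T)$ can devote timestamps $1,\dots,T-1$ to trivial stages whose only job is to copy the one-hot encoding of the current (temporalized, transformed) snapshot into a fresh block of coordinates of the running state $\mathbf{x}_v^{t}$; after those steps every primal and auxiliary node stores its entire temporal profile, and the final stage $\mathcal{A}_T$ is a full R$^2$-GNN which, by \Cref{theoremfocacrf}, has locally enough information to reconstruct the transformed collapsed graph and carry out the target static computation. For the reverse inclusion R$^2$-TGNN $\circ F^T \subseteq$ time-then-graph, note that a run of an R$^2$-TGNN is $T$ rounds of bounded-depth message passing with global readout in which round $t$ uses the graph $F(G_t)$; because the timestamp is recorded in the temporalized predicate names, a single static R$^2$-GNN on the transformed collapse can be cut into $T$ consecutive blocks of layers, the $t$-th block passing messages and reading out only along the predicates and nodes active at time $t$, thereby reproducing stage $\mathcal{A}_t$ exactly. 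As this is a fixed-depth static computation it lies inside time-then-graph.

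For the strict inclusion, time-and-graph $\subseteq$ R$^2$-TGNN $\circ F^T$ follows from the equality just argued together with the fact \cite{gao2022equivalence} that time-and-graph $\subseteq$ time-then-graph. To make it strict it suffices to name one temporal classifier in the middle term but outside time-and-graph: I would either reuse a temporalized version of the separating instance of \cite{gao2022equivalence}, or promote the multi-edge example of \Cref{graphcompare} to a one-snapshot temporal graph, where $F^T$ followed by a single R$^2$-GNN stage distinguishes the two marked nodes while any time-and-graph model is, within that single snapshot, an ordinary \rgnn and hence cannot (cf.\ the discussion preceding \Cref{inclusion}).

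The main obstacle is the faithful mutual simulation between the two architectural styles: R$^2$-TGNN does genuine message passing at every timestamp but carries only a finite-dimensional state, whereas time-then-graph relies on an essentially unbounded sequence encoder followed by one GNN. One has to verify that finiteness and discreteness of the time line really do let a bounded running state accumulate temporal profiles losslessly, and --- the fiddlier point --- that the auxiliary nodes which $F$ introduces per snapshot can be made to line up across timestamps and with the auxiliary nodes of the transformed collapse, so that the history of relations between a fixed pair of primal nodes is actually recoverable by local message passing on both sides. Once this bookkeeping is settled, the three pieces combine to give the statement.
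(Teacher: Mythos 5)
Your overall decomposition matches the paper's: establish the equality R$^2$-TGNN $\circ F^T=$ \emph{time-then-graph} by two inclusions, and import the strict inclusion \emph{time-and-graph} $\subsetneq$ \emph{time-then-graph} from \cite{gao2022equivalence} to get strictness. Your argument for \emph{time-then-graph} $\subseteq$ R$^2$-TGNN $\circ F^T$ (accumulate the temporal profile of each primal and auxiliary node across timestamps, then run the static GNN in the last stage) is essentially the paper's construction. The problem is the converse direction. You correctly reduce a sequential run to a single static R$^2$-GNN on the transformed collapse $F(H(G))$ (this is the paper's \Cref{ttransformation}), but then you close with ``as this is a fixed-depth static computation it lies inside time-then-graph.'' That step is not free, and it is where the paper does most of its work. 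The \emph{time-then-graph} GNN (\Cref{gaognn}) runs on the \emph{collapsed} graph with RNN-encoded node and edge features; it has no auxiliary nodes. A static R$^2$-GNN on $F(H(G))$ maintains genuine state on the nodes $e_{ab}$, which participate in aggregation and in the global readout. One must show this state is always recoverable as a function $f_{\mathcal{A}}(\feature_a,\feature_b,\mathbb{H}^E_{ab})$ of the primal features and the edge history, and that the readout over $V(F(H(G)))$ can be reassembled from readouts over primal nodes plus decoded neighbor multisets --- the inductive invariant the paper carries through its proof. You flag exactly this as ``the fiddlier point'' but do not resolve it, so the inclusion R$^2$-TGNN $\circ F^T\subseteq$ \emph{time-then-graph} is not actually established.

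A second, smaller error: your fallback separating example for strictness does not work. Promoting the multi-edge instance of \Cref{graphcompare} to a one-snapshot temporal graph does not defeat \emph{time-and-graph}, because the per-snapshot GNN in that framework (\Cref{gaognn}) aggregates triples $(\mathbb{H}_i^{V},\mathbb{H}_j^{V},\mathbb{H}^E_{ij})$ over neighbors $j$; the edge feature $\mathbb{H}^E_{ij}$ records \emph{all} relations between $i$ and $j$ jointly, so node $a$ in $G_1$ (one neighbor with edge feature $\{p_1,p_2\}$) and node $a$ in $G_2$ (two neighbors with edge features $\{p_1\}$ and $\{p_2\}$) yield different multisets and are distinguished. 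A single-snapshot \emph{time-and-graph} model is therefore strictly stronger than an \rgnn on that instance, and your ``within that single snapshot it is an ordinary \rgnn'' claim fails. The genuine separation needs a classifier whose structural dependencies are scattered across distinct timestamps, e.g.\ $\exists^{\ge 1}y\bigl(r_1^2(x,y)\wedge\exists^{\ge 1}x\,r_1^1(y,x)\bigr)$ as in the paper's \Cref{hierr}; your first option (reusing the separating instance of \cite{gao2022equivalence}) is the correct route.
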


We also establish the validity of \Cref{ttransformation}, which asserts that R$^2$-TGNN with \emph{graph transformation} maintains the same expressive power, whether it is applied directly to the temporal graph or to the equivalent collapsed static multi-relational graph


\begin{theorem}\label{ttransformation}
R$^2$-TGNN $\circ F^T=$  R$^2$-GNN $\circ F \circ H$
\end{theorem}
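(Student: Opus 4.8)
The plan is to prove the two inclusions $\text{R$^2$-TGNN} \circ F^T \subseteq \text{R$^2$-GNN} \circ F \circ H$ and $\text{R$^2$-GNN} \circ F \circ H \subseteq \text{R$^2$-TGNN} \circ F^T$ separately, by constructing, from a classifier on one side, an equivalent classifier on the other. The crucial structural observation is that the collapse function $H$ first temporalizes predicates (turning each $p\in P_2$ and $q\in P_1$ into $T$ fresh copies $p^t, q^t$) and then takes the union of all snapshots over a \emph{shared} node set. So $H(G)$ records \emph{all} temporal information losslessly in one static graph: an edge $(v_a, p^t, v_b)$ in $H(G)$ means exactly $(v_a,p,v_b)\in\mathcal E_t$. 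Applying $F$ afterwards (on the right side) splits each edge into the primal/aux gadget using $P_1'$ and $P_2'$ over the temporalized predicate set; on the left side $F^T$ applies the same gadget-transformation to each snapshot $G_t$ before temporalizing is implicitly handled by the one-hot features $I_{G_t}(v)$. The first book-keeping step is therefore to check that $F\circ H$ and ``temporalize-then-$F^T$-then-union'' produce isomorphic static graphs (up to renaming of the fresh aux/primal predicates per timestamp vs.\ a single shared copy), so that the two pipelines feed essentially the same static multi-relational graph into a static R$^2$-GNN.

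For the direction $\text{R$^2$-TGNN} \circ F^T \subseteq \text{R$^2$-GNN} \circ F \circ H$: given an R$^2$-TGNN $\{\mathcal A_t\}_{t=1}^T$, I would build a single static R$^2$-GNN $\mathcal M$ on $F(H(G))$ that simulates the sequential computation~\eqref{acrtgnn} ``in one shot''. The idea is to concatenate in the feature space a block of coordinates for each timestamp; the layers of $\mathcal M$ are grouped into $T$ consecutive phases, phase $t$ running the layers of $\mathcal A_t$ but reading and writing only the block for timestamp $t$, and using the temporalized predicates $p^t$ (which are available as distinct relations in $F(H(G))$, so the aggregation $A^{(i)}_j$ can restrict to the timestamp-$t$ neighbourhood by looking at the appropriate relation coordinates). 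The one-hot initialization $I_{G_t}(v)$ is recoverable inside $\mathcal M$ from the unary predicates $q^t$ present at $v$ in $H(G)$, and the ``carry'' $\mathbf x_v^{t-1}$ is just the block written by the previous phase. One subtlety: the aux-nodes introduced by $F$ at timestamp $t$ must not interfere with the primal computation at other timestamps; this is handled because $F$ marks primal nodes with the \primal{} predicate and the aux-gadget edges use only \iaux{}/\eaux{}, so each phase can be made to ignore coordinates/relations outside its scope. After phase $T$, apply the classification function $CLS$ to the timestamp-$T$ block.

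For the converse $\text{R$^2$-GNN} \circ F \circ H \subseteq \text{R$^2$-TGNN} \circ F^T$: here the difficulty is the reverse — an R$^2$-GNN on $F(H(G))$ can freely mix information across timestamps within a single layer (via global readout or via a node incident to relations $p^t$ and $p^{t'}$ for $t\neq t'$), whereas an R$^2$-TGNN processes snapshots strictly left-to-right and only passes a feature vector forward. The plan is to exploit that the sequence $\{\mathcal A_t\}$ is \emph{free} — we may choose each $\mathcal A_t$ arbitrarily and carry an unbounded-width feature vector between snapshots. So let $\mathcal A_t$ for $t<T$ be a trivial ``accumulator'' that, having received $\mathbf x_v^{t-1}$, simply appends to it a faithful encoding of the local structure of $F(G_t)$ around $v$ (which, over the run $t=1,\dots,T-1$, accumulates enough information to reconstruct all of $F(H(G))$ restricted to what a fixed-depth static R$^2$-GNN needs — formally, the multiset of rooted neighbourhood trees up to the required depth, or equivalently the relevant WL-type colour); then let $\mathcal A_T$ run the given static R$^2$-GNN on this reconstructed input. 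Making the ``append local structure'' step precise is where I expect the real work: R$^2$-GNN features are not literally graph structure, so one must argue that finitely many rounds suffice and that the information R$^2$-TGNN is allowed to carry (a vector per node, but of arbitrary dimension, with arbitrary combination functions) is rich enough — this is the analogue of the standard argument that message-passing with unbounded width and depth computes the unfolding tree.

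I expect the main obstacle to be exactly this second inclusion, specifically the argument that the left-to-right, bounded-per-step information flow of R$^2$-TGNN does not lose anything relative to the ``global'' static R$^2$-GNN on the collapsed graph. The key enabling facts are that (i) $T$ is finite and fixed, (ii) both frameworks allow arbitrary width and arbitrary (not necessarily continuous) combination/aggregation functions, and (iii) after $F$/$F^T$ the relevant relational information is local in the gadget graph. Given (i)--(iii), a finite-depth accumulation suffices and the simulation goes through; most of the write-up will be the careful but routine verification that the feature-blocking/phase construction in each direction indeed reproduces~\eqref{acrtgnn} and~\eqref{acrgnn} node-by-node and layer-by-layer, plus the $F\circ H$ vs.\ $F^T$-then-union bookkeeping from the first paragraph.
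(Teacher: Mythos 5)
Your high-level strategy (two inclusions, each by explicit simulation) is the same as the paper's, and your forward direction is fine: the paper dismisses R$^2$-TGNN $\circ F^T\subseteq$ R$^2$-GNN $\circ F\circ H$ as essentially trivial, and your phase-by-phase feature blocking with gating on the temporalized relation labels is the right way to make that precise. The genuine gap is in the backward inclusion, at exactly the step you flag as ``the real work'': the claim that accumulators $\mathcal A_1,\dots,\mathcal A_{T-1}$ can build up, at each node, the depth-$L$ rooted neighbourhood trees (WL colours) of $F(H(G))$. Read as ``combine per-snapshot local structures,'' this is false: knowing the unfolding tree of each $F(G_t)$ separately does not determine the unfolding tree of the collapsed graph, because the correspondence between occurrences of the same node across snapshots is lost (a neighbour that is red at $t=1$ and a neighbour that is tall at $t=2$ need not be the same neighbour), and a single root-to-leaf path in $F(H(G))$ may interleave edges from different timestamps --- this interleaving is precisely the obstruction that makes R$^2$-TGNN $\subsetneq$ R$^2$-GNN $\circ H$ in \Cref{hier1}. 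The claim can only be rescued by combining two facts you never invoke: features are carried forward (so the tree built at time $t$ has leaves labelled by entire time-$(t-1)$ trees), and the \iaux/\eaux\ connectivity of the gadget graph is the same in every snapshot, so the tree shapes agree across $t$ and the nested object determines the collapsed tree. Relatedly, your opening bookkeeping claim that $F\circ H$ and ``$F^T$ then union'' give isomorphic graphs is not right as stated (the auxiliary node set of $F(G_t)$ varies with $t$), and you never say how $\mathcal A_T$, which only sees the edges of $F(G_T)$, is supposed to pass messages along relation edges $p^t$ with $t<T$ that are absent from the last snapshot.

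The paper's construction sidesteps the tree-reconstruction argument entirely and is worth adopting. Its accumulation phase stores only depth-zero information, in a feature of dimension $T|P|+3$: each primal node records all of its temporalized unary facts, and each auxiliary node $e_{ab}$ records all temporalized binary facts holding between $a$ and $b$. It then appends $L$ fresh layers to $\mathcal A_T$ that replay the given static R$^2$-GNN layer by layer, exploiting that the \iaux\ edge between $e_{ab}$ and $a$ and the \eaux\ edge between $e_{ab}$ and $e_{ba}$ coincide with those of $F(H(G))$; every aggregation over a relation whose edges are missing from the last snapshot is rerouted through the always-present \eaux\ edge and gated by the stored facts (\Cref{sim}). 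If you want to keep your colour-accumulation route, you must prove the nested-tree-determines-collapsed-tree claim and handle the missing-edge issue explicitly; otherwise the store-facts-then-replay construction is both more elementary and complete.
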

\begin{figure}
    \centering
   
\begin{tikzpicture}[scale=0.80]
     
     \draw[draw=black] (-1,0.5) rectangle ++(12.5,-3);
		\node[fill=none,draw=none] (a) at (0,0) {R$^2$-TGNN};
		\node[fill=none,draw=none] (b) at (4,-2) {time-and-graph};
		\node[fill=none,draw=none] (c) at (4,0) {R$^2$-GNN$\circ H$};
		\node[fill=none,draw=none] (d) at (10,-0.3) {R$^2$-GNN$\circ F \circ H$ };
	    \node[fill=none,draw=none] (e) at (10,-0.9) { R$^2$-TGNN $\circ F^T$ };
	    \node[fill=none,draw=none] (f) at (10,-1.5) { time-then-graph};
	   	\draw[->] (a) to node[midway,above] {$\subsetneq$} (c);
	   	\draw[->] (c) to node[midway,above] {$\subsetneq$} (e);
	   	\draw[->] (b) to node[midway,below] {$\subsetneq$} (e);
     \draw[->] (a) to node[midway,above] {$\nsubseteq$} (b);
\end{tikzpicture}
\caption{Hierarchic expressiveness.\label{fig:hierarchy}}
\end{figure}
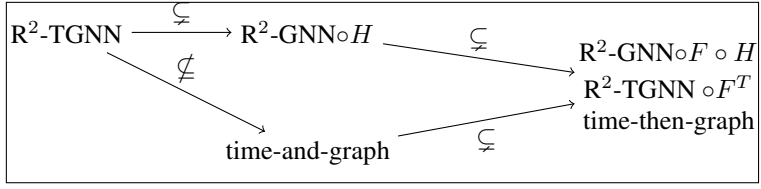
We also prove a strict inclusion that R$^2$-TGNN $\subsetneq$ R$^2$-TGNN$\circ H$. Finally we get the following hierarchy of these frameworks as in Figure~\ref{fig:hierarchy}. the proof of \Cref{hierarchymain} is in the appendix.

\begin{theorem}\label{hierarchymain}
The following hold:
\begin{compactitem}
\item R$^2$-GNN $\subsetneq$ R$^2$-GNN $\circ H$ $\subsetneq$ R$^2$-TGNN $\circ F\circ H$= R$^2$-TGNN $\circ F^T$= time-then-graph.

\item time-and-graph $\subsetneq$  R$^2$-TGNN $\circ F^T$. 

\item R$^2$-TGNN $\nsubseteq$ time-and-graph.
\end{compactitem}
\end{theorem}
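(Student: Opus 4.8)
\emph{Scaffolding.} The plan is to stack facts already available, adding only a handful of separating instances. The two equalities in the first item are free: \rrgnn $\circ F\circ H=$ R$^2$-TGNN $\circ F^T$ is precisely \Cref{ttransformation}, and R$^2$-TGNN $\circ F^T=$ time-then-graph is the equality inside \Cref{tgnnrelation}; similarly, the second item, time-and-graph $\subsetneq$ R$^2$-TGNN $\circ F^T$, is exactly \Cref{tgnnrelation}. So the real content is (i) the non-strict inclusions R$^2$-TGNN $\subseteq$ \rrgnn $\circ H$ $\subseteq$ \rrgnn $\circ F\circ H$, (ii) strictness of both of these, and (iii) R$^2$-TGNN $\nsubseteq$ time-and-graph.

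\emph{Non-strict inclusions.} For R$^2$-TGNN $\subseteq$ \rrgnn $\circ H$ I would give a uniform simulation on the collapsed graph $H(G)$: run one \rrgnn in $T$ consecutive \emph{phases}, phase $t$ attending only to the temporalized relations $p^t$ ($p\in P_2$) and temporalized unary predicates $q^t$ (which recover $I_{G_t}$); inside phase $t$ the layers copy those of $\mathcal A_t$, aggregations indexed by a relation with the ``wrong'' timestamp are made constant, one block of coordinates permanently keeps a copy of the initial feature so $I_{G_t}$ is available when phase $t$ starts, and since $V(H(G))=V$ the global readouts match those of $\mathcal A_t$. For \rrgnn $\circ H$ $\subseteq$ \rrgnn $\circ F\circ H$ it suffices to apply \Cref{theoremacracrf} to the universal graph class $\{H(G):G\text{ temporal}\}$ of ordinary multi-relational graphs: for every \rrgnn $\mathcal M$ it produces a uniform \rrgnn $\mathcal M'$ with $\mathcal M'(F(H(G)),v)=\mathcal M(H(G),v)$ for all such $H(G)$, and composing with $H$ gives the inclusion.

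\emph{Strict separations.} The middle inclusion \rrgnn $\circ H$ $\subsetneq$ \rrgnn $\circ F\circ H$ I would get by transferring the static separation behind \Cref{inclusion}: read $G_1,G_2$ of \Cref{graphcompare} as single-snapshot temporal graphs, so $H$ merely relabels predicates; the formula $\varphi$ there is captured by some \rrgnn $\circ F$ classifier via \Cref{theoremfocacrf}, hence by a \rrgnn $\circ F\circ H$ classifier, whereas no \rrgnn --- so no \rrgnn $\circ H$ classifier --- separates node $a$ of $G_1$ from node $a$ of $G_2$. The leftmost, R$^2$-TGNN $\subsetneq$ \rrgnn $\circ H$, is the genuinely temporal one. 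Consider $\mathcal C$: ``$v$ has a neighbour in snapshot $G_1$ that itself has a neighbour in snapshot $G_2$''; a $2$-layer \rrgnn on $H(G)$ realizes it (one layer to flag nodes with a $p^2$-neighbour, one layer to look along $p^1$-edges), so $\mathcal C$ lies in \rrgnn $\circ H$. But no R$^2$-TGNN realizes it: on the five-node two-snapshot graphs $G$ (with $G_1$ the edges $v$--$a$ and $c$--$d$, and $G_2$ the single edge $a$--$b$) and $G'$ (same $G_1$, but $G_2$ the single edge $c$--$b$), the step-$1$ feature vectors agree node-by-node because $G_1$ is unchanged, and the transposition $a\leftrightarrow c$ (fixing $v,b,d$) is an isomorphism from $G_2$ decorated by those features onto $G_2'$ decorated by the same features, and it fixes $v$; equivariance of each $\mathcal A_t$ then forces $\mathbf x_v^{2}$ to coincide on $(G,v)$ and $(G',v)$, although $\mathcal C(G,v)=true\neq false=\mathcal C(G',v)$. (Abstractly this reflects that R$^2$-TGNN can only detect walks into $v$ whose snapshot indices are non-decreasing.) Finally, for R$^2$-TGNN $\nsubseteq$ time-and-graph I reduce to $T=1$, where R$^2$-TGNN collapses to \rrgnn and time-and-graph collapses to the (local) message-passing model underlying the framework of \cite{gao2022equivalence}; the separation is then an instance of the standard gap between message passing with and without a global readout --- e.g.\ a classifier whose value depends on how many nodes carry a fixed unary predicate, realizable by \rrgnn but not by plain message passing. (Should one's formalization of time-and-graph already carry a global readout, a slightly more involved temporal instance is used, still exploiting that time-and-graph cannot post-process a snapshot beyond its recurrence.)

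\emph{Main obstacle.} The simulations and positive directions are routine bookkeeping; the weight sits in the two impossibility claims --- that no R$^2$-TGNN realizes $\mathcal C$, and that no time-and-graph model realizes the readout-sensitive classifier. For the first, the clean route is an indistinguishability invariant for R$^2$-TGNN --- a temporal unfolding-tree / bisimulation formalizing ``information flows only forward in time'', which both subsumes the ad hoc equivariance argument above and covers arbitrary temporal graph classes; for the second, one must first fix the definition of time-and-graph from \cite{gao2022equivalence} in our vocabulary and then run the matching Weisfeiler--Leman-style lower bound snapshot by snapshot. Everything else follows by composition from \Cref{ttransformation,tgnnrelation,theoremacracrf,theoremfocacrf,inclusion}.
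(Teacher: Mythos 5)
Your decomposition coincides with the paper's: the two equalities and the second item are quoted from \Cref{ttransformation} and \Cref{tgnnrelation} exactly as the paper does; the middle strictness \rrgnn $\circ H\subsetneq$ \rrgnn $\circ F\circ H$ is obtained, as in the paper, by pushing the \Cref{graphcompare} example through $H$ (the paper phrases this via \Cref{strictsep} and the temporalized classifier $\varphi_3$); and your separating classifier for R$^2$-TGNN $\subsetneq$ \rrgnn $\circ H$ --- a time-$1$ edge into $v$ followed by a time-$2$ edge one hop further out --- is the same one the paper uses in its \Cref{hier1} (nodes $1$ and $4$ of the five-node graph with $G_1=\{r(1,2),r(4,5)\}$ and $G_2=\{r(2,3)\}$). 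Your transposition/equivariance argument for why no R$^2$-TGNN realizes $\mathcal{C}$ is actually more explicit than the paper's, which merely asserts indistinguishability; that is an improvement rather than a divergence, and your phase-by-phase simulation for R$^2$-TGNN $\subseteq$ \rrgnn $\circ H$ just spells out what the paper calls trivial.

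The one genuine gap is the third item. Your primary route --- reduce to $T=1$ and separate via the global readout --- cannot work here: the paper's formalization of the GNN inside time-and-graph (\Cref{gaognn}) deliberately adds a global readout ``for comparison fairness,'' so at $T=1$ time-and-graph already subsumes \rrgnn and no readout-counting classifier separates them; consistently, the paper's \Cref{hierr} is stated only for $T>1$. Your parenthetical fallback names the right phenomenon but supplies no witness. The paper's witness is the cross-time chain $\varphi(x)=\exists^{\ge1}y\,\bigl(r_1^2(x,y)\wedge\exists^{\ge1}x\,r_1^1(y,x)\bigr)$ --- detectable by R$^2$-TGNN because the snapshot indices along the walk into $x$ are non-decreasing, in your terminology --- together with an argument (only sketched in the paper) that the time-and-graph recursion of \cite{gao2022equivalence} cannot assemble this chain. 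To close your proof you would need to exhibit such a temporal instance and run the indistinguishability argument against that recursion; the readout-gap instance you lead with would be rejected under the paper's definitions.
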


\section{Experiment}

We empirically verify our theoretical findings for \mg by evaluating and comparing the testing performance of \rrgnn with \emph{graph transformation} and less powerful GNNs~(\rgnn and \rrgnn). We did two groups of experiments 
on synthetic datasets and real-world datasets, respectively. Details for datasets generation and statistical information as well as hyper-parameters can be found in the Appendix.

\subsection{Synthetic Datasets}
We first define three simple $\foc$ classifiers
\begin{align*}
    \mathbf{\varphi_1}&\coloneqq  \exists^{\ge2} y(p_1^1(x,y)\wedge Red^1(y))\wedge \exists^{\ge1} y(p_1^2(x,y)\wedge Blue^2(y)) \\
    \mathbf{\varphi_2}&\coloneqq  \exists^{[10,20]}y (\neg p_1^2(x,y)\wedge\varphi_1(y)) ~~~~~~~~~~
    \mathbf{\varphi_3} \coloneqq \exists^{\ge 2}y(p_1^1(x,y)\wedge p_1^2(x,y))
\end{align*}
Besides, we define another complicate $\foc$ classifier denoted as $\varphi_4$ shown as follows:

\begin{multline*}\label{classifier4}
\mathbf{\varphi_4} \coloneqq\bigvee_{3\leq t\leq 10}(\exists^{\ge 2}y(Black^t(y)\wedge Red^{t-1}(y)\wedge Blue^{t-2}(y)\wedge p_1^t(x,y)\wedge p_2^{t-1}(x,y)\wedge p_3^{t-2}(x,y)\wedge\varphi^t(y)) \\
\text{~~~~~~~~~~~~~where~~~~}
\varphi^t(y)\coloneqq \exists^{\ge2} x(p_1^t(x,y)\wedge Red^t(x))\wedge \exists^{\ge1} x(p_2^{t-1}(x,y)\wedge Blue^{t-2}(x))
\end{multline*}

For each of them, we generate an independent dataset containing 7k \mg of size up to 50-1000 nodes for training and 500 \mg of size similar to the train set. We tried different configurations for the aggregation functions and evaluated the node classification performances of three temporal GNN methods~(R-TGNNs, R$^2$-TGNNs and R$^2$-TGNNs $\circ F^T$) on these datasets. 
%
%


We verify our hypothesis empirically according to models' actual performances of fitting these three classifiers. Theoretically, $\varphi_1$ should be captured by all three models because the classification result of a node is decided by the information of its neighbor nodes, which can be accomplished by the general neighborhood based aggregation mechanism. $\varphi_2$ should not be captured by R-TGNN because the use of $\neg p_1^2(x,y)$ as a guard means that the classification result of a node depends on the global information including those isolated nodes, which needs a global readout. For $\varphi_3$ and $\varphi_4$, they should only be captured by R$^2$-TGNNs $\circ F^T$. An intuitive explanation for this argument is that if we \emph{temporalise predicates} and then collapse the temporal graph into its equivalent static multi-relational graph using $H$, we will encounter the same issue as in the Figure~\ref{graphcompare}. Thus we can't distinguish expected nodes without \emph{graph transformation}. 





 \begin{table}[]
     \centering
     \small 
     \setlength\tabcolsep{4pt}
     \renewcommand{\arraystretch}{1.1}
     \begin{tabular}{l|ccc|ccc|ccc|ccc}
     \hline
       $\foc$ classifier & \multicolumn{3}{c|}{ $\mathbf{\varphi_1}$ } & \multicolumn{3}{c|}{ $\mathbf{\varphi_2}$} &  \multicolumn{3}{c|}{ $\mathbf{\varphi_3}$ } & 
      \multicolumn{3}{c}{ $\mathbf{\varphi_4}$} 
      \\
      Aggregation & sum & max & mean& sum & max & mean& sum & max & mean& sum & max & mean \\ 
      \hline 
      \multicolumn{13}{c}{Temporal Graphs Setting} \\
\hline 
      R-TGNN &  100 & 60.7 & 65.4 & 61.0 & 51.3 & 52.4 & 93.7 & 82.3 & 84.4 & 83.5 & 60.0 & 61.3\\ 
      R$^2$-TGNN & 100 & 63.5 & 66.8 & 93.1 & 57.7 & 60.2 & 94.5 & 83.3 & 85.9 & 85.0 & 62.3 & 66.2\\
      R$^2$-TGNN $\circ F^T$ & \textbf{100} & 67.2 & 68.1 & \textbf{99.0} & 57.6 & 62.2 & \textbf{100} & 88.8 & 89.2 & \textbf{98.1} & 73.4 & 77.5\\
      \hline 
       \multicolumn{13}{c}{Aggregated Static Graphs Setting} \\
      \hline 
     \rgnn $\circ H $ & 100 & 61.2 & 69.9 & 62.3 & 51.3 & 55.5 & 94.7 & 80.5 & 83.2 & 80.2 & 60.1 & 60.4 \\ 
      \rrgnn $\circ H $ & 100 & 62.7 & 66.8 & 92.4 & 56.3 & 58.5 & 95.5 & 84.2 & 85.2 & 81.0 & 58.3 & 64.5 \\
      \rrgnn $\circ F \circ H$ & \textbf{100} & 70.2 & 70.8 & \textbf{98.8} & 60.6 & 60.2 & 
     \textbf{100} & 85.6 & 86.5 7 &  \textbf{95.5} & 70.3 & 79.7 \\
     \hline 
     \end{tabular}
     \caption{Test set node classification accuracies~(\%) on synthetic temporal \mg datasets and their aggregated static \mg  datasets. 
      The best results are highlighted for two different settings.}
     \label{tab:moresynthetic}
 \end{table}
Results for temporal GNN methods and static GNN methods on four synthetic datasets can be found in Table~\ref{tab:moresynthetic}. We can see that 
%
\rrgnn  with \emph{graph transformation} achieves the best performance. Our theoretical findings show that it is a more expressive model, and the experiments indeed suggest that the model can exploit this theoretical expressiveness advantage to produce better results. Besides, we can also see that R$^2$-TGNN $\circ F\circ H$ and R$^2$-TGNN $\circ F^T$ achieve almost the same performance, which is in line with \Cref{ttransformation}.
\subsection{Real-world Datasets}

 \begin{table}
     \centering
     \setlength\tabcolsep{3.5pt}
     \renewcommand{\arraystretch}{1.1}
     \captionof{table}{Results on temporal graphs.}
     \label{temporalreal}
     \begin{tabular}{lccccc}
     \toprule 
      \multirow{2}{*}{ Models} &  \multirow{2}{*}{ Category}  &  \multirow{2}{*}{ Source } &  \multicolumn{3}{c}{Brain-10}
      \\
        &  &   & sum & max & mean\\ 
      \hline 
       GCRN-M2 & time-and-graph  &  \citet{seo2018structured} &  77.0 & 61.2 & 73.1 \\
       DCRNN & time-and-graph  & \citet{DBLP:conf/iclr/LiYS018}  &  84.0 & 70.1 & 66.5 \\
       TGAT & time-then-graph & \citet{DBLP:conf/iclr/XuRKKA20}& 80.0 & 72.3 & 79.0 \\
       TGN & time-then-graph &  \citet{rossi2020temporal} & 91.2 & \textbf{88.5} & 89.2 \\
       GRU-GCN & time-then-graph & \citet{gao2022equivalence}  &  91.6 & 88.2 & 87.1 \\
     \midrule 
     R-TGNN & --  & --  &   85.0 &82.3&82.8\\ 
      R$^2$-TGNN & --  & --  & \textbf{94.8}& 82.3& 91.0\\
      R$^2$-TGNN $\circ F^T$ & --  & --  & 94.0 &83.5 & \textbf{92.5}\\
      \bottomrule
      \end{tabular}
\end{table}
\begin{wraptable}{r}{9cm}
\centering
     \captionof{table}{Results on two static \mg.}
          \label{staticreal}
     \begin{tabular}{l|ccc|ccc}
     \toprule 
       Models &  \multicolumn{3}{c|}{AIFB} &  \multicolumn{3}{c}{MUTAG}
      \\
        &  sum & max & mean& sum & max & mean\\ 

      \midrule 
    
      \rgnn  &     \textbf{91.7}&73.8&82.5 &  \textbf{76.5}&63.3&73.2 \\ 
      \rrgnn &     \textbf{91.7}&73.8&82.5 &  \textbf{85.3}&62.1&79.5 \\
      \rrgnn $\circ F$ &     \textbf{97.2}&75.0&89.2 &  \textbf{88.2}&65.5&82.1 \\
      R-GCN&     \textbf{95.8}&77.9&86.3 &  \textbf{73.2}&65.7&72.1 \\
      \toprule 
      \end{tabular}
 \end{wraptable}
For real-world static \mg benchmarks, we used AIFB and MUTAG from \cite{ristoski2016rdf2vec}. Since open source datasets for the node classification on temporal knowledge graphs are rare, we only tried one dataset Brain-10 \cite{gao2022equivalence} for temporal settings.\footnote{The other three temporal dataset mentioned in \cite{gao2022equivalence} are not released.} 
%

For static \mg, we compare the performances of our methods with RGCN~\cite{schlichtkrull2018modeling}. 
Note that RGCN assigns each node an index and the initial embedding of each node is initialised based on the node index, so the initialisation functional is not permutation-equivariant~\cite{chen2019equivalence} and RGCN cannot be used to perform an isomorphism test. 
However, from Table~\ref{staticreal}, we can see that \rrgnn with \emph{graph transformation} still achieves the highest accuracy while being able to be used for the graph isomorphism test. Besides, \rrgnn $\circ F$ also performs better compared with both \rgnn and \rrgnn. This again suggests that the extra expressive power gained by adding a \emph{graph transformation} step to \rrgnn can be exploited by the model to obtain better results.

For temporal graphs, \citet{gao2022equivalence} have classified existing temporal models into two categories, \emph{time-and-graph} and \emph{time-then-graph}, and shown that \emph{time-then-graph} models have better performance. We choose five models mentioned in \citet{gao2022equivalence} as our baseline and include the best accuracy of the dataset Brain-10 reported in \cite{gao2022equivalence}.
%
As we expected, R$^2$-TGNNand R$^2$-TGNN $\circ F^T$ achieve better performance than that of the baseline models and R-TGNN accoring to Table~\ref{temporalreal}. However, we observed that although in theory, R$^2$-TGNN $\circ F^T$ has stronger expressive power than R$^2$-TGNN, we did not see an improvement when using R$^2$-TGNN $\circ F^T$~($0.8\%$ accuracy drop). To some extent, it may show that some commonly used benchmarks are inadequate for testing advanced GNN variants. Similar phenomena have also been observed in previous works~\cite{chen2019powerful,Barcelo2020The}.

\section{Conclusion}

We analyze expressivity of R$^2$-GNNs with and without \emph{graph transformation} in \mg under different situations. Furthermore, we extend our theoretical findings to the temporal graph setting. Our experimental results confirm our theoretical insights, particularly demonstrating the state-of-the-art performance achieved by our \emph{graph transformation} technique.
\section{Acknowledgements}
The authors extend their gratitude to Bernardo Cuenca Grau and David Tena Cucala for their valuable insights, stimulating discussions, and support.

\bibliographystyle{unsrtnat}
\bibliography{reference}

\newpage
\appendix 
\addcontentsline{toc}{section}{\protect\large Appendix} 
\part{Appendix} 
\parttoc 
\newpage
\section{Preliminaries for Proofs}

In this section, we give some preliminaries which will be used to prove the theorems, propositions and lemmas shown in our main body. In what follows, we fix a unary predicate set $P_1$ and a binary predicate set $P_2$. 

\begin{definition}\label{01acrgnn}
For an R$^2$-GNN, we say it is a  0/1-GNN if the recursive formula used to compute vectors $\mathbf{x}_v^{(i)}$ for each node $v$ in a multi-relational graph $G=\{V, \mathcal{E}, P_1, P_2\}$ on each layer $i$ is in the following form 


\begin{equation}
\label{01acrgnn}
    \feature_v^{(i)} = f \left( C^{(i)} \left ( \mathbf{x}_v^{(i-1)} +  \sum_{r\in P_2} \sum_{u\in V} A_r^{(i)} \mathbf{x}_{u}^{(i-1)} + R^{(i)} \left( \sum_{u\in V}\feature_u^{(i-1)} \right)  + b^{(i)} \right) \right)
\end{equation}


where $C^{(i)},A^{(i)}_j,R^{(i)}$ are all integer matrices of size $d_{i}\times d_{i-1}$, $b^{(i)}$ is bias column vector with size $d_i\times 1$, where $d_{i-1}$ and $d_i$ are input/output dimensions, and $f$ is defined as $max(0,min(x,1))$.

Furthermore, we restrict the final output dimension be $d_L=1$. Since all matrices have integer elements, initial vectors are integer vectors by initialisation function $I(\cdot)$ (\Cref{nodeenc}), and $max(0,min(x,1))$ will map all integers to $0/1$, it's easy to see that the output of this kind of model is always $0/1$, which can be directly used as the classification result. We call such model 0/1-GNN. A model instance can be represented by $\{C^{(i)},(A^{(i)}_j)_{j=1}^K,R^{(i)},b^{(i)}\}_{i=1}^L$, where $K=|P_2|$
\end{definition}

\begin{lemma}\label{closure}
Regard 0/1-GNN as node classifier, then the set of node classifiers represented by 0/1-GNN is closed under $\wedge,\vee,\neg$.
\end{lemma}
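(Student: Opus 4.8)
The plan is to establish closure under the three Boolean operations by explicit construction, exploiting the fact that a 0/1-GNN always outputs a value in $\{0,1\}$ and that the clamping nonlinearity $f(x)=\max(0,\min(x,1))$ behaves like a truncated linear threshold. The easiest case is negation: given a 0/1-GNN $\mathcal{M}$ computing a classifier $c$, I would run $\mathcal{M}$ unchanged for its $L$ layers and append one extra layer that reads the final scalar $x_v^{(L)}\in\{0,1\}$ and outputs $1-x_v^{(L)}$; since $1-x$ is an affine map with integer coefficients and $f(1-x)=1-x$ on $\{0,1\}$, this is again a legal 0/1-GNN. The only bookkeeping needed is that all other coordinates/layers of the new model should simply propagate or ignore data as appropriate, and that the bias vector absorbs the constant $1$.

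For $\wedge$ and $\vee$, the key idea is to run the two given 0/1-GNNs ``in parallel'' by stacking their feature vectors into a block-diagonal architecture. Concretely, given $\mathcal{M}_1,\mathcal{M}_2$ of depths $L_1,L_2$ computing $c_1,c_2$, I would first pad the shorter one with identity-like layers so that both have the same depth $L=\max(L_1,L_2)$ — a layer that copies a $0/1$ value through is realized by $C^{(i)}=\mathrm{Id}$, all $A_j^{(i)}=R^{(i)}=0$, $b^{(i)}=0$, and again $f$ is the identity on $\{0,1\}$. Then I form the direct-sum model whose matrices $C^{(i)}, A_j^{(i)}, R^{(i)}$ are the block diagonals of the corresponding matrices of $\mathcal{M}_1$ and $\mathcal{M}_2$, and whose biases are stacked; because the initialization $I(\cdot)$ feeds the same input into both blocks and the blocks never interact, node $v$'s feature after layer $i$ is exactly the concatenation of its features in $\mathcal{M}_1$ and $\mathcal{M}_2$. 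After $L$ layers the final block outputs the pair $(c_1(G,v),c_2(G,v))\in\{0,1\}^2$. One last layer then combines them: for $\wedge$ use $x+y-1$ composed with $f$, which clamps to $1$ iff $x=y=1$; for $\vee$ use $x+y$ composed with $f$, which clamps to $1$ iff at least one of $x,y$ is $1$. Both are integer-affine maps and hence admissible, so the resulting object is a genuine 0/1-GNN computing $c_1\wedge c_2$ (resp. $c_1\vee c_2$).

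The main obstacle — really the only subtlety — is making sure the ``padding with identity layers'' and the ``block-diagonal composition'' are genuinely within the restricted 0/1-GNN format of \Cref{01acrgnn}, in particular that the nonlinearity $f$ does not corrupt intermediate values: this works precisely because every quantity we ever carry between layers lies in $\{0,1\}$ (it is the output of a previous 0/1-GNN or a copied input bit), and $f$ fixes $\{0,1\}$ pointwise, so inserting $f$ after an affine copy or after a block-diagonal affine map changes nothing. I would state this invariant once, verify it is preserved by each construction, and then the three closure properties follow. A minor point to handle carefully is dimensions: the concatenated model's layer-$i$ matrices must have the right shapes $d_i\times d_{i-1}$, which is automatic for block-diagonal sums, and the final combining layer has output dimension $d_L=1$ as required.
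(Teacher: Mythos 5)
Your proposal is correct and follows essentially the same route as the paper: negation by appending a layer computing the clamped affine map $1-x$, and conjunction by running the two models in parallel via a block-diagonal construction followed by a final layer computing the clamp of $x+y-1$. The only cosmetic differences are that you realize $\vee$ directly with $x+y$ clamped (the paper instead invokes De Morgan's law) and that you make explicit the identity-layer padding that the paper hides behind a ``without loss of generality''; both variants are valid, and your stated invariant that all inter-layer features stay in $\{0,1\}$ is exactly the fact the paper relies on.
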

\begin{proof}
Given two 0/1-GNN $\mathcal{A}_1$,$\mathcal{A}_2$, it suffices to show that we can construct $\neg \mathcal{A}_1$ and $\mathcal{A}_1\wedge \mathcal{A}_2$ in 0/1-GNN framework. That's because construction of $ \mathcal{A}_1\vee\mathcal{A}_2$ can be reduced to constructions of $\wedge,\neg$ by De Morgan’s law, e.g., $a\vee b=\neg(\neg a\wedge \neg b)$.

1. Construct $\neg \mathcal{A}_1$. Append a new layer to $\mathcal{A}_1$ with dimension $d_{L+1}=1$. For matrices and bias $C^{(L+1)},(A^{(L+1)}_j)_{j=1}^K,R^{(L+1)},b^{(L+1)}$ in layer $L+1$, set $C^{L+1}_{1,1}=-1$ and $b^{L+1}_1=1$ and other parameters $0$. Then it follows $\feature_v^{(L+1)}=max(0,min(-\feature_v^{(L)}+1,1))$. Since $\feature_v^{(L)}$ is the 0/1 classification result outputted by $\mathcal{A}_1$. It's easy to see that the above equation is exactly $\feature_v^{(L+1)}=\neg \feature_v^{(L)}$

2. Construct $\mathcal{A}_1\wedge \mathcal{A}_2$. Without loss of generality, we can assume two models have same layer number $L$ and same feature dimension $d_l$ in each layer $l\in\{1......L\}$. Then, we can construct a new 0/1-GNN $\mathcal{A}$. $\mathcal{A}$ has $L+1$ layers. For each of the first $L$ layers, say $l$-th layer, it has feature dimension $2d_l$. Let $\{C^{(l)}_1,(A^{(l)}_{j,1})_{j=1}^K,R^{(l)}_1,b^{(l)}_1\},\{C^{(l)}_2,(A^{(l)}_{j,2})_{j=1}^K,R^{(l)}_2,b^{(l)}_2\}$ be parameters in layer $l$ of $\mathcal{A}_1,\mathcal{A}_2$ respectively. Parameters for layer $l$ of $\mathcal{A}$ are defined below
\begin{equation}\label{parconcat}
\textbf{C}^{(l)}:=\begin{bmatrix} \textbf{C}^{(l)}_1& \\ & \textbf{C}^{(l)}_2\end{bmatrix}
\textbf{A}_j^{(l)}:=\begin{bmatrix} \textbf{A}_{j,1}^{(l)}& \\ & \textbf{A}_{j,2}^{(l)}\end{bmatrix}
\textbf{R}^{(l)}:=\begin{bmatrix} \textbf{R}^{(l)}_1& \\ & \textbf{R}^{(l)}_2\end{bmatrix}
\textbf{b}^{(l)}:=\begin{bmatrix} \textbf{b}_1^{(l)}\\ \textbf{b}_2^{(l)}\end{bmatrix}
\end{equation}
Initialization function of $\mathcal{A}$ is concatenation of initial feature of $\mathcal{A}_1,\mathcal{A}_2$. Then it's easy to see that the feature $\textbf{x}^{L}_v$ after running first $L$ layers of $\mathcal{A}$ is a two dimension vector, and the two dimensions contains two values representing the classification results outputted by $\mathcal{A}_1,\mathcal{A}_2$ respectively.

For the last layer $L+1$, it has only one output dimension. We just set  $\textbf{C}^{L+1}_{1,1}=\textbf{C}^{L+1}_{1,2}=1,\textbf{b}^{L+1}_1=-1$ and all other parameters $0$. Then it's equivalent to $\feature_v^{(L+1)}=max(0,min(\feature_{v,1}^{(L)}+\feature_{v,2}^{(L)}-1,1))$ where $\feature_{v,1}^{(L)},\feature_{v,2}^{(L)}$ are output of $\mathcal{A}_1,\mathcal{A}_2$ respectively. It's easy to see that the above equation is equivalent to $\feature_v^{(L+1)}=\feature_{v,1}^{(L)}\wedge\feature_{v,2}^{(L)}$ so the $\mathcal{A}$ constructed in this way is exactly $\mathcal{A}_1\wedge\mathcal{A}_2$
\end{proof}

\begin{definition}\label{foc2}
A $\foc$  formula is defined inductively according to the following grammar:
\begin{equation}\label{focdef1}
A(x), r(x,y), \varphi_1\wedge\varphi_2,  \varphi_1\vee\varphi_2,  \neg\varphi_1,\exists^{\ge n}y(\varphi_1(x,y)) \text{ where } A\in P_1 \text{ and } r\in P_2
\end{equation}
\end{definition}

\begin{definition}\label{rsfoc}
For any subset $S \subseteq P_2$, let $\varphi_S(x,y)$ denote the $\foc$ formula $(\bigwedge_{r\in S}r(x,y))\wedge(\bigwedge_{r\in P_2\setminus S}\neg r(x,y))$. Note that $\varphi_S(x,y)$ means there is a relation $r$ between $x$ and $y$ if and only if $r\in S$, so $\varphi_S(x,y)$ can be seen as a formula to restrict specific relation distribution between two nodes. $\rsfoc$ is inductively defined according to the following grammar:
\begin{equation}\label{rsfocdef1}
A(x), \varphi_1 \wedge\varphi_2,\ \ \varphi_1\vee\varphi_2,\ \ \neg\varphi_1,
\exists^{\ge n}y\bigg(\varphi_S(x,y)\wedge\varphi_1(y)\bigg) \text{ where } A\in P_1 \text{ and } S\subseteq P_2
\end{equation}
\end{definition}

Next, we prove that $\foc$ and $\rsfoc$ have the same expressiveness, namely, each $\foc$ node classifier can be rewritten in the form $\rsfoc$. 

\begin{lemma}\label{rsfoceqfoc}
$\foc=\rsfoc$.
\end{lemma}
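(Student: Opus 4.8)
\textbf{Proof plan for Lemma~\ref{rsfoceqfoc} ($\foc = \rsfoc$).}
The plan is to prove the two inclusions separately. The inclusion $\rsfoc \subseteq \foc$ is essentially immediate: every $\rsfoc$ formula is syntactically a $\foc$ formula once we unfold the abbreviation $\varphi_S(x,y) = \big(\bigwedge_{r\in S} r(x,y)\big)\wedge\big(\bigwedge_{r\in P_2\setminus S}\neg r(x,y)\big)$, since $P_2$ is finite and the counting quantifier $\exists^{\ge n} y(\varphi_S(x,y)\wedge\varphi_1(y))$ is a legal $\foc$ formula with only the two variables $x,y$. So the real content is $\foc \subseteq \rsfoc$, which I would prove by structural induction on the $\foc$ formula $\varphi(x)$ (more precisely, on $\foc$ formulas with at most one free variable, strengthening the statement to handle formulas with free variable $y$ symmetrically by renaming).

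For the induction, the atomic case $A(x)$ is already in $\rsfoc$. The Boolean cases $\varphi_1\wedge\varphi_2$, $\varphi_1\vee\varphi_2$, $\neg\varphi_1$ follow directly from the induction hypothesis, since $\rsfoc$ is closed under $\wedge,\vee,\neg$ by definition. The only interesting case is the counting quantifier $\exists^{\ge n} y\,\psi$, where $\psi = \psi(x,y)$ is a $\foc$ formula in the two variables $x,y$. The key step is to put $\psi(x,y)$ into a normal form that exposes, as a disjunction over the $2^{|P_2|}$ possible ``relation patterns'' $S\subseteq P_2$ between $x$ and $y$, a part depending only on the pattern and a part that is a unary formula in $y$. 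Concretely, I would first observe that every atomic subformula of $\psi$ is one of $r(x,y)$, $r(y,x)$ (handled via the inverse predicates assumed in $P_2$, so WLOG only $r(x,y)$), $A(x)$, $A(y)$; then, treating the atoms $\{r(x,y) : r\in P_2\}$ as Boolean variables, write $\psi(x,y)$ in disjunctive normal form over these atoms, i.e.
\[
\psi(x,y) \;\equiv\; \bigvee_{S\subseteq P_2}\Big(\varphi_S(x,y)\wedge \alpha_S(x,y)\Big),
\]
where each $\alpha_S$ contains no binary atoms at all and hence is a Boolean combination of unary formulas $A(x)$ and $A(y)$ and (possibly) nested quantified subformulas. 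Pull the $x$-only parts out: $\alpha_S(x,y) \equiv \bigvee_k \big(\beta_{S,k}(x)\wedge\gamma_{S,k}(y)\big)$ by another DNF/distribution step over the ``side'' of each conjunct, where $\beta_{S,k}$ depends only on $x$ and $\gamma_{S,k}$ only on $y$ (nested quantifiers inside $\gamma_{S,k}$ are over the bound variable and, by the induction hypothesis applied to those strictly smaller subformulas, can themselves be taken in $\rsfoc$). Since the $\varphi_S$ are mutually exclusive, for a fixed node $x=v$ the sets $\{u : \varphi_S(v,u)\wedge\alpha_S(v,u)\}$ partition over $S$, so
\[
\exists^{\ge n} y\,\psi(x,y) \;\equiv\; \bigvee_{\substack{(n_S)_{S}\,:\,\sum_S n_S = n}} \ \bigwedge_{S} \exists^{\ge n_S} y\big(\varphi_S(x,y)\wedge\alpha_S(x,y)\big),
\]
and within each $\exists^{\ge n_S}y(\varphi_S(x,y)\wedge\alpha_S(x,y))$ the $x$-only conjuncts $\beta_{S,k}(x)$ can be factored out of the quantifier as a separate conjunct at the top level (since they do not mention $y$), leaving an inner matrix of the form $\varphi_S(x,y)\wedge\gamma(y)$ with $\gamma$ a unary $\rsfoc$ formula — exactly the $\rsfoc$ quantifier shape. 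Assembling these Boolean combinations keeps us in $\rsfoc$.

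The main obstacle I expect is bookkeeping rather than a conceptual gap: one has to be careful that (i) the DNF manipulations genuinely isolate the binary atoms $r(x,y)$ from everything else, which requires noting that the only way $x$ and $y$ interact in a two-variable formula is through these binary atoms (subformulas of the form $\exists^{\ge m} z(\cdots)$ bind one of the two variables, so after renaming they are unary in the remaining free variable); (ii) the ``splitting a constant $n$ into $\sum_S n_S = n$'' rewriting of $\exists^{\ge n}$ over a finite partition into $2^{|P_2|}$ classes is valid — this is a standard fact about counting quantifiers and mutually exclusive guards, which I would state as a small sub-claim and verify directly; and (iii) the induction is well-founded, i.e.\ the $\rsfoc$ formulas substituted for nested subformulas are strictly smaller in the structural order, so that the recursion terminates. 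None of these steps involves heavy computation; the proof is a careful normal-form argument, and I would present the quantifier case in full and relegate the routine Boolean-closure cases to a sentence each.
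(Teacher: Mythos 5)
Your plan is essentially the paper's own proof: both directions are handled the same way, and the substantive inclusion $\foc\subseteq\rsfoc$ is proved by the same normal-form argument — isolate the binary atoms into the mutually exclusive guards $\varphi_S(x,y)$, separate $x$-only from $y$-only subformulas, and split $\exists^{\ge n}$ over the partition via all decompositions $\sum_S n_S = n$. The one step you state too loosely is "factoring the $\beta_{S,k}(x)$ out of the quantifier as a conjunct": since $\alpha_S(x,y)$ is a \emph{disjunction} of terms $\beta_{S,k}(x)\wedge\gamma_{S,k}(y)$, you cannot pull the $\beta$'s out directly but must first enumerate the possible truth assignments to all $x$-only subformulas (the paper's "$x$-specification" formulas $\varphi^x_T$), under each of which $\alpha_S$ collapses to a unary formula in $y$ — a small fix that fits within the bookkeeping you already anticipate.
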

\begin{proof}
Comparing the definitions of $\rsfoc$ and $\foc$, it is obvious that  $\rsfoc\subseteq\foc$ trivially holds, so we only need to prove the other direction, namely,  $\foc\subseteq \rsfoc$. In particular, a Boolean logical classifier only contains one free variable, we only need to prove that for any one-free-variable $\foc$ formula $\varphi(x)$, we can construct an equivalent $\rsfoc$ formula $\psi(x)$.

We prove \Cref{rsfoceqfoc} by induction over $k$, where $k$ is the quantifier depth of $\varphi(x)$.

In the base case where $k=0$, $\varphi(x)$ is just the result of applying conjunction, disjunction or negation to a bunch of unary predicates $A(x)$, where $A \in P_1$. Given that the grammar of generating $\varphi(x)$ is the same in $\rsfoc$ and $\foc$ when $k=0$, so the lemma holds for $k=0$.

For the indutive step, we assume that \Cref{rsfoceqfoc} holds for all $\rsfoc$ formula with quantifier depth no more than $m$, we next need to consider the case when $k=m+1$.

We can decompose $\varphi(x)$ to be boolean combination of a bunch of $\foc$ formulas $\varphi_1(x),\dots,\varphi_N(x)$, each of which is in the form $\varphi_i(x):=A(x) \text{ where } A \in P_1$ or $\varphi_i(x):=\exists^{\ge n}y(\varphi'(x,y))$. See the following example for reference.
\begin{example}
Assume $\varphi(x):=\bigl(A_1(x)\wedge\exists y(r_1(x,y))\bigl)\vee\bigl(\exists y\bigl(A_2(y)\wedge r_2(x,y)\bigl)\wedge \exists y(r_3(x,y))\bigl)$. It can be decomposed into boolean combination of four subformulas  shown as follows:
\begin{compactitem}
    \item $\varphi_1(x)=A_1(x)$
    \item $\varphi_2(x)=\exists y(r_1(x,y))$
    \item $\varphi_3(x)=\exists y\bigl(A_2(y)\wedge r_2(x,y)\bigl)$
    \item $\varphi_4(x)=\exists y(r_3(x,y))$
\end{compactitem}
\end{example}

 We can see that grammars of $\foc$ and $\rsfoc$ have a common part: $A(x), \varphi_1\wedge\varphi_2,  \varphi_1\vee\varphi_2,  \neg\varphi_1$, so we can only focus on those subformulas $\varphi_i(x)$ in the form of $\exists^{\ge n}y\varphi'(x,y)$. In other words, if we can rewrite these $\foc$ subformulas into another form satisfying the grammar of $\rsfoc$, we can naturally construct the desired $\rsfoc$ formula $\psi(x)$ equivalent to $\foc$ formula $\varphi(x)$. 
 
 Without loss of generality, in what follows, we  consider the construction for $\varphi(x)=\exists^{\ge n}y(\varphi'(x,y))$. Note that $\varphi(x)$ has quantifier depth no more than $m+1$, and $\varphi'(x,y)$ has quantifier depth no more than $m$.

We can decompose $\varphi'(x,y)$ into three sets of subformulas
$\{\varphi^x_i(x)\}_{i=1}^{N_x},\{\varphi^y_i(y)\}_{i=1}^{N_y},\{r_i(x,y)\}_{i=1}^{|P_2|}$, where $N_x$ and $N_y$ are two natural numbers,  $\varphi^x_i,\varphi^y_i$ are its maximal subformulas whose free variable is assigned to $x$ and $y$, respectively. $\varphi'(x)$ is the combination of these sets of subformulas using $\wedge,\vee,\neg$.

\begin{example}
Assume that  we have a $\foc$ formula in the form of $\varphi'(x,y)=\Bigl(r_1(x,y)\wedge\exists x(r_2(x,y))\Bigl) \vee \Bigl(\exists y\bigl(\exists x (r_3(x,y)) \vee \exists y (r_1(x,y)) \bigl) \wedge \exists y\bigl(A_2(y) \wedge r_2(x,y)\bigl)\Bigl)$

It can be decomposed into the following subformulas:
\begin{compactitem}
    \item $\varphi_1^x(x):=\exists y\bigl(\exists x (r_3(x,y))\vee\exists y (r_1(x,y))\bigl)$;
    \item $\varphi_2^x(x):=\exists y\bigl(A_2(y)\wedge r_2(x,y)\bigl)$;
    \item $\varphi_1^y(y):=\exists x(r_2(x,y))$;
    \item $r_1(x,y)$
\end{compactitem}
\end{example}

Assume that $N:=\{1, \dots, N_x\}$, we construct a $\rsfoc$ formula 
$\varphi^x_{T}(x):=(\bigwedge_{i\in T}\varphi_i^x(x))\wedge(\bigwedge_{i\in N\setminus T}\neg \varphi^x_i(x))$, where $T\subseteq N$. It is called the \emph{x-specification} formula, which means $\varphi^x_T(x)$ is \emph{true} iff the following condition holds: for all 
$i\in T$, $\varphi^x_i(x)$ is \emph{true} and for all $i \in N\setminus T$, $\varphi^x_i(x)$ is \emph{false}.

By decomposing $\varphi'(x,y)$ into three subformula sets, we know Boolean value of $\varphi'(x,y)$ can be decided by Boolean values of these formulas $\{\varphi^x_i(x)\}_{i=1}^{N_x},\{\varphi^y_i(y)\}_{i=1}^{N_y},\{r_i(x,y)\}_{i=1}^{|P_2|}$. Now for any two specific subsets $S\subseteq P_2, T\subset N$, we assume $\varphi_S(x,y)$ and $\varphi^x_T(x)$ are all \emph{true} (Recall the definition of $\varphi_S(x,y)$ in \Cref{rsfoc}). Then Boolean values for formulas in $\{\varphi^x_i(x)\}_{i=1}^{N_x},\{r_i(x,y)\}_{i=1}^{|P_2|}$ are determined and Boolean value of $\varphi'(x,y)$ depends only on Boolean values of $\{\varphi^y_i(y)\}_{i=1}^{N_y}$. Therefore, we can write a  new $\foc$ formula $\varphi^y_{S,T}(y)$ which is a boolean combination of $\{\varphi^y_i(y)\}_{i=1}^{N_y}$. This formula should satisfy the following condition: For any graph $G$ and two nodes $a,b$ on it, the following holds,
\begin{equation}\label{rsfoc1}
\varphi_S(a,b)\wedge\varphi^x_T(a)\Rightarrow\Bigl(\varphi'(a,b)\Leftrightarrow \varphi^y_{S,T}(b)\Bigl)
\end{equation}
By our inductive assumption, $\varphi'(x,y)$ has a quantifier depth  which is no more than $m$, so $\{\varphi^x_i(y)\}_{i=1}^{N_x},\{\varphi^y_i(y)\}_{i=1}^{N_y}$ also have quantifier depths no more than $m$. Therefore, each of them has $\rsfoc$ correspondence. Furthermore, since $\wedge,\vee,\neg$ are allowed operation in $\rsfoc$, $\varphi^x_T(x)$ and $\varphi^y_{S,T}(y)$ can also be rewritten as $\rsfoc$ formulas.

Given that $\varphi_S(x,y)$ and $\varphi^x_T(y)$ specify the  boolean values for all $\{\varphi^x_i(y)\}_{i=1}^{N_x},\{\varphi^r_i(x,y)\}_{i=1}^{|P_2|}$ formulas, so we can enumerate all possibilities over $S\subseteq P_2$ and $T\subseteq N$. Obviously for any graph $G$ and a node pair $(a,b)$, there exists an unique $(S,T)$ pair such that $\varphi_S(a,b)\wedge\varphi^x_T(a)$ holds.

Hence, combining \Cref{rsfoc1}, $\varphi'(x,y)$ is true only when there exists a $(S,T)$ pair such that $\varphi_S(x,y)\wedge\varphi^x_T(x)\wedge\varphi^y_{S,T}(y)$ is \emph{true}. Formally, we can rewrite $\varphi'(x,y)$ as following form:
\begin{equation}\label{rsfoc2}
\varphi'(x,y)\equiv \bigvee_{S\subseteq P_2,T\subseteq N}\Bigl(\varphi_S(x,y)\wedge\varphi^x_T(x)\wedge\varphi^y_{S,T}(y)\Bigl)
\end{equation}
In order to simplify the formula above, let $\phi_T(x)$ denote the following formula:
\begin{equation}
\phi_T(x,y) \coloneqq \bigvee_{S\subseteq P_2}\Bigl(\varphi_S(x,y)\wedge\varphi^y_{S,T}(y)\Bigl)
\end{equation}
Then we can simplify \Cref{rsfoc2} to the following form:
\begin{equation}
\varphi'(x,y)\equiv\bigvee_{T\subseteq N}\Bigl(\varphi^x_T(x)\wedge\phi_{T}(x,y)\Bigl)
\end{equation}
Recall that $\varphi(x)=\exists^{\ge n}y(\varphi'(x,y))$, so it can be rewritten as:
\begin{equation}\label{rsfocsim}
\varphi(x)\equiv\exists^{\ge n}y\biggl(\bigvee_{T\subseteq N}\bigl(\varphi^x_T(x)\wedge\phi_{T}(x,y)\bigl)\biggl)
\end{equation}
Since for any graph $G$ and its node $a$, there exists exactly one  $T$ such that $\varphi^x_T(a)$ is \emph{true}. Therefore, \Cref{rsfocsim} can be rewritten as  the following formula:

\begin{equation}\label{varphi1}
\varphi(x)\equiv \bigvee_{T\subseteq N}\biggl(\varphi_T^x(x)\wedge \exists^{\ge n}y(\phi_T(x,y))\biggl)
\end{equation}

Let $\widehat{\varphi}_T(x) \coloneqq \exists^{\ge n}y(\phi_T(x,y))$. Since $\wedge,\vee$ are both allowed in $\rsfoc$. If we want to rewrite $\varphi(x)$ in the $\rsfoc$ form, it suffices to rewrite $\widehat{\varphi}_T(x)$ as a $\rsfoc$ formula, which is shown as follows,
\begin{equation}\label{rsfochat}
\widehat{\varphi}_T(x) \coloneqq \exists^{\ge n}y(\phi_T(x,y))=\exists^{\ge n}y\biggl(\bigvee_{S\subseteq P_2}\Bigl(\varphi_S(x,y)\wedge\varphi^y_{S,T}(y)\Bigl)\biggl)
\end{equation}
Similar to the previous argument, since for any graph $G$ and of of its node pairs $(a,b)$, the \emph{relation-specification} formula $\varphi_S(x,y)$ restricts exactly which types of relations exists between $(a,b)$, there is exactly one subset $S\subseteq P_2$ such that $\varphi_S(a,b)$ holds.

Therefore, for all $S\subseteq P_2$, we can define $n_S$ as the number of nodes $y$ such that $\varphi_S(x,y)\wedge\varphi^y_{S,T}(y)$ holds. Since for two different subsets $S_1,S_2\subseteq P_2$ and a fixed $y$, $\varphi_{S_1}(x,y)$ and $\varphi_{S_2}(x,y)$ can't hold simultaneously, the number of nodes $y$ that satisfies $\varphi_S(x,y)\wedge\varphi^y_{S,T}(y)$ is exactly the sum $\sum_{S\subseteq P_2}n_S$. Therefore, in order to express \Cref{rsfochat}, which means there exists at least $n$ nodes $y$ such that $\bigvee_{S\subseteq P_2}\bigl(\varphi_S(x,y)\wedge\varphi^y_{S,T}(y)\bigl)$ holds, it suffices to enumerate all possible values for $\{n_S|S\subseteq P_2\}$ that satisfies $(\sum_{S\subseteq P_2}n_S ) = n, n_S\in \mathbb{N}$.
Formally, we can rewrite $\widehat{\varphi}_T(x)$ as follows:
\begin{equation}
\widehat{\varphi}_T(x)\equiv \bigvee_{(\sum_{S\subseteq P_2}n_S)=n}\Bigl(\bigwedge_{S\subseteq P_2}\exists^{\ge n_{S}}y(\varphi_S(x,y)\wedge \varphi_{S,T}^y(y))\Bigl)
\end{equation}

Note that $\exists^{\ge n_{S}}y(\varphi_S(x,y)\wedge \varphi_{S,T}^y(y))$ satisfies the grammar of $\rsfoc$, so $\widehat{\varphi}_T(x)$ can be rewritten as $\rsfoc$. Then, since 
$\varphi^x_T(x)$ can also be rewritten as $\rsfoc$ by induction, combining \Cref{varphi1} and \Cref{rsfochat}, $\varphi(x)$ is in $\rsfoc$.
%
We finish the proof.
\end{proof}
\section{Proof of \Cref{inclusion}}
\begin{repproposition}{inclusion}
$\foc \not\subseteq $ \rrgnn and \rrgnn $\not\subseteq \foc$  on  some \emph{universal} graph class $\ggraph$.
\end{repproposition}
\begin{proof}
First, we prove $\foc\nsubseteq$ \rrgnn.

Consider the two graphs $G_1,G_2$ in Figure~\ref{graphcompare}. $(G_1,a),(G_2,a)$ can be distinguished by the $\foc$ formula $\varphi(x):=\exists^{\ge 1}y(p_1(x,y)\wedge p_2(x,y))$. However, we will prove that any R$^2$-GNN can't distinguish any node in $G_1$ from any node in $G_2$.

Let's prove it by induction over the layer number $L$ of R$^2$-GNN. That's to say, we want to show that for any $L\ge 0$, R$^2$-GNN with no more than $L$ layers can't distinguish any node of $G_1$ from that of $G_2$.

For the base case  where $L=0$, since each node feature vector is initialized by the unary predicate information, so the result trivially holds.

Assume any R$^2$-GNN with no more than $L=m$ layers can't distinguish nodes of $G_1$ from nodes of $G_2$. Then we want to prove the result for $L=m+1$.

For any R$^2$-GNN model $\mathcal{A}$ with $m+1$ layers, let $\mathcal{A}'$ denote its first $m$ layers, we know outputs of $\mathcal{A}'$ on any node from $G_1$ or $G_2$ are the same,  suppose the common output feature is $\feature^{(m)}$.

Recall the updating rule of R$^2$-GNN in \Cref{acrgnn}.We know the output of $\mathcal{A}$ on any node $v$ in $G_1$ or $G_2$ is defined as follows,
\begin{equation}\label{update}
\textbf{x}_v^{(m+1)}=C^{(m+1)}\biggl( \textbf{x}_v^{(m)},\Bigl(A_1^{(m+1)}(\mulsetl \feature_{u_1(v)}^{(m)}\mulsetr)\Bigl),A_2^{(m+1)}(\mulsetl \feature_{u_2(v)}^{(m)}\mulsetr)\Bigl), R^{(m+1)}(\mulsetl\feature_a^{(m)},\feature_b^{(m)},\feature_c^{(m)},\feature_d^{(m)} \mulsetr)\biggl)
\end{equation}
Here $C^{(m+1)},A_1^{(m+1)},A_2^{(m+1)},R^{(m+1)}$ are parameters in the layer $m+1$ of $\mathcal{A}$,  $u_1(v),u_2(v)$ is the only $r_1,r_2$-type neighbor of $v$, and $a,b,c,d$ are nodes from the corresponding graph $G_1$ or $G_2$.  From Figure~\ref{graphcompare} we can see they are well defined.

By induction, since any node pairs from $G_1$ and $G_2$ can't be distinguished by $\mathcal{A}'$, we have  $\feature_v^{(m)},\feature_{u_1(v)}^{(m)},\feature_{u_2(v)}^{(m)},\feature_a^{(m)},\feature_b^{(m)},\feature_c^{(m)},\feature_d^{(m)}$ are all the same feature $\feature^{(m)}$. Therefore, \Cref{update} have the same expression for all nodes $v$ from $G_1$ and $G_2$, which implies any $\mathcal{A}$ with $m+1$ layers can't distinguish nodes from $G_1$ and $G_2$.


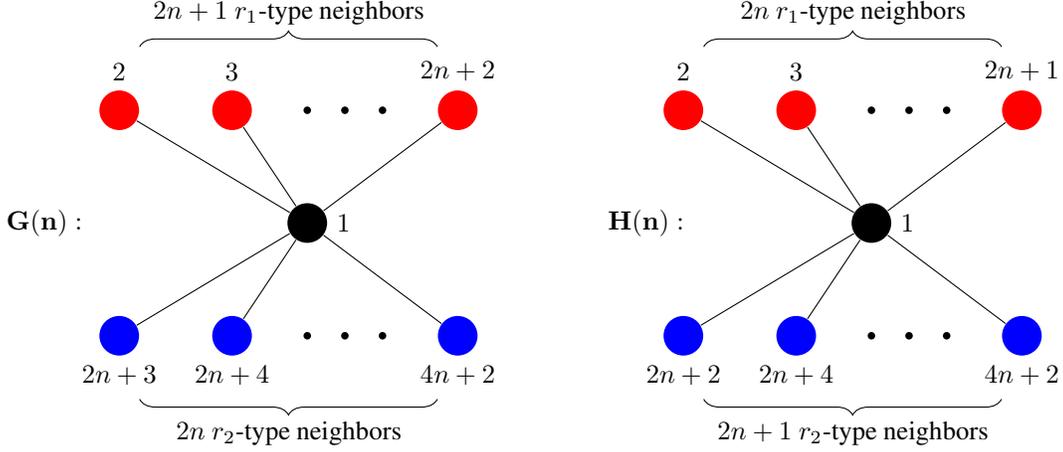
\begin{figure}[t!]
  \centering

  \begin{tikzpicture}

        \node[circle, fill=red,  inner sep=0pt,minimum size=15pt, label=above:{$2$}] (t1) at (0,0) {};

        \node[circle, fill=red, inner sep=0pt,minimum size=15pt, label=above:{$3$}] (t2) at (1.5, 0) {};

        \foreach \x in {2.5, 3, 3.5}
            \fill (\x,0) circle (0.05cm);

        \node[circle, fill=red, inner sep=0pt,minimum size=15pt, label=above:{$2n+2$}] (t3) at (4.5, 0) {};

        \node[circle, fill=black,  inner sep=0pt,minimum size=15pt, label=right:{$1$}] (center1) at (2.5, -1.5) {};

        \node[circle, fill=blue, inner sep=0pt,minimum size=15pt, label=below:{$2n+3$}] (t4) at (0, -3) {};

        \node[circle, fill=blue, inner sep=0pt,minimum size=15pt, label=below:{$2n+4$}] (t5) at (1.5, -3) {};

        \foreach \x in {2.5, 3, 3.5}
            \fill (\x,-3) circle (0.05cm);

        \node[circle, fill=blue, inner sep=0pt,minimum size=15pt, label=below:{$4n+2$}] (t6) at (4.5, -3) {};

        \draw [decorate,decoration={brace,amplitude=6pt,raise=24pt,mirror}]
        (t4) -- (t6) node [black,midway,yshift=-1.3cm] 
       {$2n$ $r_2$-type neighbors};

        \draw [decorate,decoration={brace,amplitude=6pt,raise=24pt}]
        (t1) -- (t3) node [black,midway,yshift=1.3cm] 
       {$2n+1$ $r_1$-type neighbors};

       \draw (t1) -- (center1);
       \draw (t2) -- (center1);
       \draw (t3) -- (center1);

       \draw (t4) -- (center1);
       \draw (t5) -- (center1);
       \draw (t6) -- (center1);

       \node[text width=4cm] at (0.5,-1.5) {$\mathbf{G(n)}:$};

        \node[circle, fill=red,  inner sep=0pt,minimum size=15pt, label=above:{$2$}] (t1) at (7.5,0) {};

        \node[circle, fill=red, inner sep=0pt,minimum size=15pt, label=above:{$3$}] (t2) at (9, 0) {};

        \foreach \x in {10, 10.5, 11}
            \fill (\x,0) circle (0.05cm);

        \node[circle, fill=red, inner sep=0pt,minimum size=15pt, label=above:{$2n+1$}] (t3) at (12, 0) {};

        \node[circle, fill=black,  inner sep=0pt,minimum size=15pt, label=right:{$1$}] (center1) at (10, -1.5) {};

        \node[circle, fill=blue, inner sep=0pt,minimum size=15pt, label=below:{$2n+2$}] (t4) at (7.5, -3) {};

        \node[circle, fill=blue, inner sep=0pt,minimum size=15pt, label=below:{$2n+4$}] (t5) at (9, -3) {};

        \foreach \x in {10, 10.5, 11}
            \fill (\x,-3) circle (0.05cm);

        \node[circle, fill=blue, inner sep=0pt,minimum size=15pt, label=below:{$4n+2$}] (t6) at (12, -3) {};

        \draw [decorate,decoration={brace,amplitude=6pt,raise=24pt,mirror}]
        (t4) -- (t6) node [black,midway,yshift=-1.3cm] 
       {$2n+1$ $r_2$-type neighbors};

        \draw [decorate,decoration={brace,amplitude=6pt,raise=24pt}]
        (t1) -- (t3) node [black,midway,yshift=1.3cm] 
       {$2n$ $r_1$-type neighbors};

       \draw (t1) -- (center1);
       \draw (t2) -- (center1);
       \draw (t3) -- (center1);

       \draw (t4) -- (center1);
       \draw (t5) -- (center1);
       \draw (t6) -- (center1);

       \node[text width=4cm] at (8.5,-1.5) {$\mathbf{H(n)}:$};

    \end{tikzpicture} 
  \caption{$G(n)$ and $H(n)$.} \label{fig:singlee}
\end{figure}

Next, we then prove R$^2$-GNNs $\nsubseteq\foc$.

Assume we want to construct a classifier $c$ which classifies a node into true iff \emph{the node has a larger number of $r_1$-type neighbors than that of $r_2$-type neighbors.} 

First, we prove that we can construct an 0/1-GNN $\mathcal{A}$ to capture $c$. It only has one layer with parameters $C^{(1)},A^{(1)}_1,A^{(1)}_2,R^{(1)}$, and feature dimension $d_0=d_1=1$. We assume that each node has the same initial feature vector, i.e., $\mathbf{1}$.
%
We set $A_{1,(1,1)}^{(1)}=1,A_{2,(1,1)}^{(1)}=-1$, where $A_{1,(1,1)}^{(1)}$ denotes the only element in $A_{1}^{(1)}$ placed in the first row and first column (similar for $A_{2,(1,1)}^{(1)}$) and all other parameters $0$. It's easy to see that $\mathcal{A}$ is equivalent to our desired classifier $c$ on any graph since we have $\feature_v^{(1)}=max(0,min(1,\sum_{u\in\mathcal{N}_{G,1}(v)}1-\sum_{u\in\mathcal{N}_{G,2}(v)}1))$.
 
 Next, we show $\foc$ can't capture $c$ on $\sgraph$. In order to show that, for any natural number $n$, we can construct two single-edge graphs $G(n),H(n)$ as follows:
\begin{align*}
 V(G(n))&=V(H(n))=\{1,2......4n+2\} \\
 E(G(n))&=\{r_1(1,i)|\forall i\in[2,2n+2]\}\cup\{r_2(1,i)|i\in[2n+3,4n+2]\} \\
 E(H(n))& =\{r_1(1,i)|\forall i\in[2,2n+1]\}\cup\{r_2(1,i)|i\in[2n+2,4n+2]\}
\end{align*}
 We prove the result by contradiction. Assume there is a $\foc$ classifier $\varphi$ that captures the classifier $c$, then it has to classify $(G(n),1)$ as \emph{true} and $(H(n),1)$ as \emph{false} for all natural number $n$. However, in the following we will show that it's impossible, which proves the non-existence of such $\varphi$.

 Suppose threshold numbers used on counting quantifiers of $\varphi$ don't exceed $m$, then we only need to prove that $\varphi$ can't distinguish $(G(m),1),(H(m),1)$, which contradicts our assumption. 

 For simplicity, we use $G,H$ to denote $G(m),H(m)$. In order to prove the above argument. First, we define a \emph{node-classification} function $CLS(\cdot)$ as follows. It has $G$ or $H$ as subscript and a node of $G$ or $H$ as input. 
 \begin{enumerate}
     \item $CLS_G(1)=CLS_H(1)=1$. It means the function returns $1$ when the input is the \emph{center} of $G$ or $H$.
     \item $CLS_G(v_1)=CLS_H(v_2)=2, 
 \forall v_1\in[2,2m+2],\forall v_2\in[2,2m+1]$, which means the function returns $2$ when the input is a $r_1$-neighbor of \emph{center}.
      \item $CLS_G(v_1)=CLS_H(v_2)=3, 
 \forall v_1\in[2m+3,4m+2],\forall v_2\in[2m+2,4m+2]$, which means the function returns $3$ when the input is a $r_2$-neighbor of \emph{center}.
 \end{enumerate}

 \textbf{Claim 1}: Given any $u_1,v_1\in V(G), u_2,v_2\in V(H)$, if $(CLS_G(u_1),CLS_G(v_1))=(CLS_H(u_2),CLS_H(v_2))$, then any $\foc$ formula with threshold numbers no larger than $m$ can't distinguish $(u_1,v_1)$ and $(u_2,v_2)$. 

This claim is enough for our result. We will prove that for any constant $d$ and any $\foc$ formula $\phi$ with threshold numbers no larger than $m$ and quantifier depth $d$, $\phi$ can't distinguish $(u_1,v_1)$ and $(u_2,v_2)$ given that $(CLS_G(u_1),CLS_G(v_1))=(CLS_H(u_2),CLS_H(v_2))$

The result trivially holds for the base case where $d=0$. Now let's assume the result holds for $d\leq k$, we can now prove the inductive case when $d=k+1$.

 Since $\wedge,\vee, \neg, r(x,y)$ trivially follows, we can only consider the case when $\phi(x,y)$ is in the form $\exists^{\ge N}y\phi'(x,y),N\leq m$ or $\exists^{\ge N}x\phi'(x,y),N\leq m$, where $\phi'(x,y)$ is a $\foc$ formula with threshold numbers no more than $m$ and quantifier depth no more than $k$. Since these two forms are symmetrical, without loss of generality, we only consider the case $\exists^{\ge N}y\phi'(x,y),N\leq m$.

 Let $N_1$ denote the number of nodes $v'_1\in V(G)$ such that $(G,u_1,v'_1)\models \phi'$ and $N_2$ denote the number of nodes $v'_2\in V(H)$ such that $(H,u_2.v'_2)\models \phi'$. Let's compare values of $N_1$ and $N_2$. First, By induction, since we have
 $CLS_G(u_1)=CLS_H(u_2)$ from precondition, so for any 
 $v'_1\in V(G), v'_2\in V(H)$, which satisfies $CLS_G(v'_1)=CLS_H(v'_2)$, $\phi'(x,y)$ can't distinguish $(u_1,v'_1)$ and $(u_2,v'_2)$. Second, isomorphism tells us $\phi'$ can't distinguish node pairs from the same graph if they share the same $CLS$ values. Combining these two facts, there has to be a 
 subset $S\subseteq \{1,2,3\}$, such that $N_1=\sum_{a\in S}N_G(a)$ and $N_2=\sum_{a\in S}N_H(a)$, where $N_G(a)$ denotes the number of nodes $u$ on $G$ such that $CLS_G(u)=a$, ($N_H(a)$ is defined similarly). 

 It's easy to see that $N_G(1)=N_H(1)=1$, and $N_G(a),N_H(a)>m$ for $a\in \{2,3\}$. Therefore, at least one of $N_1=N_2$ and  $m<min\{N_1,N_2\}$ holds. In neither case $\exists^{\ge N}y\phi'(x,y),N\leq m$ can distuigush $(u_1,v_1)$ and $(u_2,v_2)$.
 \end{proof}

 Note that in the above proof our graph class $\{G(n),H(n)|n\in\mathbb{N}\}$ is actually a \emph{simple} graph class, so we can actually get the following stronger argument.

\begin{corollary}\label{strongeracrfoc}
\rrgnn $\nsubseteq$ $\foc$ on some simple graph class.
\end{corollary}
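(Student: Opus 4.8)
The plan is to observe that this corollary is essentially already contained in the proof of \Cref{inclusion}, so that all that remains is to check one structural fact about the witnessing graphs. Recall that to establish \rrgnn $\nsubseteq\foc$ in \Cref{inclusion} we used the classifier $c$ that accepts a node iff it has strictly more $r_1$-neighbours than $r_2$-neighbours; this $c$ is captured by a one-layer 0/1-GNN and hence by some \rrgnn, and it was separated from every $\foc$ classifier by means of the family $\{G(n),H(n)\mid n\in\mathbb{N}\}$. Thus the only new ingredient needed is to verify that this family forms a \emph{simple} graph class.

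First I would inspect the edge sets. In both $G(n)$ and $H(n)$ every binary triple has the form $(1,r_1,i)$ or $(1,r_2,i)$, i.e.\ it joins the centre node $1$ to a distinct leaf $i$, and each leaf is joined to the centre by exactly one such triple. Taking into account the inverse predicates and the equality relation $\mathsf{EQ}$ that the definition of a multi-relational graph assumes, the triples incident to a leaf $i$ are only $(1,r_k,i)$, its inverse $(i,r_k^{-},1)$, and the self-loops $(1,\mathsf{EQ},1)$, $(i,\mathsf{EQ},i)$; so for no ordered pair of nodes $(v_1,v_2)$ does $\mathcal{E}$ contain two triples of the form $(v_1,p_2,v_2)$ with $p_2\in P_2$. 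Hence $\sgraph := \{G(n),H(n)\mid n\in\mathbb{N}\}$ is a simple graph class.

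With this in hand I would re-run the argument of \Cref{inclusion} restricted to $\sgraph$: the 0/1-GNN built there captures $c$ on all graphs, in particular on $\sgraph$; and by Claim~1 of that proof, for every $m$ no $\foc$ formula whose counting thresholds are bounded by $m$ can distinguish $(G(m),1)$ from $(H(m),1)$, whereas $c$ classifies the former as $true$ and the latter as $false$. Therefore no $\foc$ classifier can agree with $c$ on all of $\sgraph$, which gives \rrgnn $\nsubseteq\foc$ on $\sgraph$.

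There is no real obstacle here beyond the bookkeeping already done in the proof of \Cref{inclusion}; the one point that genuinely requires care — and the only thing that makes this a \emph{strictly stronger} statement — is confirming that the separating family is simple, i.e.\ that a second relation between any pair of nodes was never needed to drive the lower bound. Since the construction uses only a single relation type per (centre, leaf) pair, this is immediate.
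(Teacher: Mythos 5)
Your proposal is correct and matches the paper's own justification: the paper derives this corollary by remarking, immediately after the proof of \Cref{inclusion}, that the witnessing family $\{G(n),H(n)\mid n\in\mathbb{N}\}$ is in fact a simple graph class, which is precisely the observation you verify and then exploit. No further comment is needed.
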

\section{Proof of \Cref{single-edge}}
\begin{reptheorem}{single-edge} $\foc \subseteq$ \rrgnn on any simple graph class, and $\foc\subsetneq$ \rrgnn on some simple graph class.
\end{reptheorem}

\begin{proof}
We just need to show $\foc \subseteq$ \rrgnn on any simple graph class, and the second part can be just concluded from \Cref{strongeracrfoc}. By \Cref{rsfoceqfoc}, 
$\foc=\rsfoc$, so it suffices to show $\rsfoc \subseteq $ 0/1-GNN. By \Cref{closure}, 0/1-GNN is closed under $\wedge,\vee,\neg$, so we can only focus on formulas in $\rsfoc$ of form $\varphi(x)=\exists^{\ge n}y(\varphi_S(x,y)\wedge\varphi'(y)),S\subseteq \relations$. If we can construct an equivalent 0/1-GNN $\mathcal{A}$ for all formulas of above form, then we can capture all formulas in $\rsfoc$ since other generating rules $\wedge,\vee,\neg$ is closed under 0/1-GNN. In particular, for the setting of \emph{single-edge} graph class, $\varphi$ is meaningful only when $|S|\leq 1$. That's because $|S|>2$ implies that $\varphi$ is just the trivial $\bot$ in any \emph{single-edge} graph class $\sgraph$.

Do induction over quantifier depth $k$ of $\varphi(x)$. In the base case where $k=0$, the result trivially holds since in this situation, the only possible formulas that needs to consider are unary predicates $A(x)$, where $A \in P_1$, which can be captured by the initial one-hot feature. Next, assume our result holds for all formulas with quantifier depth $k$ no more than $m$, it suffices to prove the result when quantifier depth of $\varphi(x)=\exists^{\ge n}y(\varphi_S(x,y)\wedge\varphi'(y))$ is $m+1$. It follows that quantifier depth of $\varphi'(y)$ is no more than $m$.

 By induction, there is a 0/1-GNN model $\mathcal{A}'$ such that $\mathcal{A}'=\varphi'$ on single-edge graph class. To construct $\mathcal{A}$, we only need to append another layer on $\mathcal{A}'$. This layer $L+1$ has dimension $1$, whose parameters $C^{(L+1)},(A^{(L+1)}_j)_{j=1}^K,R^{(L+1)},b^{(L+1)}$ are set as follows:

\begin{enumerate}
    \item  When $|S| = 1$: Suppose $S=\{j\}$, set $A_{j,(1,1)}^{L+1}=1,b^{L+1}=1-n$, where $A_{j,(1,1)}^{L+1}$ denotes the element on the first row and first column of matrix $A_j^{(L+1)}$. Other parameters in this layer are $0$. This construction represents $\feature_v^{(L+1)}=max(0,min((\sum_{u\in\mathcal{N}_{G,j}(v)}\feature_u^{(L)})-(n-1),1))$. Since $\feature_u^{(L)}$ is classification result outputted by $\mathcal{A}'$ which is equivalent to $\varphi'$, $\sum_{u\in\mathcal{N}_{G,j}(v)}\feature_u^{(L)}$ counts the number of $j$-type neighbor $u$ of $v$ that satisfies $\varphi'(u)$. Therefore  $\feature_v^{(L+1)}=1$ if and only if there exists at least $n$ $j$-type neighbors satisfying the condition $\varphi'$, which is exactly what $\varphi(x)$ means.
    
    \item When $|S|=0$: Let 
$K=|P_2|$, for all $j\in [K]$, set $A_{j,(1,1)}^{L+1}=-1$, $R^{(L+1)}_{1,1}=1,b^{L+1}=1-n$ and all other parameters $0$. This construction represents $\feature_v^{(L+1)}=max(0,min((\sum_{u\in V(G)}\feature_u^{(L)})-(\sum_{j=1}^K\sum_{u\in\mathcal{N}_{G,j}(v)}\feature_u^{(L)})-(n-1),1))$. Since we only consider single-edge graph, $(\sum_{u\in V(G)}\feature_u^{(L)})-(\sum_{j=1}^K\sum_{u\in\mathcal{N}_{G,j}(v)}\feature_u^{(L)})$ exactly counts the number of nodes $u$ that satisfies $\varphi'(y)$ and doesn't have any relation with $v$. It's easy to see that $\feature_v^{(L+1)}=1$ iff there exists at least $n$ such nodes $u$, which is exactly what $\varphi(x)$ means.
\end{enumerate}
Hence, we finish the proof for \Cref{single-edge} -- for each $\foc$ formula over the single-edge graph class, we can construct an R$^2$-GNN to capture it. 

\end{proof}
\section{Proof of \Cref{ACR-GNN} and \Cref{complexity}}
\begin{reptheorem}{ACR-GNN}
\rrgnn $\subseteq$ $\foc$ on any bounded graph class, and \rrgnn $\subsetneq$ $\foc$ on some bounded graph class.
\end{reptheorem}
\begin{reptheorem}{complexity}

For any bounded graph class $\bgraph$. Suppose any $G\in\bgraph$ has no more than $N$ nodes, and $\bgraph$ has unary predicate set $P_1$ and relation (binary predicate) set $P_2$. Let $m_1:=|P_1|,m_2:=|P_2|$, then for any node classifier $c$, suppose $c$ can be represented as an \rrgnn with depth (layer number) $L$, then by \Cref{ACR-GNN} there is a $\foc$ classifier $\varphi$ equivalent to $c$ over $\bgraph$. Moreover, the followings hold:

1. The quantifier depth of $\varphi$ is no more than $L$.

2. The size of $\varphi$ (quantified by the number of nodes of $\varphi$'s parse tree) is no more than $2^{2f(L)}$, where $f(L):=2^{2^{2(N+1)f(L-1)}},f(0)=O(2^{2^{2(m_1+m_2))}})$.
\end{reptheorem}
For \Cref{theoremacrffoc}, we just need to show \rrgnn $\subseteq$ $\foc$ on any bounded graph class. The second part can then be shown by the fact that the graph class $\{G_1,G_2\}$ in \Cref{graphcompare} is a bounded graph class but $\foc\nsubseteq$ \rrgnn still holds. In the following proof, we also show how to get the complexity upper bound claimed in \Cref{complexity}. If we want to prove R$^2$-GNN $\subseteq\foc$, it suffices to show that for any R$^2$-GNN $\mathcal{A}$, there exists an equivalent $\foc$ formula $\varphi$ on any bounded graph class $\bgraph$. It implies that for two graphs $G_1,G_2$ and their nodes $a,b$, if they are classified differently by $\mathcal{A}$, there exists some $\foc$ formula $\varphi$ that can distinguish them. Conversly, if $a,b$ can't be distinguished by any $\foc$ formula, then they can't be distinguished by any R$^2$-GNN as well.




\begin{definition}\label{truthtabledef}
For a set of classifiers $\Psi=\{\psi_1......\psi_m\}$, a $\Psi$-truth-table $T$ is a 0/1 string of length $m$. $T$ can be seen as a classifier, which classifies a node $v$ to be true if and only if for any $1\leq i\leq m$, the classification result of $\psi_i$ on $v$ equals to $T_i$, where $T_i$ denotes the $i$-th bit of string $T$. We define $\mathcal{T}(\Psi):=\{0,1\}^m$ as the set of all $\Psi$-truth-tables. We have that for any graph $G$ and its node $v$, $v$ satisfies exactly one truth-table $T$.


\end{definition}

\begin{proposition}\label{finite}
Let $\foc(n)$ denote the set of formulas of $\foc$ with quantifier depth no more than $n$. For any  bounded graph class $\bgraph$ and $n$, only finitely many 
 intrinsically different node classifiers on $\bgraph$ can be represented by $\foc(n)$. Furthermore, define $N,m_1,m_2$ as in \Cref{complexity}, the number of intrinsically different $\foc(n)$ node classifiers on $\bgraph$ and their parse tree sizes are all upper bounded by $f(n)$ as defined in \Cref{complexity}.
\end{proposition}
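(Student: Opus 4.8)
The plan is to prove \Cref{finite} by induction on the quantifier depth $n$, carrying along a bound on \emph{both} the number of equivalence classes and the size of a smallest representative in each class. Two preparatory reductions make the induction tractable. First, by \Cref{rsfoceqfoc} we may work with $\rsfoc(n)$ instead of $\foc(n)$: the translation there is depth-preserving, and $\rsfoc$ is the right normal form because its only binary subformulas are the relation-specification formulas $\varphi_S(x,y)$ with $S\subseteq P_2$, of which there are merely $2^{m_2}$ — this is exactly what prevents an extra level of blow-up at each step. Second, since every graph in $\bgraph$ has at most $N$ nodes, a counting quantifier $\exists^{\ge k}$ with $k>N$ is equivalent to $\bot$ over $\bgraph$, and $\exists^{\ge 0}$ is equivalent to $\top$, so we may assume all thresholds lie in $\{1,\dots,N\}$.

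For the base case $n=0$, a depth-$0$ node classifier is a Boolean combination of the $m_1$ unary atoms $A(x)$ (together with the at most $m_2$ quantifier-free atoms $r(x,x)$, treated identically), hence is determined by which of the $2^{m_1+m_2}$ atom-assignments it accepts; so there are at most $2^{2^{m_1+m_2}}$ classes, each with a DNF representative of size polynomial in $2^{m_1+m_2}$, and both quantities are below $f(0)=O(2^{2^{2(m_1+m_2)}})$.

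For the inductive step, assume at depth $n-1$ there are $q\le f(n-1)$ classes, with representatives $\psi_1,\dots,\psi_q$ of size $\le f(n-1)$. A depth-$n$ $\rsfoc$ formula is a Boolean combination of unary atoms and subformulas $\exists^{\ge k}y\bigl(\varphi_S(x,y)\wedge\psi(y)\bigr)$ with $k\le N$, $S\subseteq P_2$, and (by the induction hypothesis) $\psi\in\{\psi_1,\dots,\psi_q\}$. The key observation is that the truth value at a node $v$ of every such subformula is determined by the \emph{neighborhood profile} of $v$: the bits $[A(v)]_{A\in P_1}$ together with the multiset, over all nodes $u$, of pairs $(S_{vu},T_u)$, where $S_{vu}\subseteq P_2$ is the exact set of relations from $v$ to $u$ and $T_u\in\{0,1\}^q$ is the truth table of $u$ against $\psi_1,\dots,\psi_q$ in the sense of \Cref{truthtabledef}. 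Since this multiset has at most $N$ elements over an alphabet of size $2^{m_2}\cdot 2^{q}$, the number of distinct profiles is at most $2^{m_1}\cdot(2\cdot 2^{m_2+q})^{N}\le 2^{2(N+1)f(n-1)}$, using that $f(n-1)\ge f(0)$ dominates $m_1$ and $m_2$. Each depth-$n$ classifier is a Boolean function of the profile, so there are at most $2^{2^{2(N+1)f(n-1)}}=f(n)$ of them; and writing such a function in disjunctive normal form over the atoms ``at least $j$ nodes $u$ have pair-type $(S,T)$'' — each of which is itself a $\rsfoc(n)$ formula $\exists^{\ge j}y\bigl(\varphi_S(x,y)\wedge\bigwedge_i(\pm\psi_i)(y)\bigr)$ of size $O(q^2)$, and there are at most $N\cdot 2^{m_2}\cdot 2^{q}$ of them — produces a formula of size at most $f(n)$. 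This closes the induction, and gives both assertions.

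The main obstacle is keeping the growth per level down to the two exponentials the recursion for $f$ can afford, rather than letting it explode by extra tower levels. Two ingredients are essential. Passing to $\rsfoc$ forces binary subformulas to range over only the $2^{m_2}$ formulas $\varphi_S$; a direct decomposition of $\foc$ would instead have to enumerate all depth-$(n-1)$ two-variable formulas (of which there are a tower of exponentials in $f(n-1)$), introducing extra levels in the tower. And counting neighborhood \emph{profiles} — multisets of node-types of total size at most $N$ — rather than naively taking all Boolean combinations of the $\approx N 2^{m_2}f(n-1)$ candidate subformulas is what keeps the governing exponent \emph{linear} in $f(n-1)$, saving the $2^{m_2}$ factor that the bound $2(N+1)$ cannot otherwise absorb. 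The remainder is routine arithmetic: verifying $2^{m_1}(2^{m_2+q})^{N}\le 2^{2(N+1)q}$ and the analogous DNF-size estimate, exploiting that $f$ is a fast tower so the parameters $m_1,m_2,N$ are swallowed.
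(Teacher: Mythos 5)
Your proposal is correct and lands on the paper's bound via the same outer skeleton — induction on quantifier depth, with the per-level double exponential coming from an ``atoms, then conjunctions, then disjunctions'' count — but the decomposition inside each inductive step is genuinely different from the paper's. The paper works with $\foc$ directly: it takes as building blocks the subformulas $\exists^{\ge m}y\,\varphi'(x,y)$ with $\varphi'\in\foc(k)$, counts $2(N+1)f(k)$ of them by applying the induction hypothesis to $\varphi'$, and then counts DNFs over these blocks; this silently extends the induction hypothesis to formulas with \emph{two} free variables, even though \Cref{finite} is stated for node classifiers. You avoid that by first passing to $\rsfoc$ (correctly observing that the translation in \Cref{rsfoceqfoc} preserves quantifier depth), so the only two-variable subformulas are the $2^{m_2}$ relation-specification formulas $\varphi_S$ and the induction hypothesis is only ever invoked on one-free-variable classifiers $\psi(y)$. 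You also count semantically, via neighborhood profiles (multisets of at most $N$ pairs $(S_{vu},T_u)$), rather than syntactically via truth assignments to building blocks; these profiles are essentially the truth-tables of \Cref{truthtabledef} enriched with the relation type, and making them explicit is what keeps the governing exponent linear in $f(n-1)$ despite the alphabet of size $2^{m_2+q}$. The paper's version buys brevity and independence from the $\rsfoc$ normal form; yours buys a cleaner induction hypothesis and an explicit semantic invariant. The arithmetic you defer does check out: the number of multisets is at most $\binom{2^{m_2+q}+N}{N}\le\bigl(2\cdot 2^{m_2+q}\bigr)^{N}$, and $m_1+N(m_2+q+1)\le 2(N+1)f(n-1)$ follows from $q\le f(n-1)$ and $m_2+1\le f(0)\le f(n-1)$.
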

\begin{proof}
Suppose all graphs in $\bgraph$ have no more than $N$ constants, then for any natural number $m>N$, formulas of form $\exists^{\ge m}y (\varphi(x,y))$ are always false. Therefore, it's sufficient only to consider 
$\foc$ logical classifiers with threshold numbers no more than $N$ on $\bgraph$. 

There are only $m_1+m_2$ predicates, and each boolean combination of unary predicates using $\wedge,\vee,\neg$ can be rewritten in the form of Disjunctive Normal Form~(DNF) (\cite{davey2002introduction}). So there are only at most $f(0)=2^{2^{2(m_1+m_2)}}$ intrinsically different formulas in $\foc$ with quantifier depth $0$. Note that $2(m_1+m_2)$ is the number of terms, $2^{2(m_1+m_2)}$ is the number of different truth-table conjunctions on these terms, and $2^{2^{2(m_1+m_2)}}$ is the number of different DNFs on these conjunctions. Each DNF has parse tree of size at most $1+2^{2(m_1+m_2)}(1+2m_1+2m_2)\leq 1000\cdot 2^{2^{2(m_1+m_2)}}$. Therefore, define $f(0)=1000\cdot 2^{2^{2(m_1+m_2)}}=O(2^{2^{2(m_1+m_2)}})$, we know the number of different $\foc$ formulas with quantifier depth $0$ and parse tree size of these formulas can both be upper bounded by $f(0)$.

By induction, suppose there are only $f(k)$ intrinsically different $\foc(k)$ formulas on $\bgraph$. and each meaningful $\foc(k+1)$ formula is generated by the following grammar
\begin{equation}
\varphi_1\wedge\varphi_2,\varphi_1\vee\varphi_2,\neg\varphi_2,\exists^{\ge m}y(\varphi'(x,y)),m\leq N
\end{equation}
where $\varphi_1,\varphi_2$ are $\foc(k+1)$ formulas and $\varphi'$ is $\foc(k)$ formulas.

Given that only the rule $\exists^{\ge m}y(\varphi'(x,y))$ can increase the quantifier depth from $k$ to $k+1$, $m\leq N$, and there are only $f(k)$ intrinsically different $\varphi'(x,y)\in\foc(k)$ on $\bgraph$ by induction. Therefore, there are only $(2N+2)f(k)$ intrinsically different $\foc(k+1)$ formulas of form $\exists^{\ge m}y(\varphi'(x,y)),\exists^{\ge m}x(\varphi'(x,y))$ or in $\foc(k)$ on $\bgraph$. Moreover, their boolean combination using $\wedge,\vee,\neg$ can be always rewritten in the DNF form, So there are also finitely many intrinsically different $\foc(k+1)$ logical classifiers on $\bgraph$. Similarly, we can bound the number of different DNF by $f(k+1)=2^{2^{2(N+1)f(k)}}$, where $2(N+1)f(k)$ is the number of "building blocks" which are sub-formulas with smaller quantifier depth or outermost symbol $\exists$, $2^{2(N+1)f(k)}$ is the number of different conjunctions on these building blocks, and $f(k+1)=2^{2^{2(N+1)f(k)}}$ is the number of different DNFs on these conjunctions. Parse tree size of each of these DNFs is at most $1+2^{2(N+1)f(k)}(1+2(N+1)f(k)(1+f(k)))\leq 2^{2^{2(N+1)f(k)}}=f(k+1)$. The LHS is from the inductive assumption that each $\foc(k)$ formula has a equivalent representation within $f(k)$ parse tree size. The inequality is because we know $f(k)\ge 1000$. Thus, we can upper bound the number of intrinsically different 
 $\foc(k+1)$ formulas on $\bgraph$ and their parse tree size both by $f(k+1)$.
\end{proof}


\begin{lemma}\label{truthtable}
 For any two pairs $(G_1, v_1)$ and $(G_2, v_2)$, where $G_1$ and $G_2$ are two \emph{bounded} graphs from $\bgraph$ and $v_1$ and $v_2$ are two nodes in $G_1$ and $G_2$, respectively. If all logical classifiers in $\foc(L)$ can't distinguish $v_1,v_2$, then any R$^2$-GNN with layer no more than $L$ can't distinguish them as well.
\end{lemma}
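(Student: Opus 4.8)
The plan is to prove \Cref{truthtable} by induction on the layer number $L$. The statement we want is: if for all $\varphi\in\foc(L)$ we have $(G_1,v_1)\models\varphi \iff (G_2,v_2)\models\varphi$, then for every R$^2$-GNN $\mathcal{A}$ with at most $L$ layers, the feature vectors $\feature_{v_1}^{(L)}$ and $\feature_{v_2}^{(L)}$ (computed on $G_1$ and $G_2$ respectively) coincide — and hence $\mathcal{A}$ classifies $v_1,v_2$ the same. Actually, since a GNN that stops earlier is a special case, it suffices to handle exactly $L$ layers. The base case $L=0$ is immediate: $\feature_v^{(0)}=I(v)$ is the one-hot encoding of the unary types of $v$, and the atomic formulas $A(x)$, $A\in P_1$, all lie in $\foc(0)$; if these cannot distinguish $v_1,v_2$ then the two nodes carry the same type set and hence the same initial vector.

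For the inductive step, suppose the claim holds for $L$ and assume no $\foc(L+1)$ formula distinguishes $(G_1,v_1)$ from $(G_2,v_2)$. The first key observation is a stronger ``pairwise'' version, analogous to \textbf{Claim 1} in the proof of \Cref{inclusion}: I would prove by a side-induction (on quantifier depth, using \Cref{finite} to keep the relevant formula set finite) that whenever $u_1\in V(G_1)$, $u_2\in V(G_2)$ agree on all $\foc(L)$ formulas \emph{with one free variable}, and moreover the pair $(v_1,u_1)$ agrees with $(v_2,u_2)$ on the relational atoms $r(x,y)$, then the $L$-layer GNN gives $\feature_{u_1}^{(L)}=\feature_{u_2}^{(L)}$, and the multiset of $(\foc(L)\text{-type},\ \text{edge-relations-to-}v)$ of neighbours of $v_1$ matches that for $v_2$. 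The cleanest route: group the nodes of each graph into finitely many classes by their $\foc(L)$-``type'' (the truth table of all $\foc(L)$ classifiers, finite by \Cref{finite}), and observe that for each relation $r\in P_2$ and each such type $\tau$, the sentence ``$v$ has at least $k$ many $r$-neighbours of type $\tau$'' is expressible in $\foc(L+1)$ (it is $\exists^{\ge k}y(r(x,y)\wedge\delta_\tau(y))$ where $\delta_\tau$ is the finite conjunction defining type $\tau$, of quantifier depth $\le L$), as is the analogous readout sentence ``at least $k$ many nodes (anywhere) have type $\tau$''. Because thresholds above $N$ are trivial on a bounded class, agreement on all $\foc(L+1)$ formulas forces these neighbour-counts-by-type and global-counts-by-type to be \emph{exactly equal}, not merely equal up to a threshold.

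Given that, the inductive step closes mechanically: by the outer induction hypothesis applied layer-by-layer to $\mathcal{A}$'s first $L$ layers, two nodes of the same $\foc(L)$-type get the same $\feature^{(L)}$; hence the multiset $\mulsetl\feature_u^{(L)}\mid u\in\mathcal{N}_{G_1,j}(v_1)\mulsetr$ equals $\mulsetl\feature_u^{(L)}\mid u\in\mathcal{N}_{G_2,j}(v_2)\mulsetr$ for every relation $j$ (the multiplicities match by the exact-count statement just established), the global multisets $\mulsetl\feature_u^{(L)}\mid u\in V\mulsetr$ likewise agree, and $\feature_{v_1}^{(L)}=\feature_{v_2}^{(L)}$; feeding these identical inputs into layer $L+1$'s $C^{(L+1)}$, $A_j^{(L+1)}$, $R^{(L+1)}$ yields $\feature_{v_1}^{(L+1)}=\feature_{v_2}^{(L+1)}$, so $CLS$ cannot tell them apart.

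The main obstacle is the bookkeeping in the side-induction: one must be careful that ``type of a neighbour'' is itself a bounded-depth formula so that counting neighbours by type stays inside $\foc(L+1)$, and that the finitely-many-types fact from \Cref{finite} is invoked at the right quantifier depth. The boundedness of $\bgraph$ is used twice and is essential — once to make the set of types finite, and once to upgrade ``equal up to threshold $N$'' to genuine equality of counts — which is exactly why the lemma (and \Cref{ACR-GNN}) fails on universal classes.
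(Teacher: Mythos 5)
Your proposal is correct and follows essentially the same route as the paper: induction on $L$, using the finiteness of $\foc(L)$-types on a bounded class (\Cref{finite}) to express ``at least $k$ $r$-neighbours of type $\tau$'' and the corresponding global-readout count as $\foc(L+1)$ formulas, so that agreement on $\foc(L+1)$ forces the per-relation and global feature multisets after $L$ layers to coincide. The paper phrases this contrapositively (a mismatch in the truth-table multisets would yield a distinguishing $\exists^{\ge n_2}y(r(x,y)\wedge\varphi_T(y))$), and your initial two-free-variable ``pairwise'' side-induction is an unnecessary detour that you yourself discard in favour of the type-grouping argument the paper actually uses.
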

\begin{proof}
By one-hot feature initialization function of R$^2$-GNN, $\foc(0)$ can distinguish all different one-hot intial features, so the lemma trivially holds for the base case~($L=0$).

For the inductive step, we suppose \Cref{truthtable} holds for all $L\leq k$, then we can assume $v_1,v_2$ can't be distinguished by $\foc(k+1)$. Let $N=k+1$

$G_1$ and $G_2$ are \emph{bounded} graphs from $\bgraph$, so $\foc(N)$ has finitely many intrinsically different classifiers according to \Cref{finite}. Let $\mathcal{TT}_N(v)$ denote the $\foc(N)$-truth-table satisfied by $v$.  According to \Cref{truthtabledef}, we know that for any $T\in\mathcal{T}(\foc(N))$, there exists a $\foc(N)$ classifier $\varphi_T$ such that for any node $v$ on $G_i$, where $i\in{1, 2}$,  $\mathcal{TT}_N(v)=T\Leftrightarrow (G_{i},v)\models \varphi_T$.

Assume there is an R$^2$-GNN $\mathcal{A}$ that distinguish $v_1,v_2$ with layer $L=k+1$. Let $\widehat{\mathcal{A}}$ denote its first $k$ layers. By update rule of 
R$^2$-GNN illustrated in Equation~\ref{acrgnn}, output of $\mathcal{A}$ on node 
$v$ of graph $G$, $\feature_v^{(k+1)}$ only dependent on the following three things:
\begin{itemize}
\item output of $\widehat{\mathcal{A}}$ on $v$, $\feature_v^{(k)}$
\item multiset of outputs of $\widehat{\mathcal{A}}$ on $r$-type neighbors of $v$ for each $r\in P_2$, $\{\feature_u^{(k)}|u\in\mathcal{N}_{G,r}(v)\}$
\item multiset of outputs of $\widehat{\mathcal{A}}$ on all nodes in the graph, $\{\feature_u^{(k)}|u\in\mathcal{N}_{G,r}(v)\}$
\end{itemize}
By induction, since $v_1,v_2$ can't be distinguished by $\foc(k)$, they has same feature outputted by $\widehat{\mathcal{A}}$. Then  there are two remaining possibilities.
\begin{itemize}
\item  
$\mulsetl\mathcal{TT}_k(u)|u\in\mathcal{N}_{G_1,r}(v_1)\mulsetr\neq \mulsetl\mathcal{TT}_k(u)|u\in\mathcal{N}_{G_2,r}(v_2)\mulsetr$ for some binary predicate $r$. Therefore, there exists a $\foc(k)$-truth-table $T$, such that $v_1,v_2$ have differently many $r$-type neighbors that satisfies $\varphi_T$. Without loss of generality, suppose $v_1,v_2$ have $n_1,n_2 (n_1<n_2)$ such neighbors respectively. we can write a $\foc(k+1)$ formula $\exists^{\ge n_2}y(r(x,y)\wedge \varphi_T(y))$ that distinguishes $v_1$ and $v_2$, which contradicts the precondition that they can't be distinguished by $\foc(k+1)$ classifiers.
\item
$\mulsetl\mathcal{TT}_k(u)|u\in V(G_1)\mulsetr\neq\mulsetl\mathcal{TT}_k(u)|u\in V(G_2)\mulsetr $. Therefore, there exists a $\foc(k)$-truth-table $T$, such that $G_1,G_2$ have differently many nodes that satisfies $\varphi_T$. Without loss of generality, suppose $G_1,G_2$ have $n_1,n_2 (n_1<n_2)$ such nodes respectively. we can write a $\foc(k+1)$ formula $\exists^{\ge n_2}y\varphi_T(y)$ that distinguishes $v_1$ and $v_2$, which contradicts the precondition that they can't be distinguished by $\foc(k+1)$ classifiers.
\end{itemize}
Since all possibilities contradicts the precondition that $v_1,v_2$ can't be distinguished by $\foc(k+1)$, such an $\mathcal{A}$ that distinguishes $v_1,v_2$ doesn't exist.
\end{proof}
We can now gather all of these to prove \Cref{ACR-GNN} and \Cref{complexity}.
\begin{proof}
For any R$^2$-GNN $\mathcal{A}$, suppose it has $L$ layers. For any graph $G\in \bgraph$ and its node $v$, let $\mathcal{TT}_L(v)$ denote the $\foc(L)$-truth-table satisfied by $v$. For any $T\in\mathcal{T}(\foc(L))$, since $\bgraph$ is a bounded graph class, using \Cref{finite}, there exists a $\foc(L)$ classifier $\varphi_T$ such that for any node $v$ in graph $G\in \bgraph$, $\mathcal{TT}_L(v)=T\Leftrightarrow (G,v)\models \varphi_T$. Moreover, by \Cref{finite}, since $T$ is a truth table on at most $f(L)$ formulas, $\varphi_T$ can be written as a conjunction over $f(L)$ literals, which means $\varphi_T$ has parse tree size at most $1+f(L)^2$ since by \Cref{finite}, every formula in $\foc(L)$ is equivalent to some $\foc$ formula with parsee tree size at most $f(L)$.

By \Cref{truthtable}, If two nodes $v_1,v_2$ have same $\foc(L)$-truth-table ($\mathcal{TT}_L(v_1)=\mathcal{TT}_L(v_2)$), they can't be distinguished by $\mathcal{A}$. Let $S$ denote the subset of $\mathcal{T}(\foc(L))$ that satisfies $\mathcal{A}$. By \Cref{finite} and \Cref{truthtabledef}, $\Phi:=\{\varphi_T|T\in S\}$ is a finite set with $|\Phi|\leq 2^{f(L)}$, then disjunction of formulas in $\Phi$, $(\bigvee_{T\in S}\varphi_T)$ is a $\mathcal{FOC}_2$ classifier that equals to $\mathcal{A}$ under bounded graph class $\bgraph$. Furthermore, by the above upper bound of parse tree size of any $\varphi_T$, $(\bigvee_{T\in S}\varphi_T)$ has parse tree size no more than $1+2^{f(L)}(1+f(L)^2)\leq 2^{2f(L)}$, where the inequality is from $f(L)\ge 1000$. We complete the proof.
\end{proof}
\section{Proof of \Cref{theoremacracrf}}
\begin{reptheorem}{theoremacracrf}
\rrgnn $\subseteq$  \rrgnn $\circ F$ on any \emph{universal} graph class $\ggraph$.
\end{reptheorem}

\begin{proof}
Assume that we have a  predicate set $P=P_1\cup P_2$, $K=|P_2|$ and let $P'=P\cup\{primal,aux1,aux2\}$ denote the predicate set after transformation $F$.  
For any R$^2$-GNN $\mathcal{A}$ under $P$, we want to construct another R$^2$-GNN $\mathcal{A}'$ under $P'$, such that for any graph $G$ under $P$ and its node $v$, $v$ has the same feature outputted by $\mathcal{A}(G,v)$ and $\mathcal{A}'(F(G),v)$. Let $L$ denote the layer number of $\mathcal{A}$.

We prove this theorem by induction over the number of layers $L$. In the base~($L=0$), our result trivially holds since the one-hot initialization over $P'$ contains all unary predicate information in $P$. Now suppose the result holds for $L\leq k$, so it suffices to prove it when $L=k+1$.

For the transformed graph $F(G)$, \primal(v) is \emph{true} if and only if $v$ is the node in the original graph $G$. Without loss of generality, if we use one-hot feature initialization on $P'$, we can always keep an additional dimension in the node feature vector $\feature_v$ to show whether \primal(v) is \emph{true}, its value is always $0/1$, in the proof below when we use $\feature$ to denote the feature vectors, we omit this special dimension for simplicity. But keep in mind that this dimension always keeps so we can distinguish original nodes and added nodes.

Recall that an R$^2$-GNN is defined by $\{C^{(i)},(A_j^{(i)})_{i=1}^{K},R^{(i)}\}_{i=1}^{L}$. By induction, let $\widehat{\mathcal{A}}$ denote the first $k$ layers of $\mathcal{A}$, and let $\widehat{\mathcal{A}}'$ denote the R$^2$-GNN equivalent with $\widehat{\mathcal{A}}$ on $F$ transformation such that $\widehat{\mathcal{A}}=\widehat{\mathcal{A}}'\circ F$. We will append three layers to $\widehat{\mathcal{A}}'$ to construct $\mathcal{A}'$ that is equivalent to $\mathcal{A}$. Without loss of generality, we can assume all layers in 
$\mathcal{A}$ have same dimension length $d$. 
  Suppose $L'$ is the layer number of $\widehat{\mathcal{A}}'$, so we will append layer $L'+1,L'+2,L'+3$. for all $l\in\{L'+1,L'+2,L'+3\}$, let $\{C^{a,(l)},C^{p,(l)},(A_j^{*,(l)})_{j=1}^{K},A_{aux1}^{*,(l)},A_{aux2}^{*,(l)},R^{*,(l)}\}$ denote the parameters in $l$-th layer of $\mathcal{A}$. Here, $A_{aux1}^{*,(l)},A_{aux2}^{*,(l)}$ denotes the aggregation function corresponding to two new predicates \iaux,\eaux, added in transformation $F$, and $C^{p,(l)},C^{a,(l)}$ are different combination function that used for primal nodes and non-primal nodes. Note that with the help of the special dimension mentioned above, we can distinguish primal nodes and non-primal nodes. Therefore, It's safe to use different combination functions for these two kinds of nodes. Note that here since we add two predicates \iaux,\eaux, the input for combination function should be in the form $C^p(\feature_0,(\feature_j)_{j=1}^{K},\feature_{aux1},\feature_{aux2},\feature_g)$ where $\feature_0$ is the feature vector of the former layer, and $\feature_j,1\leq j\leq K$ denote the output of aggregation function $A_j^{*,(l)}$, $\feature_{aux1},\feature_{aux2}$ denote the output of aggregation function $A_{aux1}^{*,(l)},A_{aux2}^{*,(l)}$, and $\feature_g$ denotes the feature outputted by global readout function $R^{*,(l)}$. For aggregation function and global readout function, their inputs are denoted by $\fset$, meaning a multiset of feature vector. Note that all aggregation functions and readout functions won't change the feature dimension, only combination functions $C^{p,(l)},C^{a,(l)}$ will transform $d_{l-1}$ dimension features to $d_l$ dimension features.

1).~ layer $L'+1$: input dimension is $d$, output dimension is $d'=Kd$. For feature vector $\textbf{x}$ with length $d'$, let $\textbf{x}^{(i)}, i\in\{1, \dots, K\}$ denote its $i$-th slice in dimension $[(i-1)d+1,id]$. Let $[\feature_1,\dots, \feature_m]$ denote concatenation of $\feature_1, \dots, \feature_m$, and let $[\feature]^n$ denote concatenation of $n$ copies of $\feature$, $\textbf{0}^n$ denote zero vectors of length $n$. parameters for this layer are defined below:
\begin{equation}\label{layer1cp}
C^{p,(L'+1)}(\feature_0,(\feature_j)_{j=1}^{K},\feature_{aux1},\feature_{aux2},\feature_{g})=[\feature_0,\textbf{0}^{d'-d}]
\end{equation}
\begin{equation}\label{layer1ca}
C^{a,(L'+1)}(\feature_0,(\feature_j)_{j=1}^{K},\feature_{aux1},\feature_{aux2},\feature_{g})=[\feature_{aux1}]^{K}
\end{equation}
\begin{equation}\label{layer1oth}
A_{aux1}^{*,(L'+1)}(\fset)=\sum_{\feature\in\fset}\feature
\end{equation}
Other parameters in this layer are set to functions that always output zero-vector.

We can see here that the layer $L'+1$ do the following thing:

For all primal nodes $a$ and its non-primal neighbor $e_{ab}$, pass concatenation of $K$ copies of $\feature_a$ to $\feature_{e_{ab} }$, and remains the feature of primal nodes unchanged.

2).  layer $L'+2$, also has dimension $d'=Kd$, has following parameters.
\begin{equation}
C^{p,(L'+2)}(\feature_0,(\feature_j)_{j=1}^{K},\feature_{aux1},\feature_{aux2},\feature_g)=\feature_0
\end{equation}
\begin{equation}
C^{a,(L'+2)}(\feature_0,(\feature_j)_{j=1}^{K},\feature_{aux1},\feature_{aux2},\feature_g)=\sum_{j=1}^K\feature_j
\end{equation}
\begin{equation}
\forall j\in[1,K],A_j^{*,(L'+2)}(\fset)=[\textbf{0}^{(j-1)d},\sum_{\feature\in\fset}\feature^{(j)},\textbf{0}^{(K-j)d}]
\end{equation}
All other parameters in this layer are set to function that always outputs zero vectors. This layer do the following thing:

For all primal nodes, keep the feature unchanged, for all added node pair $e_{ab},e_{ba}$. Switch their feature, but for all $r_i\in P_2$, if there is no $r_i$ relation between $a,b$, the $i$-th slice of $\feature_{e_{ab}}$ and $\feature_{e_{ba}}$ will be set to $\textbf{0}$.

3). layer $L'+3$, has dimension $d$, and following parameters.
\begin{equation}
C^{p,(L'+3)}(\feature_0,(\feature_j)_{j=1}^{K},\feature_{aux1},\feature_{aux2},\feature_g)=C^{(L)}(\feature_0^{(1)},(\feature_{aux1}^{(j)})_{j=1}^{K},\feature_{g}^{(1)})
\end{equation}
\begin{equation}
R^{*,(L'+3)}(\fset)=[R^{(L)}(\mulsetl\feature_v^{(1)}|\feature_v\in \fset,\textbf{primal}(v)\mulsetr),\textbf{0}^{d'-d}]
\end{equation}
\begin{equation}
A_{aux1}^{*,(L'+3)}(\fset)=[A^{(L)}_1(\mulsetl\feature^{(1)}|\feature\in\fset\mulsetr)......A^{(L)}_K(\mulsetl\feature^{(K)}|\feature\in\fset\mulsetr)]
\end{equation}

Note that $C^{(L)},A_j^{(L)},R^{(L)}$ are all parameters in the last layer of $\mathcal{A}$ mentioned previously. All other parameters in this layer are set to functions that always output zero vectors. We can see that this layer simulates the work of last layer of $\mathcal{A}$ as follows:
\begin{itemize}
\item  For all $1\leq j\leq K$, use the $j$-th slice of feature vector $\feature^{(j)}$ to simulate $A_{j}^{(L)}$ and store results of aggregation function $A_{j}^{(L)}$ on this slice.

\item Global readout trivially emulates what $R^{(L)}$ does, but only reads features for primal nodes. It can be done since we always have a special dimension in feature to say whether it's a primal node.

\item  We just simulate what $C^{(L)}$ does on primal nodes. For $1\leq j\leq K$ The type $r_j$ aggregation result (output of $A_j^{(L)}$) used for input of $C^{(L)}$ is exactly $j$-th slice of return value of $A_{aux1}^{*,(L'+3)}$. 
\end{itemize}
By construction above, $\mathcal{A}'$ is a desired model that have the same output as $\mathcal{A}$.

\end{proof}
\section{Proof of \Cref{theoremfocacrf}}
\begin{reptheorem}{theoremfocacrf}
$\foc \subseteq$ \rrgnn $\circ F$ on any  \emph{universal} graph class $\ggraph$.
\end{reptheorem}
\begin{proof}
For any $\foc$ classifier $\varphi$ under predicate set $P$, we want to construct a 0/1-GNN $\mathcal{A}$ on $P'=P\cup\{primal,aux1,aux2\}$ equivalent to $\varphi$ with \emph{graph transformation} $F$.

Recall that $\foc = \rsfoc$ shown in ~\Cref{rsfoceqfoc} and 0/1-GNNs $\subseteq$ R$^2$-GNNs, it suffices to  prove that 0/1-GNN$\circ F$ capture $\rsfoc$. By \Cref{closure}, since $\wedge,\vee,\neg$ are closed under 0/1-GNN it suffices to show that when $\varphi$ is in the form $\exists^{\ge n}\big(
\varphi_S(x,y)\wedge\varphi'(y)\big ),S\subseteq P_2$, we can capture it.

We prove by induction over quantifier depth $m$ of $\varphi$. Since $0$-depth formulas are only about unary predicate that 
can be extracted from one-hot initial feature, our theorem trivially holds for $m=0$. Now, we assume it also holds for $m\leq k$, it suffices to prove the case when $m=k+1$. Then there are two possibilities:

1. When $S\neq \emptyset$:

Consider the following logical classifier under $P'$:
\begin{equation}
\widehat{\varphi}_S(x):=\Bigl(\bigwedge_{r\in S}\exists xr(x,y)\Bigl)\wedge\Bigl(\bigwedge_{r\notin S}\neg \exists xr(x,y)\Bigl)
\end{equation}
$\widehat{\varphi}_S(x)$ restricts that for any $r \in P'$, $x$ has $r$-type neighbor if and only if $r\in S$. Review the definition of transformation $F$, we know that for any added node $e_{ab}$, $(F(G),e_{ab})\models \widehat{\varphi}_S$ if and only if $(G,a,b)\models\varphi_S(a,b)$, where $\varphi_S(x,y)$ is the \emph{relation-specification} formula defined in \Cref{rsfoc} That is to say for any $r_i,1\leq i\leq K$, there is relation $r_i$ between $a,b$ if and only if $i\in S$.

Now consider the following formula:
\begin{equation}
\widehat{\varphi}:=\exists^{\ge n}y\biggl(\iaux(x,y)\wedge\widehat{\varphi}_S(y)\wedge\Bigl(\exists x\bigl(\eaux(x,y)\wedge(\exists y(\iaux(x,y)\wedge\varphi'(y)))\bigl)\Bigl) \biggl)
\end{equation}
For any graph $G$ and its node $v$, it's easy to see that $(G,v)\models \varphi\Leftrightarrow (F(G),v)\models \widehat{\varphi}$. Therefore we only need to capture $\widehat{\varphi}$ by 0/1-GNN on every primal node of transformed graphs. By induction, since quantifier depth of $\varphi'(y)$ is no more than $k$, we know $\varphi'(y)$ is in 0/1-GNN. $\widehat{\varphi}$ is generated from $\varphi'(y)$ using rules $\wedge$ and $\exists y\bigl(r(x,y)\wedge \varphi'(y)\bigl)$. By \Cref{closure}, $\wedge$ is closed under 0/1-GNN. For $\exists y\bigl(r(x,y)\wedge \varphi'(y)\bigl)$, we find that the construction needed is the same as construction for single-element $S$ on single-edge graph class $\sgraph$ used in \Cref{single-edge}. Therefore, since we can manage these two rules, we can also finish the construction for $\widehat{\varphi}$, which is equivalent to $\varphi$ on primal nodes of transformed graph.

2. When $S=\emptyset$

First, consider the following two logical classifiers:

\begin{equation}
\bar{\varphi}(x):=\Bigl(\primal(x)\wedge \varphi'(x)\Bigl)
\end{equation}
$\bar{\varphi}$ says a node is primal, and satisfies $\varphi'(x)$. Since $\varphi'(x)$ has quantifier depth no more than $k$, and $\wedge$ is closed under 0/1-GNN. There is a 0/1-GNN $\mathcal{A}_1$ equivalent to $\bar{\varphi}$ on transformed graph. Then, consider the following formula.
\begin{equation}
\tilde{\varphi}(x):=\exists y\bigl(\eaux(x,y)\wedge(\exists x,\iaux1(x,y)\wedge\varphi'(x))\bigl)
\end{equation}

$\tilde{\varphi}(x)$ evaluates on added nodes $e_{ab}$ on transformed graph, $e_{ab}$ satisfies it iff $b$ satisfies $\varphi'$

Now for a graph $G$ and its node $v$, define $n_1$ as the number of nodes on $F(G)$ that satisfies $\bar{\varphi}$, and define $n_2$ as the number of \iaux-type neighbors of $v$ on $F(G)$ that satisfies $\tilde{\varphi}$. Since $\varphi(x)=\exists^{\ge n}y(\varphi_{\emptyset}(x,y)\wedge\varphi'(y))$ It's easy to see that $(G,v)\models \varphi$ if and only if $n_1-n_2\ge n$.

Formally speaking, for a node set $S$, let $|S|$ denote number of nodes in $S$, we define the following classifier $c$ such that for any graph $G$ and its node $a$, $c(F(G),a)=1\Leftrightarrow (G,a)\models \varphi$ 
\begin{equation}\label{classifierc}
c(F(G),a)=1\Leftrightarrow |\{v|v\in V(F(G)), (F(G),v)\models \bar{\varphi}\}|-|\{v|v\in \mathcal{N}_{F(G),\iaux1}(v), (F(G),v)\models \tilde{\varphi}\}|\ge n
\end{equation}

So how to construct a model $\mathcal{A}$ to capture classifier $c$? First, by induction $\bar{\varphi},\tilde{\varphi}$ are all formulas with quantifier depth no more than $k$ so by previous argument there are 0/1-GNN models $\bar{\mathcal{A}},\tilde{\mathcal{A}}$ that capture them respectively. Then  we can use feature concatenation technic introduced in \Cref{parconcat} to construct a model $\widehat{A}$ based on $\bar{\mathcal{A}},\tilde{\mathcal{A}}$, such that $\widehat{A}$ has two-dimensional output, whose first and second dimensions have the same output as $\bar{\mathcal{A}},\tilde{\mathcal{A}}$ respectively.

Then, suppose $\widehat{\mathcal{A}}$ has $L$ layers, The only thing we need to do is to append a new layer $L+1$ to $\widehat{\mathcal{A}}$, it has output dimension $1$. parameters of it are $\{C^{(L+1)},(A_j^{(L+1)})_{j=1}^K,A_{aux1}^{(L+1)},A_{aux2}^{(L+1)},R^{(L+1)}\}$ as defined in \Cref{01acrgnn}. The parameter settings are as follows:

$\textbf{R}^{(L+1)}_{1,1}=1,\textbf{A}_{aux1,(1,2)}^{(L+1)}=-1,\textbf{b}^{(L+1)}_1=1-n$. Other parameters are set to $0$, where $\textbf{A}_{aux1,(1,2)}^{(L+1)}$ denotes the value in the first row and second column of $\textbf{A}_{aux1}^{(L+1)}$.

In this construction, we have 

$\feature_v^{(L+1)}=max(0,min(1,\sum_{u\in V(F(G))}\feature_{u,1}^{(L)}-\sum_{u\in\mathcal{N}_{F(G),aux1}(v)}\feature_{u,2}^{(L)}-(n-1)))$, which has exactly the same output as classifier $c$ defined above in \Cref{classifierc}. Therefore, $\mathcal{A}$ is a desired model.
\end{proof}
\section{Proof of \Cref{theoremacrffoc}}
\begin{reptheorem}{theoremacrffoc}
\rrgnn $\circ F \subseteq \foc$ on  any \emph{bounded} graph class $\bgraph$.
\end{reptheorem}

Before we go into theorem itself, we first introduce \Cref{enum} that will be used in following proof.
\begin{lemma}\label{enum}
Let $\varphi(x,y)$ denote a $\foc$ formula with two free variables, for any natural number $n$, 
the following sentence can be captured by $\foc$:

\textbf{There exists no less than $n$ ordered node pairs $(a,b)$ such that $(G,a,b)\models\varphi$}.

Let $c$ denote the graph classifier such that $c(G)=1$ iff $G$ satisfies the sentence above.
\end{lemma}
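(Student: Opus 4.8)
The plan is to exhibit an explicit $\foc$ sentence equivalent to $c$. Write $d(a) := |\{b\in V(G) : (G,a,b)\models\varphi\}|$ for the ``$\varphi$-out-degree'' of a node $a$; then the number of ordered pairs $(a,b)$ with $(G,a,b)\models\varphi$ is exactly $\sum_{a\in V(G)} d(a)$, so we must write a $\foc$ sentence asserting $\sum_a d(a)\ge n$. The first step is the elementary reduction $\sum_a d(a)\ge n \iff \sum_a \min(d(a),n)\ge n$: if some $d(a)\ge n$ both sides hold, and otherwise the two sums are literally equal. This replaces the unbounded quantity $d(a)$ by the bounded $\min(d(a),n)\in\{0,1,\dots,n\}$, which takes only finitely many values.

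Next I would introduce, for $1\le j\le n-1$, the one-free-variable $\foc$ formula $\psi_j(x) := \exists^{\ge j} y\,\varphi(x,y)\wedge\neg\exists^{\ge j+1} y\,\varphi(x,y)$, and $\psi_n(x) := \exists^{\ge n} y\,\varphi(x,y)$; these are $\foc$ formulas since $\varphi$ is. The decisive structural fact is that the node sets $\{a:\psi_j(a)\}$, $1\le j\le n$, are \emph{pairwise disjoint}, and a node $a$ in the $j$-th set has $d(a)=j$ when $j<n$ and $d(a)\ge n$ when $j=n$, hence contributes at least $j$ to $\sum_a d(a)$. Disjointness is exactly what makes counting additive: a simultaneous lower bound $\bigwedge_j \exists^{\ge r_j} x\,\psi_j(x)$ genuinely witnesses $\sum_j r_j$ \emph{distinct} nodes, $r_j$ of them with out-degree contributing at least $j$ — a conclusion that would \emph{fail} for the nested predicates $\exists^{\ge j} y\,\varphi(x,y)$, which is the pitfall to avoid.

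I then define $c$ by the sentence
\[
\Phi \;:=\; \bigvee_{\substack{(r_1,\dots,r_n)\in\{0,\dots,n\}^{\,n}\\ r_1+2r_2+\dots+n r_n\ \ge\ n}}\ \ \bigwedge_{j=1}^{n}\exists^{\ge r_j}x\,\psi_j(x),
\]
a finite disjunction of finite conjunctions of $\foc$ formulas, hence a $\foc$ formula (read as a formula in $x$ that ignores $x$, it is the required graph classifier). For soundness: if $\Phi$ holds through a tuple $(r_j)$, disjointness yields $\sum_j r_j$ distinct nodes with $r_j$ of them having out-degree at least $j$, so $\sum_a d(a)\ge \sum_j j\,r_j\ge n$. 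For completeness: put $R_j := |\{a:\psi_j(a)\}|$; by the reduction above $\sum_{j=1}^n j R_j = \sum_a \min(d(a),n)\ge n$, and setting $r_j := \min(R_j,n)$ one checks that $\sum_j j\,r_j\ge n$ still holds (capping some bucket with $j\ge 1$ at $n$ leaves that single term already $\ge n$; otherwise nothing changes), so this tuple indexes a disjunct of $\Phi$ all of whose conjuncts are satisfied.

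The only genuine obstacle is the one highlighted above: with only two variables one cannot naively add ``$\exists^{\ge k}$''-style counts over overlapping node sets, so the proof must route through the mutually exclusive, $n$-capped bucket predicates $\psi_j$. Once that replacement is in place, what remains is a routine finite enumeration over the compositions $(r_1,\dots,r_n)$ of integers bounded by $n$, together with the two elementary bookkeeping facts — $\sum_a d(a)\ge n \iff \sum_a\min(d(a),n)\ge n$, and $\min(\cdot,n)$-capping preserving the threshold $n$.
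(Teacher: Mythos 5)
Your proposal is correct and follows essentially the same route as the paper's proof: both partition nodes into mutually exclusive buckets via the formulas ``exactly $j$ partners'' for $j<n$ and ``at least $n$ partners'' for the top bucket, and then take a finite disjunction over tuples of per-bucket lower bounds whose weighted sum is at least $n$. The only cosmetic difference is the indexing of the finite enumeration (you cap each count at $n$, the paper restricts to tuples with weighted sum in $[n,2n]$), which does not change the argument.
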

\begin{proof}
The basic intuition is to define $m_i,1\leq i<n$ as the number of nodes $a$, such that there are \textbf{exactly} $i$ nodes $b$ that $\varphi(a,b)$ is \emph{true}. Specially, we define $m_n$ as the number of nodes $a$, such that there are \textbf{at least} $n$ nodes $b$ that $\varphi(a,b)$ is \emph{true}. Since $\sum_{i=1}^nim_i$ exactly counts the number of valid ordered pairs when $m_n=0$, and it guarantees the existence of at least $n$ 
 valid ordered pairs when $m_n>0$. It's not hard to see that for any graph $G$, $c(G)=1\Leftrightarrow \sum_{i=1}^nim_i\ge n$. Futhermore, fix a valid sequence $(m_1......m_n)$ such that $\sum_{i=1}^nim_i\ge n$, there has to be another sequence $(k_1......k_n)$ such that $n\leq\sum_{i=1}^nik_i\leq 2n$ and $k_i\leq m_i$ for all $1\leq i\leq n$. Therefore, We can enumerate all possibilities of valid $(k_1......k_n)$, and for each valid $(k_1......k_n)$ sequence, we judge whether there are \textbf{at least} $k_i$ such nodes $a$ for every $1\leq i\leq n$. 

Formally, $\varphi_i(x):=\exists^{[i]}y\varphi(x,y)$ can judge whether a node $a$ has exactly $i$ partners $b$ such that $\varphi(a,b)=1$, where $\exists^{[i]}y\varphi(x,y)$ denotes "there are exactly $i$ such nodes $y$" which is the abbreviation of formula $(\exists^{\ge i}y\varphi(x,y))\wedge(\neg\exists^{\ge i+1}y\varphi(x,y))$. The $\foc$ formula equivalent to our desired sentence $c$ is as follows:
\begin{equation}
\bigvee_{\sum_{i=1}^n n\leq ik_i\leq 2n}\biggl(\bigwedge_{i=1}^{n-1}\exists^{\ge k_i}x\Bigl(\exists^{[i]}y\varphi(x,y)\Bigl)\biggl)\wedge\biggl(\exists^{\ge k_n}x\Bigl(\exists^{\ge n}y\varphi(x,y)\Bigl)\biggl)
\end{equation}
This $\foc$ formula is equivalent to our desired classifier $c$. 
\end{proof}
With the \Cref{enum}, we now start to prove \Cref{theoremacrffoc}.
\begin{proof}
By \Cref{ACR-GNN}, it follows that R$^2$-GNNs $\circ F\subseteq\foc\circ F$. Therefore it suffices to show $\foc\circ F\subseteq \foc$.

By \Cref{rsfoceqfoc}, it suffices to show $\rsfoc\circ F\subseteq \foc$. Since $\wedge,\vee,\neg$ are common rules. We only need to show for any $\rsfoc$ formula of form $\varphi(x):=\exists^{\ge n}y(\varphi_{S}(x,y)\wedge\varphi'(y))$ under transformed predicate set $P'=P\cup\{aux1,aux2,primal\}$, there exists an $\foc$ formula $\varphi^1$ such that for any graph $G$ under $P$ and its node $v$, $(G,v)\models \varphi^1\Leftrightarrow (F(G),v)\models \varphi$.

In order to show this, we consider a stronger result:

For any such formula $\varphi$, including the existence of 
valid $\varphi^1$, we claim there also exists an $\foc$ formula  $\varphi^2$ with 
 two free variables such that the following holds: for any graph $G$ under $P$ and its added node $e_{ab}$ on $F(G)$, $(G,a,b)\models \varphi^2\Leftrightarrow (F(G),e_{ab})\models \varphi$. Call $\varphi^1,\varphi^2$ as first/second discriminant of $\varphi$.

 Now we need to prove the existence of $\varphi^1$ and $\varphi^2$.

We prove by induction over quantifier depth $m$ of $\varphi$, Since we only add a single unary predicate \primal\  in $P'$, any $\varphi(x)$ with quantifier depth $0$ can be rewritten as $(primal(x)\wedge\varphi^1(x))\vee(\neg primal(x)\wedge\varphi^2(x))$, where $\varphi^1(x),\varphi^2(x)$ are two formulas that only contain  predicates in $P$. Therefore, $\varphi^1$ can be naturally seen as the first discriminant of $\varphi$. Moreover, since $\varphi^2(x)$ always evaluates on non-primal nodes, it is equivalent to $\bot$ or $\top$ under $\neg primal(x)$ constraint. Therefore, the corresponding $\bot$ or $\top$ can be seen as the second discriminant, so our theorem trivially holds for $m=0$. Now assume it holds for $m\leq k$, we can assume quantifier depth of $\varphi=\exists^{\ge n}y(\varphi_{S}(x,y)\wedge\varphi'(y))$ is $m=k+1$.

 Consider the construction rules of transformation $F$, for any two primal nodes in $F(G)$, there is no relation between them, for a primal node $a$ and an added node $e_{ab}$, there is exactly a single relation of type \iaux\   between them. For a pair of added nodes $e_{ab},e_{ba}$, there are a bunch of relations from the original graph $G$ and an additional \eaux\  relation between them. Therefore, it suffices to 
 only consider three possible kinds of $S\subseteq P_2\cup\{aux1,aux2\}$ according to three cases mentiond above. Then, we will construct first/second determinants for each of these three cases. Since $\varphi'(y)$ has quantifier depth no more than $k$, by induction let $\widehat{\varphi}^1,\widehat{\varphi}^2$ be first/second discriminants of $\varphi'$ by induction.

 1. $S=\{\textbf{aux1}\}$:
 
  for primal node $a$, $\varphi(a)$ means the following: there exists at least $n$ nodes $b$, such that there is some relation between $a,b$ on $G$ and the added node $e_{ab}$ on $F(G)$ satisfies $\varphi'$. Therefore, the first determinant of $\varphi$ can be defined as following:
 \begin{equation}
 \varphi^1(x):=\exists^{\ge n}y,\Bigl(\bigvee_{r\in P_2}r(x,y)\Bigl)\wedge\widehat{\varphi}^2(x,y)
 \end{equation}
 for added nodes $e_{ab}$ on $F(G)$, $\varphi(e_{ab})$ means $a$ satisfies $\varphi'$, so the second determinant of $\varphi$ is the following:
 \begin{equation}
 n=1:\varphi^2(x,y):=\widehat{\varphi}^1(x),\ \ n>1: \varphi^2(x,y):=\bot
 \end{equation}
 2.$S=\{\textbf{aux2}\}\cup T,T\subseteq P_2,T\neq \emptyset$

 primal nodes don't have \eaux\  neighbors, so first determinant is trivially \emph{false}.
 \begin{equation}
 \varphi^1(x):=\bot
 \end{equation}
 For added node $e_{ab}$, $e_{ab}$ satisfies $\varphi$ iff there are exactly relations between $a,b$ of types in 
$T$, and $e_{ba}$ satisfies $\varphi'$. Therefore the second determinant is as follows, where $\varphi_T(x,y)$ is the \emph{relation-specification} formula under 
$P$ introduced in \Cref{rsfoc} 
 \begin{equation}
 n=1: \varphi^2(x,y):=\varphi_T(x,y)\wedge\widehat{\varphi}^2(y,x),\ n>1: \varphi^2(x,y):=\bot
 \end{equation}
 
 3. $S=\emptyset$

For a subset $S\subseteq P_2\cup\{aux1,aux2\}$, let $\varphi_S(x,y)$ denote the \emph{relation-specification} formula under $P_2\cup\{aux1,aux2\}$ defined in \Cref{rsfoc}.

 Since we consider on bounded graph class $\bgraph$, node number is bounded by a natural number $N$. For any node $a$ on $F(G)$, let $m$ denote the number of nodes $b$ on $F(G)$ such that $\varphi'(b)=1$, let $m_0$ denote the number of nodes $b$ on $F(G)$ such that $\varphi'(b)=1$ and there is a single relation \iaux, between $(a,b)$ on $F(G)$, (That is equivalent to $\varphi_{\{aux1\}}(a,b)=1$). For any $T\subseteq P_2$, let $m_T$ denote the number of nodes $b$ on $F(G)$ such that $\varphi'(b)=1$ and $a,b$ has exactly relations of types in $T\cup\{aux2\}$ on $F(G)$, (That is equivalent to $\varphi_{T\cup\{aux2\}}(a,b)=1$).

 Note that the number of nodes $b$ on $F(G)$ such that $a,b$ don't have any relation, (That is equivalent to $\varphi_{\emptyset}(a,b)=1$)  and  $\varphi'(b)=1$  equals to $m-m_0-\sum_{T\subseteq P_2}m_T$. Therefore, for any transformed graph $F(G)$ and its node $v$, $(F(G),v)\models\varphi\Leftrightarrow m-m_0-\sum_{T\subseteq P_2}m_T\ge n$. Since $|V(G)|\leq N$ for all $G$ in bounded graph class $\bgraph$, transformed graph $F(G)$ has  node number no more than $N^2$. Therefore, we can enumerate all possibilities of $m,m_0,m_T\leq N^2,T\subset P_2$ such that the above inequality holds, and for each possibility, we judge whehter there exists exactly such number of nodes for each corresponding parameter. Formally speaking, $\varphi$ can be rewritten as the following form: 
 \begin{equation}\label{phimm0}
 \tilde{\varphi}_{m,m_0}(x):=\bigl(\exists^{[m]}y\varphi'(y)\bigl)\wedge(\exists^{[m_0]}y(\varphi_{\{aux1\}}(x,y)\wedge\varphi'(y)))\bigl)
 \end{equation}
 \begin{equation}\label{phim}
 \varphi(x)\equiv\bigvee_{m-m_0-\sum_{T\subseteq P_2}\ge n,0\leq m,m_0,m_T\leq N^2}\biggl(\tilde{\varphi}_{m,m_0}(x)\wedge\bigl(\bigwedge_{T\subseteq P_2}\exists^{[m_T]}y,(\varphi_{T\cup\{aux2\}}(x,y)\wedge \varphi'(y))\bigl)\biggl)
 \end{equation}
 where $\exists^{[m]}y$ denotes there are exactly $m$ nodes $y$.
 
 Since first/second determinant can be constructed trivially under combination of $\wedge,\vee,\neg$, and we've shown how to construct determinants for formulas of form $\exists^{\ge n}y(\varphi_S(x,y)\wedge\varphi'(y))$ when $S=\{aux1\}$ and $S=\{aux2\}\cup T,T\subseteq P_2$ in the previous two cases. Therefore, in \Cref{phimm0} and \Cref{phim}, the only left part is the formula of form $\exists^{[m]}y\varphi'(y)$. The only remaining work is to show how to construct first/second determinants for formula in form $\varphi(x):=\exists^{\ge n} y\varphi'(y)$.

 Let $m_1$ denote the number of primal nodes $y$ that satisfies $\varphi'(y)$ and let $m_2$ denote the number of non-primal nodes $y$ that satisfies $\varphi'(y)$. It's not hard to see that for any node $v$ on $F(G)$, $(F(G),v)\models \varphi\Leftrightarrow m_1+m_2\ge n$. Therefore, $\varphi(x)=\exists^{\ge n} y\varphi'(y)$ that evaluates on $F(G)$ is equivalent to the following sentence that evaluates on $G$: \emph{``There exists two natural numbers $m_1,m_2$ such that the following conditions  hold: \textbf{1.}  $m_1+m_2=n$. \textbf{2.}   There are at least $m_1$ nodes $b$ on $G$ that satisfies $\widehat{\varphi}^1$, (equivalent to $(F(G),b)\models\varphi'$). 
 \textbf{3.} 
 There are at least $m_2$ ordered node pairs $a,b$ on $G$ such that $a,b$ has some relation and $(G,a,b)\models\widehat{\varphi}^2$, (equivalent to $(F(G),e_{ab})\models \varphi'$)."}

 Formally speaking, rewrite the sentence above as formula under $P$, we get the following construction for first/second determinants of $\varphi$.
 \begin{equation}
\varphi^1(x)=\varphi^2(x,y)=\bigvee_{m_1+m_2=n}\Bigl((\exists^{\ge m_1}y,\widehat{\varphi}^1(y))\wedge\overline{\varphi}_{m_2}\Bigl)
 \end{equation}
 where $\overline{\varphi}_{m_2}$ is the $\foc$ formula that expresses \emph{``There exists at least $m_2$ ordered node pairs $(a,b)$ such that $(G,a,b)\models \widehat{\varphi}^2(x,y)\wedge(\bigvee_{r\in P_2}r(x,y))$"}. We've shown the existence of $\overline{\varphi_{m_2}}$ in \Cref{enum}
\end{proof}
\section{Proof of \Cref{tgnnrelation}}
\begin{reptheorem}{tgnnrelation}
time-and-graph $\subsetneq$  R$^2$-TGNN $\circ F^T=$ time-then-graph.
\end{reptheorem}


For a graph $G$ with $n$ nodes, let $\mathbb{H}^V\in \mathbb{R}^{n\times d_v}$ denote node feature matrix, and $\mathbb{H}^E\in \mathbb{R}^{n\times n\times d_e}$ denote edge feature matrix, where $\mathbb{H}^E_{ij}$ denote the edge feature vector from $i$ to $j$.

First we need to define the GNN used in their frameworks. Note that for the comparison fairness, we add the the global readout to the node feature update as we do in R$^2$-GNNs. It recursively calculates the feature vector $\mathbb{H}^{V,(l)}_i $ of the node i at each layer $1\leq l\leq L$ as follows:

\begin{equation}\label{gaognn}
\mathbb{H}^{V,(l)}_i=u^{(l)}\Bigl(g^{(l)}(\mulsetl(\mathbb{H}_i^{V,(l-1)},\mathbb{H}_j^{V,(l-1)},\mathbb{H}_{ij}^E) \mid j\in \mathcal{N}(i)\mulsetr), r^{(l)}(\mulsetl\mathbb{H}_j^{V,(l-1)}|j\in V\mulsetr)\Bigl)
\end{equation}

where $\mathcal{N}(i)$ denotes the set of all nodes that adjacent to $i$, and $u^{(l)},g^{(l)},r^{(l)}$ are learnable functions. Note that here the GNN framework is a little different from the general definition defined in \Cref{acrgnn}. However, this framework  is hard to fully implement and many previous works implementing \emph{time-and-graph} or \emph{time-then-graph}  \cite{gao2022equivalence} (\cite{https://doi.org/10.48550/arxiv.1905.03994}, \cite{https://doi.org/10.48550/arxiv.1612.07659}, \cite{https://doi.org/10.48550/arxiv.1812.04206}, \cite{Manessi_2020}, \cite{https://doi.org/10.48550/arxiv.1812.09430},\cite{https://doi.org/10.48550/arxiv.2006.10637}) don't reach the expressiveness of \Cref{gaognn}. This definition is more for the theoretical analysis. In contrast, our definition for GNN in \Cref{acgnn} and \Cref{acrgnn} is more practical since it is fully captured by a bunch of commonly used models such as \cite{schlichtkrull2018modeling}. For notation simplicity, for a GNN $\mathcal{A}$, let $\mathbb{H}^{V,(L)}=\mathcal{A}(\mathbb{H}^V,\mathbb{H}^E)$ denote the node feature outputted by $\mathcal{A}$ using $\mathbb{H}^V,\mathbb{H}^E$ as initial features.

\begin{proposition}\label{gao22neq}
(\cite{gao2022equivalence}):time-and-graph $\subsetneq$ time-then-grahp
\end{proposition}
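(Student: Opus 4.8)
The plan is to prove the two halves of the claim separately: the inclusion time-and-graph $\subseteq$ time-then-graph, and a strict separation, time-then-graph $\not\subseteq$ time-and-graph, witnessed by a single pair of temporal graphs. Since the statement is attributed to \cite{gao2022equivalence}, the genuinely new point is only to check that the separation survives the global-readout augmentation of the update rule in \Cref{gaognn}; I will nonetheless reprove both halves so the section is self-contained.

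For time-and-graph $\subseteq$ time-then-graph I would unroll. Take a time-and-graph model with $L$ spatial layers per snapshot over $T$ timestamps. The node sequence model of a time-then-graph model may realise an arbitrary function of a finite input sequence, so I let it retain the whole sequence $(I_{G_1}(v),\dots,I_{G_T}(v))$ at each node, and similarly let the edge sequence model retain, for each ordered pair, the indicator sequence recording which relations hold at which timestamps. I then organise the static GNN of \Cref{gaognn} into $T$ blocks so that, in block $t$, it restricts $g^{(l)}$ to the incident edges flagged present at time $t$ (readable from the edge feature it receives), performs the $t$-th spatial pass of the original model with the inter-snapshot recurrence folded into the node update $u^{(l)}$, and reproduces the time-$t$ global readout through its own $r^{(l)}$. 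Since every component function is unrestricted, nothing is lost in this simulation.

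For the strict separation I would take, for $T=2$, a four-node configuration on $\{v,w,a,b\}$ with no unary predicates and a single relation: in $\mathbf{G}$ the edges are $\{v,a\}$ and $\{w,b\}$ at $t=1$, and $\{v,b\}$ and $\{w,a\}$ at $t=2$, while in $\mathbf{G}'$ the edges are $\{v,a\}$ and $\{w,b\}$ at both timestamps. Consider the node classifier that flags a node iff it is ever adjacent to at least two distinct nodes; it flags $v$ in $\mathbf{G}$ (neighbours $a$ and $b$) but not $v$ in $\mathbf{G}'$ (only $a$). Time-then-graph captures it: its static stage runs on the union graph, which is a $4$-cycle in $\mathbf{G}$ but two disjoint edges in $\mathbf{G}'$, so one layer of \Cref{gaognn} counting the neighbour multiset already separates $v$. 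Time-and-graph, even with global readout, cannot: I would show by induction over layers that after the whole $t=1$ pass every node carries the same embedding $h$ in both graphs -- $G_1$ is identical in the two graphs and all four of its nodes are equivalent under its automorphisms, so the one-hot-initialised embeddings stay equal through all $L$ layers and the shared readout is a function of $\mulsetl h,\dots,h\mulsetr$ -- and then the $t=2$ update of every node is the same function of $h$ in both graphs, since each node has neighbour multiset $\mulsetl h\mulsetr$ and a readout over $\mulsetl h,\dots,h\mulsetr$. Hence the evaluations at $v$ coincide and no time-and-graph model can implement the classifier.

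The main obstacle is the care required in the separation: the snapshots must be engineered so that a time-and-graph model finds them indistinguishable in a way robust to the added \emph{global} readout -- achieved here by making every snapshot a disjoint union of equal edges with a symmetric first snapshot, so that every intermediate node-embedding multiset stays constant across the two graphs -- while the graphs still differ in the union/edge-history structure that the time-then stage can see. The $\subseteq$ direction is conceptually routine but needs the flag-and-gate bookkeeping written out carefully so that a single deep static GNN faithfully emulates $T$ interleaved spatial passes together with their per-step readouts.
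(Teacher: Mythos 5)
Your proof is correct, but it takes a genuinely different route from the paper for the simple reason that the paper does not prove this proposition at all: it imports it verbatim from Theorem~1 of \cite{gao2022equivalence}, remarking only in passing that the global readout has been added to the update rule of \Cref{gaognn} ``for comparison fairness'' and implicitly assuming the separation survives that augmentation. Your write-up supplies exactly the missing content. The inclusion direction via unrolling (retain the full per-node and per-edge indicator sequences, then gate each of the $T$ blocks of the static GNN by the timestamps read off the edge features) is the standard argument and matches what \cite{gao2022equivalence} do. The separation is where you add value: your four-node, two-snapshot pair with identical, vertex-transitive snapshots but different unions is engineered so that every intermediate node-embedding multiset is the constant multiset $\mulsetl h,h,h,h\mulsetr$ in both graphs, which is precisely what makes the argument robust to the global readout --- the one step at which the bare citation could conceivably fail to transfer to the augmented framework. (The same robustness issue arises for the $\subseteq$ direction, and your simulation handles it since $r^{(l)}$ can reproduce each per-step readout from the block-$t$ embeddings.) In short, the paper buys the result by citation at the cost of an unverified hypothesis about the readout; you buy a self-contained and slightly stronger statement at the cost of two pages of routine but necessary bookkeeping. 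The only caveat is that a fully formal version of your separation argument would need to fix the precise time-and-graph update rule (which this paper never states, deferring to \cite{gao2022equivalence}), but your symmetry argument is insensitive to those details since it only uses permutation equivariance and the fact that all inputs to the $t=2$ update coincide across the two graphs.
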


The above proposition is from \textbf{Theorem 1} of \cite{gao2022equivalence}. Therefore, in order to complete the proof of \Cref{tgnnrelation}, we only need to prove R$^2$-TGNN $\circ F^T=$ \emph{time-then-graph}.

Let $G=\{G_1, \dots, G_T\}$ denote a temporal knowledge graph, and $\mathbb{A}^t\in \mathbb{R}^{n\times |P_1|}, \mathbb{E}^t\in \mathbb{R}^{n\times n\times |P_2|}, 1\leq t\leq T$ denonte one-hot encoding feature of unary facts and binary facts on timestamp $t$, where $P_1,P_2$ are unary and binary predicate sets.

The updating rule of a \emph{time-then-graph} model can be generalized as follows:

\begin{equation}\label{ttg1}
\forall i\in V, \ \mathbb{H}_i^V=\textbf{RNN}([\mathbb{A}^1_i......\mathbb{A}^T_i])
\end{equation}
\begin{equation}\label{ttg2}
\forall i,j\in V, \ \mathbb{H}_{i,j}^E=\textbf{RNN}([\mathbb{E}^1_{i,j}......\mathbb{E}^T_{i,j}])
\end{equation}
\begin{equation}\label{ttg3}
\ \fset:=\mathcal{A}(\mathbb{H}^V,\mathbb{H}^E)
\end{equation}
where $\mathcal{A}$ is a GNN defined above, \textbf{RNN} is an arbitrary Recurrent Neural Network. $\fset\in \mathbb{R}^{n\times d}$ is the final node feature output of \emph{time-then-graph}.

First we need to prove \emph{time-then-graph} $\subseteq$ R$^2$-TGNN$\circ F^T$. That is, for any \emph{time-then-graph} model, we want to construct an equivalent R$^2$-TGNN $\mathcal{A}'$ to capture it on transformed graph. We can use nodes added after transformation to store the edge feature $\mathbb{H}^E$, and use primal nodes to store the node feature $\mathbb{H}^V$. By simulating \textbf{RNN} through choosing specific functions in R$^2$-TGNN, we can easily construct a R$^2$-TGNN $\mathcal{A}'$ such that for any node $i$, and any node pair $i,j$ with at least one edge in history, $\feature_i=\mathbb{H}_i^V$ and $\feature_{e_{ij}}=\mathbb{H}_{i,j}^E$ hold, where $\feature_i$ and $\feature_{e_{ij}}$ are features of corresponding primal node $i$ and added node $e_{ij}$ outputted by $\mathcal{A}'$. 

Note that $\mathcal{A}'$ is a R$^2$-TGNN, it can be represented as $\mathcal{A}'_1......\mathcal{A}'_T$, where each $\mathcal{A}'_t, 1\leq t\leq T$ is a R$^2$-GNN. $\mathcal{A}'$ has simulated work of \textbf{RNN}, so the remaining work is to simulate $\mathcal{A}(\mathbb{H}^V,\mathbb{H}^E)$. We do the simulation over induction on layer number $L$ of $\mathcal{A}$.

When $L=0$, output of $\mathcal{A}$ is exactly $\mathbb{H}^V$, which has been simulated by $\mathcal{A}'$ above.

Suppose $L=k+1$, let $\tilde{\mathcal{A}}$ denote R$^2$-GNN extracted from $\mathcal{A}$ but without the last layer $k+1$. By induction, we can construct a  R$^2$-TGNN $\tilde{\mathcal{A}}'$ that simulates $\tilde{\mathcal{A}}(\mathbb{H}^V,\mathbb{H}^E)$. Then we need to append three layers to $\tilde{\mathcal{A}}'$ to simulate the last layer of $\mathcal{A}$.

Let $u^{(L)},g^{(L)},r^{(L)}$ denote parameters of the last layer of $\mathcal{A}$.  Using notations in \Cref{acrgnn}, let $\{C^{(l)},(A_j^{(l)})_{j=1}^{|P_2|},A_{aux1}^{(l)},A_{\eaux}^{(l)},R^{(l)}\}_{l=1}^3$ denote parameters of the three layers appended to $\tilde{\mathcal{A}}'_T$. They are defined as follows:

First, we can choose specific function in the first two added layers, such that the following holds:

\textbf{1.} For any added node $e_{ij}$, feature outputted by the new model is $\feature_{e_{ij}}^{(2)}=[\mathbb{H}_{ij}^E,\feature'_i,\feature'_j]$, where $\feature^{(2)}$ denotes the feature outputted by the second added layer, and $\feature'_i,\feature'_j$ are node features of $i,j$ outputted by $\tilde{\mathcal{A}}'$. For a feature $\feature$ of added node of this form, we define $\feature_{0},\feature_1,\feature_2$ as corresponding feature slices where $\mathbb{H}_{ij}^E,\feature'_i,\feature'_j$ 
 have been stored. 
 
 \textbf{2.} For any primal node, its feature $\feature$ 
 only stores $\feature'_i$ in $\feature_1$, and $\feature_0,\feature_2$ are all slices of dummy bits.

 Let $\fset$ be a multiset of features that represents function input. For the last added layer, we can choose specific functions as follows:

\begin{equation}
R^{(3)}(\fset):=r^{(L)}(\mulsetl \feature_1|\feature\in\fset,\textbf{primal}(\feature)\mulsetr)
\end{equation}
\begin{equation}
A_{aux1}^{(3)}(\fset):=g^{(L)}(\mulsetl( \feature_1,\feature_2,\feature_0)|\feature\in\fset\mulsetr)
\end{equation}
\begin{equation}
C^{(3)}(\feature_{aux1},\feature_g):=u^{(L)}(\feature_{aux1},\feature_g)
\end{equation}

where $\feature_{aux1},\feature_g $ are outputs of $R^{(3)}$ and $A_{aux1}^{(3)}$, and all useless inputs of $C^{(3)}$ are omitted. Comparing this construction with \Cref{gaognn}. It's east to see that after the last layer appended, we can construct an equivalent R$^2$-TGNN $\mathcal{A}'$ that  captures $\mathcal{A}$ on transformed graph. By inductive argument, we prove \emph{time-then-graph} $\subseteq$ R $^2$-TGNN $\circ F^T$.

Then we need to show R$^2$-TGNN $\circ F^T\subseteq$ \emph{time-then-graph}.

In \Cref{ttransformation}, we will prove R$^2$-TGNN $\circ F^T=$ R$^2$-GNN $\circ F\circ H$. Its proof doesn't dependent on \Cref{tgnnrelation}, so let's assume it's true for now. Then, instead of proving R$^2$-TGNN $\circ F^T$, it's sufficient to show R$^2$-GNN $\circ F\circ H\subseteq$ \emph{time-then-graph}.

Let $P^T_1, P^T_2$ denote the set of temporalized unary and binary predicate sets defined in \Cref{temporalize}. Based on \emph{most expressive ability} of Recurrent Neural Networks shown in \cite{10.1145/130385.130432}, we can get a \emph{most expressive representation} for unary and binary fact sequences through \textbf{RNN}. A \emph{most expressive} RNN representation function is
always injective, thus there exists a decoder function translating most-expressive representations
back to raw sequences. Therefore, we are able to find an appropriate \textbf{RNN} such that its output features $\mathbb{H}^V,\mathbb{H}^E$  in \Cref{ttg1}, \Cref{ttg2} contain all information needed to reconstruct all temporalized unary and binary facts related to the corresponding nodes.

For any R$^2$-GNN $\mathcal{A}$ on transformed collpsed temporal knowledge graph, we want to construct an equivalent \emph{time-then-graph} model $\{\textbf{RNN},\mathcal{A}'\}$ to capture $\mathcal{A}$. In order to show the existence of the \emph{time-then-graph} model, we will do an inductive construction over layer number $L$ of $\mathcal{A}$. Here in order to build inductive argument, we will consider a following stronger result and aim to prove it: In additional to the existence of $\mathcal{A}'$, we claim there also exists a function $f_{\mathcal{A}}$ with the following property: For any two nodes $a,b$ with at least one edge,  $f_{\mathcal{A}}(\feature_a',\feature_b',\mathbb{H}_{ab}^{E})=\feature_{e_{ab}}$, where  $\feature_a',\feature_b',\mathbb{H}_{ab}^{E}$ are  features of $a$, $b$ and edge information between $a,b$ outputted by 
$\mathcal{A}'$, and $\feature_{e_{ab}}$ is the feature of added node $e_{ab}$ outputted by $\mathcal{A}\circ F\circ H$. It suffices to show that there exists such function $f_{\mathcal{A}}$ as well as a \emph{time-then-graph} model $\{\textbf{RNN},\mathcal{A}'\}$ such that the following conditions hold:

For any graph $G$ and its node $a,b\in V(G)$, 

1. $\mathbb{H}_a^{V,(l)}=[\feature_a,Enc(\mulsetl\feature_{e_{aj}}|j\in \mathcal{N}(a)\mulsetr)]$.

2.If there is at least one edge between $a,b$ in history, $f_{\mathcal{A}}(\mathbb{H}_a^{V,(l)},\mathbb{H}_b^{V,(l)},\mathbb{H}_{ab}^E)=\feature_{e_{ab}}$. Otherwise, $f_{\mathcal{A}}(\mathbb{H}_a^{V,(l)},\mathbb{H}_b^{V,(l)},\mathbb{H}_{ab}^E)=\textbf{0}$

where $\mathbb{H}_a^{V,(l)},\mathbb{H}_b^{V,(l)}$ are node features outputted by $\mathcal{A}'$, while $\feature_a,\feature_{e_{ab}}$ are node features outputted by $\mathcal{A}$ on transformed collpased graph. $Enc(\fset)$ is some injective encoding that stores all information of multiset $\fset$. For a node feature $\mathbb{H}_a^{V,(l)}$ of above form, let $\mathbb{H}_{a,0}^{V,(l)}:=\feature_a, \mathbb{H}_{a,1}^{V,(l)}=Enc(\mulsetl\feature_{e_{aj}}|j\in \mathcal{N}(a)\mulsetr)$ denote two slices that store independent information in different positions.

For the base case $L=0$. the node feature only depends on temporalized unary facts related to the corresponding node. Since by \textbf{RNN} we can use \emph{most expressiveness representation} to capture all unary facts. A specific \textbf{RNN} already captures $\mathcal{A}$ when $L=0$. Moreover, there is no added node $e_{ab}$ that relates to any unary fact, so a constant function 
 already satisfies the condition of $f_{\mathcal{A}}$ when $L=0$. Therefore, our result holds for $L=0$

Assume $L=k+1$, let $\widehat{\mathcal{A}}$ denote the model generated by the first $k$ layers of $\mathcal{A}$. By induction, there is \emph{time-then-graph} model $\widehat{\mathcal{A}}'$ and function $f_{\widehat{\mathcal{A}}'}$ that captures output of $\widehat{\mathcal{A}}'$ on transformed collapsed graph. We can append a layer to $\widehat{\mathcal{A}}'$ to build $\mathcal{A}'$ that simulates $\mathcal{A}$. Let 
$\{C^{(L)},(A_j^{(L)})_{j=1}^{T|P_2|},A_{aux1}^{(L)},A_{\eaux}^{(L)},R^{(L)}\}$ denote the building blocks of layer $L$ of $\mathcal{A}$, and let $u^{*},g^{*},r^{*}$ denote functions used in the layer that will be appended to $\widehat{\mathcal{A}}'$. They are defined below:
\begin{equation}
g^*(\mulsetl(\mathbb{H}_i^{V,(l-1)},\mathbb{H}_j^{V,(l-1)},\mathbb{H}_{ij}^E|j\in\mathcal{N}(i))\mulsetr):=A_{aux1}^{(L)}(\mulsetl f_{\widehat{\mathcal{A}}'}(\mathbb{H}_{i}^{V,(l-1)},\mathbb{H}_{j}^{V,(l-1)},\mathbb{H}_{ij}^E)|j\in\mathcal{N}(i)\mulsetr)
\end{equation}
\begin{equation}\label{rott}
r^*(\mulsetl\mathbb{H}_{j}^{V,(l-1)}|j\in V(G)\mulsetr)=R^{(L)}\Bigl(\mulsetl\mathbb{H}_{j,0}^{V,(l-1)}|j\in V(G)\mulsetr\cup(\bigcup_{j\in V(G)}Dec( \mathbb{H}_{j,1}^{V,(l-1)}))\Bigl)
\end{equation}
\begin{equation}
u^*(\feature_g,\feature_r)=C^{(L)}(\feature_g,\feature_r)
\end{equation}
where $\feature_g,\feature_r$ are outputs of $g^*$ and $r^*$. $Dec(\fset)$ is a decoder function that do inverse mapping of $Enc(\fset)$ mentioned above, so $Dec( \mathbb{H}_{j,1}^{V,(l-1)})$ is actually $\mulsetl\feature_{e_{aj}}|j\in \mathcal{N}(a)\mulsetr$. Note that 
primal nodes in transformed graph only has type \iaux-  neighbors, so two inputs $\feature_g,\feature_r$, one for \iaux\  aggregation output 
 and one for global readout are already enough for computing the value. Comparing the three rules above with \Cref{acrgnn}, we can see that our new model $\mathcal{A}'$ perfectly captures $\mathcal{A}$.

We've captured $\mathcal{A}$, and the remaining work is to construct $f_{\mathcal{A}}$ defined above to complete inductive assumption. We can just choose a function that simulates message passing between pairs of added nodes $e_{ab}$ and $e_{ba}$ as well as message passing between $e_{ab}$ and $a$, and that function satisfies the condition for $f_{\mathcal{A}}$. Formally speaking, $f_{\mathcal{A}}$ can be defined below:

\begin{equation}
f_{\mathcal{A}}(\mathbb{H}_i^{V,(l)},\mathbb{H}_j^{V,(l)},\mathbb{H}_{ij}^{E}):=\textbf{Sim}_{\mathcal{A}_L}(\mathbb{H}_i^{V,(l-1)}, \mathbb{H}_g^{(l-1)},g_{ij},g_{ji}, \mathbb{H}_{ij}^{E})
\end{equation}
\begin{equation}
g_{ij}:=f_{\widehat{\mathcal{A}}'}(\mathbb{H}_i^{V,(l-1)},\mathbb{H}_j^{V,(l-1)},\mathbb{H}_{ij}^{E}), \mathbb{H}_g^{(l-1)}:=\mulsetl\mathbb{H}_i^{V,(l-1)}|i\in V(G)\mulsetr
\end{equation}
Let's explain this equation, $\textbf{Sim}_{\mathcal{A}_L}(a,g,s,b,e)$ is a local simulation function which simulates single-iteration message passing in the following scenario:

Suppose there is a graph $H$ with three constants $V(H)=\{a,e_{ab},e_{ba}\}$. There is an \iaux\  edge between $a$ and $e_{ab}$, an \eaux\  edge between $e_{ab}$ and $e_{ba}$, and additional edges of different types between $e_{ab}$ and $e_{ba}$. The description of additional edges can be founded in $e$. Initial node features of $a,e_{ab},e_{ba}$ are set to $a,s,b$ respectively. and the global readout output is $g$. Finally, run $L$-th layer of $\mathcal{A}$ on $H$, and $\textbf{Sim}_{\mathcal{A}_L}$ is node feature of $e_{ab}$ outputted by $\mathcal{A}_L$.

Note that if we use appropriate injective encoding or just use concatenation technic, $\mathbb{H}_g^{(l-1)}, \mathbb{H}_i^{V,(l-1)},\mathbb{H}_j^{V,(l-1)}$ can be accessed from $\mathbb{H}_i^{V,(l)},\mathbb{H}_i^{V,(l)}$. Therefore the above definition for $f_{\mathcal{A}}$ is well-defined. Moreover, in the above explanation we can see that $f_{\mathcal{A}}(\mathbb{H}_i^{V,(l-1)},\mathbb{H}_j^{V,(l-1)},\mathbb{H}_{ij}^{E})$ is exactly node feature of $e_{ij}$ outputted by $\mathcal{A}$ on the transformed collapsed graph, so our proof finishes.
\section{Proof of \Cref{ttransformation}}
\begin{reptheorem}{ttransformation}
R$^2$-TGNN $\circ F^T=$  R$^2$-TGNN $\circ F \circ H$.
\end{reptheorem}

First, we recall the definition for R$^2$-TGNN as in \Cref{acrtgnn}:
\begin{equation}\label{acrtgnn}
\textbf{x}_v^t=\mathcal{A}_t\biggl(G_t,v,\textbf{y}^{t}\biggl) \text{~~~~~~where~~~~~}\textbf{y}_v^{t}=[ I_{G_t}(v):\textbf{x}_v^{t-1}], \forall v\in V(G_t) 
\end{equation}

We say a R$^2$-TGNN is \emph{homogeneous} if $\mathcal{A}_1, \dots, \mathcal{A}_T$ share the same parameters. In particular, we first prove \Cref{homo}, namely, \emph{homogeneous} R$^2$-TGNN and R$^2$-TGNN~(where paramters in $\mathcal{A}_1, \dots, \mathcal{A}_T$ may differ) have the same expressiveness.

\begin{lemma}\label{homo}
homogenous R$^2$-TGNN $=$ R$^2$-TGNN
\end{lemma}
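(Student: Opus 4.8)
The plan is to prove the two inclusions separately. The inclusion homogeneous R$^2$-TGNN $\subseteq$ R$^2$-TGNN is immediate, since a homogeneous R$^2$-TGNN is by definition a special case of an R$^2$-TGNN (just take $\mathcal{A}_1 = \cdots = \mathcal{A}_T$). The substance of the lemma is the reverse inclusion: given an arbitrary R$^2$-TGNN $\{\mathcal{A}_t\}_{t=1}^T$ whose per-timestamp models may differ, I need to build a single R$^2$-GNN model $\mathcal{B}$ which, when run homogeneously at every timestamp, reproduces the behaviour of the original sequence.

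The key idea is \emph{time-stamping via padding and block-diagonal construction}. First, since $T$ is finite and fixed, I would enlarge the feature dimension so that a node's feature vector at the output of step $t$ carries an explicit encoding of the current timestamp $t$ (e.g. a one-hot block of length $T$, or simply a counter that is incremented by one at each application of $\mathcal{B}$; the increment is easy to implement inside the combination function since all nodes are updated synchronously). Concretely, I would have $\mathcal{B}$ read off the timestamp index from the incoming feature $\mathbf{y}_v^t$, and then behave exactly like $\mathcal{A}_t$ on the block of coordinates reserved for ``real'' features. The combination, aggregation, and readout functions of $\mathcal{B}$ can be defined by case analysis on the timestamp counter: for each value $t \in \{1,\dots,T\}$, $\mathcal{B}$ applies the corresponding function of $\mathcal{A}_t$ (and there are only finitely many cases, so this is a legitimate single set of functions). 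One must also handle the layer structure: if the $\mathcal{A}_t$ have different depths $L_t$, pad each to $L := \max_t L_t$ by appending identity layers, so that $\mathcal{B}$ has a single uniform depth $L$; then $\mathcal{B}$ simulates one full ``run'' of $\mathcal{A}_t$ within its $L$ layers. Finally, $\mathcal{B}$ updates the timestamp counter in its last layer so that the next application of $\mathcal{B}$ (i.e.\ timestamp $t+1$) will dispatch to $\mathcal{A}_{t+1}$. With the initialization feature $I_{G_t}(v)$ concatenated in as prescribed by \Cref{acrtgnn}, the invariant ``after the homogeneous application of $\mathcal{B}$ at step $t$, the real-feature block equals $\mathbf{x}_v^t$ as computed by $\mathcal{A}_t$'' is maintained, which I would verify by induction on $t$.

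The main obstacle I anticipate is bookkeeping rather than conceptual difficulty: one must make sure the timestamp-dispatch mechanism is realizable inside the \emph{same} functions at every step (it is, because $T$ is finite, so ``if the counter reads $t$ then do $\dots$'' is just a finite case split definable as one function), and that the initialization-concatenation in \Cref{acrtgnn} does not clobber the timestamp counter — this requires carefully partitioning the feature coordinates into a ``counter'' block, a ``current real feature'' block, and scratch space, and checking that $I_{G_t}(v)$ only ever writes into the designated initialization slots. A secondary subtlety is depth-padding when the $L_t$ differ; inserting identity layers (combination function projecting onto the previous feature, zero aggregations and readout) is routine but must be stated. Once these conventions are fixed, the inductive verification that $\mathcal{B}$ run homogeneously equals $\{\mathcal{A}_t\}$ is straightforward and mirrors the layer-by-layer simulation arguments already used in the proofs of \Cref{theoremacracrf} and \Cref{theoremfocacrf}.
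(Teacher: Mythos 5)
Your plan is correct, but it takes a genuinely different route from the paper. You simulate the heterogeneous sequence $\{\mathcal{A}_t\}_{t=1}^T$ by storing an explicit timestamp counter in a reserved feature block and having the single homogeneous model dispatch, by a finite case split on that counter, to the functions of the appropriate $\mathcal{A}_t$; the counter is incremented in the last layer of each application. The paper instead uses a counter-free, ``oblivious'' construction: it widens the feature dimension to $Td$ and runs all $T$ models in parallel on disjoint slices $\mathbf{x}^{(1)},\dots,\mathbf{x}^{(T)}$, with the first layer of each application reading slice $t-1$ of the previous timestamp's output into slice $t$, so the genuine computation marches diagonally through the slices and slice $T$ at time $T$ holds exactly the output of $\mathcal{A}_1,\dots,\mathcal{A}_T$ applied in order. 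Both arguments rest on the same two facts (the horizon $T$ is finite, and the aggregation/combination/readout functions of an R$^2$-GNN are unrestricted), and both handle unequal depths by padding with identity layers. Your approach is more economical in dimension (roughly $d+T$ versus $Td$) and conceptually transparent, at the cost of putting the case analysis inside the functions; the paper's approach applies every $\mathcal{A}_t$ uniformly at every step and never needs to know the time. One bookkeeping point you should make explicit when fleshing this out: an aggregation function applied to an \emph{empty} neighbour multiset cannot read the counter from its input, and the values $A^{(l)}_{t,j}(\emptyset)$ may differ across $t$; this is harmless provided you either route the dispatch entirely through the combination function (which always sees the node's own feature, hence the counter, and can receive an injective encoding of each raw multiset from the aggregations) or otherwise argue that the empty case is handled consistently. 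With that convention fixed, your inductive invariant goes through.
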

\begin{proof}
The forward direction \emph{homogeneous} R$^2$-TGNN$\subseteq$ R$^2$-TGNN trivially holds. It suffices to prove the backward direction.

Let $\mathcal{A}:\{\mathcal{A}_t\}_{t=1}^T$ denote a R$^2$-TGNN. Without loss of generality, we can assume all models in each timestamps have the same layer number $L$. Then for each $1\leq t\leq T$, we can assume all $\mathcal{A}_t$ can be represented by $\{C_t^{(l)},(A_{t,j}^{(l)})_{j=1}^{|P_2|},R_t^{(l)}\}_{l=1}^L$. Futhormore, without loss of generality, we can assume all output dimensions for $A_{t,j}^{(l)},R_t^{(l)}$ and $C_t^{(l)}$ are $d$. As for input dimension, all of these functions also have input dimension $d$ for $2\leq l\leq L$. Specially, by updating rules of R$^2$-TGNN \Cref{acrtgnn}, in the initialization stage of each timestamp we have to concat a feature with length $|P_1|$ to output of the former timestamp, so the input dimension for $A_{t,j}^{(1)},R_t^{(1)},C_t^{(1)}$ is $d+|P_1|$.

We can construct an equivalent \emph{homogeneous} R$^2$-TGNN with $L$ layers represented by $\{C^{*,(l)},(A_j^{*,(l)})_{j=1}^{|P_2|},R^{*,(l)}\}_{l=1}^L$. For $2\leq l\leq L$, $C^{*,(l)}A_j^{*,(l)},R^{*,(l)}$ use output and input feature dimension $d'=Td$. Similar to the discussion about feature dimension above, since we need to concat the unary predicates information before each timestamp, for layer $l=1$, $C^{*,(1)},A_j^{*,(1)},R^{*,(1)}$ have input dimension $d'+|P_1|$ and output dimension $d'$. For dimension alignment, $\feature_v^{0}$ used in \Cref{acrtgnn} is defined as zero-vector with length $d'$.

Next let's define some symbols for notation simplicity. For a feature vector $\feature$, let $\feature[i,j]$ denotes the slice of $\feature$ in dimension $[i,j]$. By the discussion above, in the following construction process we will only need feature $\feature$ with dimension $d'$ or $d'+|P_1|$. When $\feature$ has dimension $d'$, $\feature^{(i)}$ denotes $\feature[(i-1)d+1,id]$, otherwise it denotes $\feature[|P_1|+(i-1)d+1,|P_1|+id]$ . Let $[\feature_1......\feature_T]$ or $[\feature_t]_{t=1}^T$ denotes the concatenation of a sequence of feature $\feature_1......\feature_T$,  and $[\feature]^n$ denote concatenation of $n$ copies of $\feature$, $\textbf{0}^n$ denotes zero vectors of length $n$. Furthermore. Let $\fset$ denotes a multiset of $\feature$. Follows the updating rules defined in \Cref{acrgnn}, for all 
$1\leq j\leq |P_2|,1\leq l\leq L, A_j^{*,(l)},R^{*,(l)}$ should get input of form $\fset$, and the combination function $C^{*,(l)}$ should get input of form $(\feature_0,(\feature_j)_{j=1}^{|P_2|},\feature_g)$, where $\feature_0$ is from the node itself, $(\feature_j)_{j=1}^{|P_2|}$ are from aggregation functions $(A_j^{*,(l)})_{j=1}^{|P_2|}$ and $\feature_g$ is from the global readout $R^{*,(l)}$. The dimension of $\feature$ or $\fset$ should match the input dimension of corresponding function.  For all $1\leq l\leq L$, parameters in layer $l$ for the new model are defined below

\begin{equation}
l=1: C^{*,(l)}(\feature_0,(\feature_j)_{j=1}^{|P_2|},\feature_g):=[C_t^{(l)}([\feature_0[1,|P_1|],\feature_0^{(t-1)}],(\feature_j^{(t)})_{j=1}^{|P_2|},\feature_g^{(t)})]_{t=1}^T
\end{equation}
\begin{equation}
2\leq l\leq L: C^{*,(l)}(\feature_0,(\feature_j)_{j=1}^{|P_2|},\feature_g):=[C_t^{(l)}(\feature_0^{(t)},(\feature_j^{(t)})_{j=1}^{|P_2|},\feature_g^{(t)})]_{t=1}^T
\end{equation}
\begin{equation}
\forall j\in[K],l=1:  A_j^{*,(l)}(\fset)=[A_{t,j}^{(l)}(\mulsetl[\feature[1,|P_1|],\feature^{(t-1)}]|\feature\in\fset\mulsetr)]_{t=1}^T
\end{equation}
\begin{equation}
l=1:  R^{*,(l)}(\fset)=[R_{t}^{(l)}(\mulsetl[\feature[1,|P_1|],\feature^{(t-1)}]|\feature\in\fset\mulsetr)]_{t=1}^T
\end{equation}
\begin{equation}
\forall j\in[K],2\leq l\leq L:  A_j^{*,(l)}(\fset)=[A_{t,j}^{(l)}(\mulsetl\feature^{(t)}|\feature\in\fset\mulsetr)]_{t=1}^T
\end{equation}
\begin{equation}
2\leq l\leq L:  R^{*,(l)}(\fset)=[R_{t}^{(l)}(\mulsetl\feature^{(t)}|\feature\in\fset\mulsetr)]_{t=1}^T
\end{equation}
The core trick is to use 
$T$ disjoint slices $\feature^{(1)}......\feature^{(T)}$ to simulate $T$ different models $\mathcal{A}_1......\mathcal{A}_T$ at the same time, Since these slices are isolated from each other, a proper construction above can be found. The only speciality is that in layer $l=1$, we have to incorporate the unary predicate information $\feature[1,|P_1|]$ into each slice. By the construction above, we can see that for any node $v$, $\feature_v^{(T)}$ is exactly the its feature outputted by $\mathcal{A}$. Therefore, we finally construct an \emph{homogeneous} R$^2$-TGNN equivalent with $\mathcal{A}$. 
\end{proof}




Now, we start to prove \Cref{ttransformation}.
\begin{reptheorem}{ttransformation}
R$^2$-TGNNs $\circ F^T=$ R$^2$-GNNs $\circ F\circ H$ on any universal graph class $\ggraph$. 
\end{reptheorem}


\begin{proof}
Since R$^2$-TGNN $\circ F^T$ only uses a part of predicates of $P'=F(H(P))$ in each timestamp, the forward direction R$^2$-TGNN $\circ F^T\subseteq$ R$^2$-GNN $\circ F\circ H$ trivially holds.

For any R$^2$-GNN $\mathcal{A}$ under $P'$, we want to construct an R$^2$-TGNN $\mathcal{A}'$ under $F^T(P)$ such that for any temporal knowledge graph $G$, $\mathcal{A}'$ outputs the same feature vectors as $\mathcal{A}$ on $F^T(G)$. We can assume $\mathcal{A}$ is represented as $(C^{(l)},(A_{j}^{(l)})_{j=1}^{K},A_{aux1}^{(l)},A_{aux2}^{(l)},R^{(l)})_{l=1}^L$, where $K=T|P_2|$.

First, by setting feature dimension to be $d'=T|P|+3$. We can construct an R$^2$-TGNN $\mathcal{A}'$ whose output feature stores all facts in $F(H(G))$ for any graph $G$. Formally speaking, $\mathcal{A}'$ should satisfy the following condition:

For any primal node $a$,  its feature outputted by $\mathcal{A}'\circ F^T$ should store all unary facts of form $A_i(a),A_i\in T|P_1|$ or $primal(a)$ on $F(H(G))$. For any non-primal node $e_{ab}$, its feature outputted by $\mathcal{A}'\circ F^T$ should store all binary facts of form $r_i(a,b),r_i\in T|P_2|$ or $r_{aux1}(a,b),r_{aux2}(a,b)$ where $b$ is another node on $F(H(G))$.

The $\mathcal{A}'$ is easy to construct since we have enough dimension size to store different predicates independently, and these facts are completely encoded into the initial features of corresponding timestamp. Let $(\mathcal{A}'_1......\mathcal{A}'_T)$ denote $\mathcal{A}'$.

Next, in order to simulate $\mathcal{A}$, we need to append some layers to $\mathcal{A}'_T$. Let $L$ denote the layer number of $\mathcal{A}$, we need to append $L$ layers represented as $(C^{*,(l)},(A_{j}^{*,(l)})_{j=1}^{|P_2|},A_{aux1}^{*,(l)},A_{aux2}^{*,(l)},R^{*,(l)})_{l=1}^L$

Since we have enough  information encoded in features, we can start to simulate $\mathcal{A}$. Since neighbor distribution of primal nodes don't change between $F^T(G)_T$ and $F(H(G))$,  
 it's easy to simulate all messages passed to primal nodes as destinations by $A_{aux1}^{*,(l)}$. For messages passed to non-primal node $e_{ab}$ as destination, it can be divided into messages from $a$ and messages from $e_{ba}$. The first class of messages is easy to simulate since the $aux1$ edge between $e_{ab}$ and $a$ is the same on $F^T(G)_T$ and $F(H(G))$.

For the second class of messages, since edges of type $r_i,1\leq i\leq T|P_2|$ may be lost in $F^T(G)_T$, we have to simulate these messages only by the unchanged edge of type $\textbf{aux2}$. It can be realized by following construction:
\begin{equation}\label{sim}
1\leq l\leq L,A_{aux2}^{*,(l)}(\fset)=[[A_j'^{,(l)}(\fset))]_{j=1}^{K},A_{aux2}^{(l)}(\fset)]
\end{equation}
where $K=T|P_2|, A_j'^{,(l)}(\fset):=A_j^{(l)}(\fset)$ if and only if $e_{ba}$ has neighbor $r_j$ on $F(H(G))$ , otherwise $A_j'^{(l)}(\fset):=\textbf{0}$. Note that $\fset$ is exactly the feature of $e_{ba}$, and we can access the information about its $r_j$ neighbors from feature since $\mathcal{A}'$ has stored information about these facts.

In conclusion, we've simulated all messages between neighbors. Furthermore, since node sets on $F^T(G)_T$ and $F(H(G))$ are the same, global readout $R^{(l)}$ is also easy to simulate by $R^{*,(l)}$. Finally, using the original combination function $C^{(l)}$, we can construct an R$^2$-TGNN on $F^T$ equivalent to $\mathcal{A}$ on $F(H(G))$ for any temporal knowledge graph $G$. 

\end{proof}
\section{Proof of \Cref{hierarchymain}}
\begin{figure}
    \centering
   
\begin{tikzpicture}[scale=0.80]
     
     \draw[draw=black] (-1,0.5) rectangle ++(12.5,-3);
		\node[fill=none,draw=none] (a) at (0,0) {R$^2$-TGNN};
		\node[fill=none,draw=none] (b) at (4,-2) {time-and-graph};
		\node[fill=none,draw=none] (c) at (4,0) {R$^2$-GNN$\circ H$};
		\node[fill=none,draw=none] (d) at (10,-0.3) {R$^2$-GNN$\circ F \circ H$ };
	    \node[fill=none,draw=none] (e) at (10,-0.9) { R$^2$-TGNN $\circ F^T$ };
	    \node[fill=none,draw=none] (f) at (10,-1.5) { time-then-graph};
	   	\draw[->] (a) to node[midway,above] {$\subsetneq$} (c);
	   	\draw[->] (c) to node[midway,above] {$\subsetneq$} (e);
	   	\draw[->] (b) to node[midway,below] {$\subsetneq$} (e);
     \draw[->] (a) to node[midway,above] {$\nsubseteq$} (b);
\end{tikzpicture}
\caption{Hierarchic expressiveness.\label{fig:hierarchy}}
\end{figure}
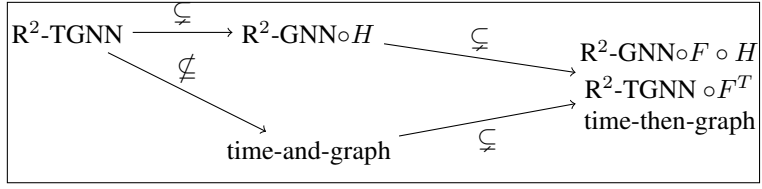
Based on \Cref{tgnnrelation}, \Cref{ttransformation} and \Cref{strictsep}, in order to prove \Cref{hierarchymain}, it suffices to show the following theorems.
\begin{theorem}\label{hier1}
If time range $T>1$ R$^2$-TGNN $\subsetneq$ R$^2$-GNN $\circ H$.
\end{theorem}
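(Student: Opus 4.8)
## Proof Proposal for Theorem~\ref{hier1}

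The plan is to establish the two halves of the strict inclusion separately: first the easy containment R$^2$-TGNN $\subseteq$ R$^2$-GNN $\circ H$, then a separating example witnessing that the containment is strict when $T>1$.

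For the containment direction, I would argue that any R$^2$-TGNN can be simulated by an R$^2$-GNN running on the collapsed (temporalized) static graph $H(G)$. The key observation is that after \emph{temporalizing predicates}, the graph $H(G)$ retains, via the superscripted predicates $p^t$, complete information about which facts held at which timestamp. So an R$^2$-GNN on $H(G)$ can, in principle, reconstruct everything an R$^2$-TGNN sees. Concretely, I would proceed by induction on the time horizon $T$ (or equivalently on the number of constituent models $\mathcal{A}_t$): using a large enough feature dimension, an R$^2$-GNN on $H(G)$ can dedicate disjoint slices of each node's feature vector to simulate the successive timestamp computations. At "timestamp $t$", the simulation only reads/writes entries of the $t$-th slice, aggregates only along edges labelled with predicates of the form $p^t$ (which it can isolate since $H(G)$ keeps the superscripts), concatenates the one-hot features $I_{G_t}(v)$ it recovers from the unary predicates $q^t$, and feeds the result forward to the $(t{+}1)$-st slice. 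This is essentially the same disjoint-slice bookkeeping trick already used in the proof of \Cref{homo} and in the proof of \Cref{ttransformation}, so I would reference those constructions rather than redo the calculation. The conclusion is that R$^2$-TGNN $\subseteq$ R$^2$-GNN $\circ H$ holds on any (temporal) graph class.

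For the strictness, I need a classifier expressible by R$^2$-GNN $\circ H$ but by no R$^2$-TGNN, under the assumption $T>1$. The natural candidate exploits the fact that an R$^2$-TGNN processes the snapshots in a \emph{fixed forward order} $G_1, G_2, \dots, G_T$, whereas R$^2$-GNN $\circ H$ sees all timestamps "simultaneously" and symmetrically. A clean way to witness this: take $T=2$ and consider a classifier that must compare information from snapshot $1$ against information from snapshot $2$ in a way that requires aggregating over $G_2$-neighbours while still remembering a count from $G_1$ — for instance, "the number of $r$-neighbours in $G_2$ equals (or differs from) the number of $r$-neighbours in $G_1$", or a classifier that needs to read a \emph{global readout over $G_2$} to decide something about a node's $G_1$-state and vice versa, creating a cyclic dependency that the left-to-right pipeline of an R$^2$-TGNN cannot resolve in one pass but that an R$^2$-GNN on $H(G)$ handles via its layers having access to all temporalized predicates at once. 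I would then build two temporal graphs $G, G'$ that an R$^2$-GNN $\circ H$ distinguishes at some node but that, for any R$^2$-TGNN with any number of layers, produce identical feature vectors — proved by induction on the number of layers within each timestamp and then induction over the two timestamps, exactly in the style of the indistinguishability argument in the proof of \Cref{inclusion}.

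The main obstacle will be pinning down the separating example and verifying the indistinguishability claim cleanly: I need a pair of temporal graphs where the \emph{per-timestamp} R$^2$-GNN computations genuinely cannot tell the two apart no matter how the timestamps are chained, yet the fully static collapsed view does. The subtlety is that an R$^2$-TGNN is quite powerful within each snapshot (it has global readout and arbitrary depth), so the gap must come purely from the loss of cross-timestamp correlation at the interface between $\mathcal{A}_t$ and $\mathcal{A}_{t+1}$ — the only channel being the single feature vector $\textbf{x}_v^{t-1}$ passed forward per node. I expect to exploit precisely this bottleneck: design the graphs so that after processing $G_1$, the carried-forward features $\textbf{x}_v^{1}$ coincide on $G$ and $G'$ (by a first-layer-induction argument), after which processing $G_2$ cannot recover the distinction, while $H(G)$ versus $H(G')$ remain distinguishable because the collapsed graph still exhibits a structural difference (e.g. of the multi-edge type in Figure~\ref{graphcompare}, now spanning two timestamps). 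Making all the quantifiers and the double induction line up is the part that needs care; the rest is routine.
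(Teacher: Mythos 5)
Your containment half is fine and matches the paper's: the paper dismisses it in one line (at each timestamp the R$^2$-TGNN only touches the predicates $p^t$, all of which survive with their superscripts in $H(G)$, so an R$^2$-GNN on the collapsed graph can replay the per-timestamp computations layer by layer), and your disjoint-slice elaboration via the constructions of \Cref{homo} and \Cref{ttransformation} is a legitimate way to make that precise. The problem is the strictness half: you correctly locate the separation in the loss of cross-timestamp correlation at the $\mathcal{A}_t\to\mathcal{A}_{t+1}$ interface, but you never produce the witness, and the candidate classifiers you float would not actually work. ``The number of $r$-neighbours in $G_2$ equals the number in $G_1$'' \emph{is} computable by an R$^2$-TGNN: carry the $G_1$-count forward in $\textbf{x}_v^1$ and compare at timestamp $2$. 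Likewise anything decidable from a global readout of $G_2$ together with the node's carried-forward $G_1$-state is computable in the forward pass. The bottleneck is also not that the forward channel is a ``single feature vector'' --- it can encode anything computable from $G_1$. The real obstruction is that information which only becomes available at timestamp $2$ may need to flow across an edge that existed only at timestamp $1$, and by then that edge is gone.

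The paper's witness makes this concrete, and it is the entire content of the nontrivial direction. Take $T=2$, one relation $r$, nodes $\{1,\dots,5\}$, $G_1=\{r(1,2),r(4,5)\}$, $G_2=\{r(2,3)\}$, so $H(G)=\{r^1(1,2),r^1(4,5),r^2(2,3)\}$. The classifier $\exists y\bigl(r^1(x,y)\wedge \exists x\, r^2(x,y)\bigr)$ separates node $1$ from node $4$ on $H(G)$ and is directly computable by a two-layer R$^2$-GNN over the temporalized predicates (layer one marks nodes with an $r^2$-neighbour, layer two checks for a marked $r^1$-neighbour). But no R$^2$-TGNN distinguishes $1$ from $4$: the map swapping $1\leftrightarrow 4$ and $2\leftrightarrow 5$ is an automorphism of $G_1$, so $\textbf{x}_1^1=\textbf{x}_4^1$ whatever $\mathcal{A}_1$ is; then in $G_2$ both $1$ and $4$ are isolated and see the same global readout, so induction over the layers of $\mathcal{A}_2$ gives $\textbf{x}_1^2=\textbf{x}_4^2$. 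Note also that the paper separates two nodes of the \emph{same} temporal graph rather than two graphs $G,G'$, which spares you one of your two inductions. Your plan needs to land on an example of exactly this shape --- a node that must learn, through a timestamp-$1$ edge, a fact about its neighbour that only materializes at timestamp $2$ --- and without it the theorem is not established.
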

\begin{theorem}\label{hierr}
If time range $T>1$ R$^2$-TGNN $\nsubseteq$ time-and-graph.
\end{theorem}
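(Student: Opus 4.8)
The plan is to establish non‑containment by exhibiting a concrete witness: a temporal graph class with $T=2$ snapshots, a Boolean node classifier $c$ on it that is realised by some R$^2$-TGNN, together with two instances of the class that $c$ separates but that no time-and-graph model separates. The intuition guiding the choice of $c$ is precisely why the hypothesis $T>1$ is needed. An R$^2$-TGNN runs a complete, readout‑equipped R$^2$-GNN $\mathcal{A}_1$ on the first snapshot \emph{before} touching the second, so it can evaluate any first‑snapshot property $\Phi$ that is within reach of an R$^2$-GNN, store the resulting bit in a reserved coordinate of every node's feature, and then let $\mathcal{A}_2$ recombine that bit freely with the neighbourhood structure of the second snapshot. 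The time-and-graph framework, which interleaves a \emph{fixed}-depth GNN (the model of \Cref{gaognn}) with the temporal recursion, cannot reproduce this hand‑off; this is the same structural obstruction that is exploited in \Cref{gao22neq} to separate time-and-graph from time-then-graph.

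Concretely, I would fix $P_1=\{A\}$ and $P_2=\{p\}$ (together with $\mathsf{EQ}$), take $T=2$, and let $G_1$ carry a small gadget in which a designated node satisfies an R$^2$-GNN‑checkable property $\Phi$ — for instance $\Phi(v)$ := ``$v$ has strictly more $A$‑labelled $p$‑neighbours than non‑$A$‑labelled ones'', which is exactly the kind of classifier shown realisable by a one‑layer $0/1$-GNN in the proof of \Cref{inclusion} — and let $G_2$ attach a fresh ``query'' node by a single $p$‑edge to precisely the nodes we wish to classify. Define $c(\mathcal{G},v)=\mathit{true}$ iff $v$ has a $G_2$‑neighbour that satisfies $\Phi$ on $G_1$. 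The R$^2$-TGNN realising $c$ is then immediate: $\mathcal{A}_1$ is a $0/1$-GNN (\Cref{01acrgnn}) that computes $\Phi$ node‑by‑node, assembled via the Boolean closure of $0/1$-GNNs (\Cref{closure}), and writes the bit into a reserved coordinate of $\mathbf{x}^{1}_v$; $\mathcal{A}_2$ is a $0/1$-GNN that aggregates this coordinate over the $p$‑neighbourhood in $G_2$ and thresholds at $1$. This shows $c\in$ R$^2$-TGNN.

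The main obstacle is the lower bound: showing that \emph{no} time-and-graph model computes $c$. For this I would take $\mathcal{G}^{+}$ and $\mathcal{G}^{-}$ to be two instances of the class that are identical except that in $\mathcal{G}^{+}$ the relevant first‑snapshot node satisfies $\Phi$ and in $\mathcal{G}^{-}$ it does not, chosen so that in both graphs the query node's ``time-and-graph view'' is otherwise the same; then argue that every time-and-graph model assigns the query nodes $v^{+},v^{-}$ the same feature. The argument is the standard unfolding/bisimulation one: describe the computation of \Cref{gaognn} composed with the temporal recursion as a labelled DAG over (layer, time, node) triples, show the unfoldings rooted at $(L,2,v^{+})$ and $(L,2,v^{-})$ are isomorphic for every depth $L$, and conclude by induction on layers that all intermediate features — neighbour aggregates and global readouts alike — coincide, hence the outputs do. The delicate part, which is where the gadget design earns its keep, is to arrange that the $\Phi$‑distinction, which is genuinely local in $G_1$, is invisible in every bounded‑depth time-and-graph unfolding even though it is visible to the ``finish $G_1$ then do $G_2$'' discipline of an R$^2$-TGNN; this step mirrors the proof behind \Cref{gao22neq}, and I would adapt that bisimulation rather than redo it from scratch.

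Once $c\in$ R$^2$-TGNN and $c\notin$ time-and-graph are in hand, \Cref{hierr} follows. I should note an alternative, more economical route that avoids reconstructing the gadget: start from \Cref{gao22neq} and \Cref{tgnnrelation}, which already furnish a classifier computed by some R$^2$-TGNN $\circ F^{T}$ but by no time-and-graph model; if the temporal graph class underlying that known separation has simple snapshots and the classifier does not depend on edge‑histories persisting across snapshots, then $F^{T}$ is redundant there, so R$^2$-TGNN $=$ R$^2$-TGNN $\circ F^{T}$ on that class and the known witness already lies in plain R$^2$-TGNN. Verifying this redundancy is itself a small bisimulation check, so in either route the crux — a time-and-graph inexpressibility argument — is essentially the same.
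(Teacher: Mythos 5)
Your proposal takes essentially the same route as the paper: the paper's own argument for \Cref{hierr} is itself only a sketch that exhibits a witness of exactly your shape --- $\varphi(x)=\exists^{\ge 1}y\,\bigl(r_1^2(x,y)\wedge\exists^{\ge 1}x\,r_1^1(y,x)\bigr)$, i.e.\ ``$x$ has a snapshot-2 neighbour satisfying a snapshot-1 property'' (your $c$ with $\Phi(y):=\exists^{\ge 1}x\,r_1^1(y,x)$) --- notes it is realised by an R$^2$-TGNN via the compute-on-$G_1$, store-a-bit, aggregate-on-$G_2$ discipline you describe, and defers the time-and-graph inexpressibility to the argument in \cite{gao2022equivalence}. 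The bisimulation lower bound you outline is precisely the step the paper also declines to carry out and outsources to that reference, so your attempt is no less complete than the paper's own proof.
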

\begin{proof}
Since a formal proof \Cref{hierr} relates to too many details in definition of time-and-graph (Please refer to \cite{gao2022equivalence}) which is not the focus here. We will just a brief proof sketch of \Cref{hierr}: That's because time-and-graph can not capture a chain of information that is continuously scattered in time intervals. Specifically, $\varphi(x)\coloneqq\exists^{\ge 1} y,\left(r^2_1(x,y)\wedge(\exists^{\ge 1} x,r^1_1(y,x))\right)$ can't be captured by time-and-graph but $\varphi(x)$ is in R$^2$-TGNN.

We mainly give a detaild proof of \Cref{hier1}: Since in each timestamp $t$, R$^2$-TGNN only uses 
 a part of predicates in temporalized predicate set $P'=H(P)$, 
  R$^2$-TGNN $\subseteq$ R$^2$-GNN $\circ H$ trivially holds. To show R$^2$-TGNN is strictly weaker than R$^2$-GNN $\circ H$. Consider the following classifier:

 Let time range $T=2$, and let $r$ be a binary predicate in $P_2$. Note that there are two different predicates $r^1,r^2$ in $P'=H(P)$. Consider the following temporal graph $G$ with $5$ nodes $\{1,2,3,4,5\}$. its two snapshots $G_1,G_2$ are as follows:

 $G_1=\{r(1,2),r(4,5)\}$

 $G_2=\{r(2,3)\}$.

 It follows that after transformation $H$, the static version of $G$ is:

 $H(G)=\{r_1(1,2),r_1(4,5),r_2(2,3)\}$.

 Consider the logical classifier $\exists y\Bigl(r_1(x,y)\wedge(\exists xr_2(x,y))\Bigl)$ under $P'$.It can be captured by some R$^2$-GNN under $P'$. Therefore,  R$^2$-GNN $\circ H$ can distinguish nodes $1,4$.

 However, any R$^2$-TGNN based on updating rules in \Cref{acrtgnn} can't distinguish these two nodes, so R$^2$-TGNN is strictly weaker than R$^2$-GNN $\circ H$.
\end{proof}

Based on \Cref{hier1}, we can consider logical classifier  $\mathbf{\varphi_3} \coloneqq \exists^{\ge 2}y(p_1^1(x,y)\wedge p_1^2(x,y))$. Note that this classifier is just renaming version of Figure~\ref{graphcompare}. Therefore $\varphi_3$ can't be captured by R$^2$-GNN $\circ H$, not to say weaker framework R$^2$-GNN by \Cref{hier1}.

.
\section{Experiment Supplementary}
\subsection{Synthetic dataset generation}
For each synthetic datasets, we generate 7000 graphs as tranining set and 500 graphs as test set. Each graph has $50-1000$ nodes. In graph generation, we fix the expected edge density $\delta$. In order to generate a graph with $n$ nodes, we pick $\delta n$ pairs of distinct nodes uniformly randomly. For each selected node pair $a,b$, each timestamp $t$ and each binary relation type $r$, we add $r^t(a,b)$ and $r^t(a,b)$ into the graph with independent probability $\frac{1}{2}$.

\begin{table}[t!]
     \centering
    \begin{tabular}{c|c|c|c|c}
        \hline
        datasets & $\varphi_1$ & $\varphi_2$ & $\varphi_3$ & $\varphi_4$ \\
        \hline
        Avg \# Nodes & 477 & 477 & 477 & 477 \\ 
        \hline
        Time\_range & 2 & 2 & 2 & 10 \\ 
        \hline
        \# Unary\ predicate &2 &2 &2 &3 \\
        \hline
        \# Binary\ predicate(non-temporalized) &1 &1 &1 &3 \\
        \hline 
        Avg \# Degree (in single timestamp) & 3 & 3 & 3 & 5 \\ 
        \hline 
        Avg \# positive percentage & 50.7 & 52 & 25.3 & 73.3 \\
        \hline
    \end{tabular}
    \caption{statistical information for synthetic datasets. \label{tab:syninfo}}
\end{table}

\begin{table}[t!]
     \centering
    \begin{tabular}{c|c|c|c}
        \hline
        datasets & AIFB & MUTAG & Brain-10 \\
        \hline
        \# Nodes & 8285 & 23644 & 5000 \\ 
        \hline
        Time\_Range & $ \backslash$ &  $\backslash$ & 12 \\ 
        \hline 
        \# Relation types & 45 & 23 & 20 \\ 
        \hline 
        \# Edges & 29043 & 74227 & 1761414 \\ 
        \hline 
        \# Classes & 4 & 2 & 10 \\ 
        \hline 
        \# Train Nodes & 140 & 272 & 4500 \\
        \hline
        \# Test Nodes & 36 & 68 & 500 \\
        \hline
    \end{tabular}
    \caption{statistical information for Real datasets.\label{tab:realinfo}}
\end{table}
\subsection{Statistical Information for Datasets}
We list the information for synthetic dataset in Table~\ref{tab:syninfo} and real-world dataset in Table~\ref{tab:realinfo}. Note that synthetic datasets contains many graphs, but real-world datasets only contains a single graph. Therefore, for real-world dataset, we have two disjoint node set as train split and test split for training and testing respectively. In training, the model can see the subgraph 
 induced by train split and unlabelled nodes, in testing, the model can see the whole graph but only evaluate the performance on test split.
\begin{table}[t!]
     \centering
    \begin{tabular}{c|c}
        \hline
        hyper-parameter & range \\
        \hline
        learning rate & 0.01 \\
        \hline
        combination & mean/max/add \\
        \hline
        aggregation/readout & mean/max/add \\
        \hline
        layer & $1,2,3$ \\
        \hline
        hidden dimension & $10,64,100$ \\
        \hline
    \end{tabular}
    \caption{Hyper-parameters.\label{tab:hyper}}
\end{table}
\subsection{Hyper-parameters}
For all experiments, we did grid search according to Table~\ref{tab:hyper}.
\subsection{More Results}
Apart from those presented in main part, we have some extra experimental results here:
\begin{table}[t!]
     \centering
    \begin{tabular}{c|c|c|c|c}
        \hline
        $\foc$ classifier & $\varphi_1$ & $\varphi_2$ & $\varphi_3$ & $\varphi_4$ \\
        \hline
        R-GAT $\circ H$ & 100 &61.4 &88.6 &82.0\\
        \hline
        R$^2$-GAT $\circ H$ & 100 &93.5 &95.0 &82.2\\ 
        \hline 
        R$^2$-GAT $\circ F\circ H$ & \textbf{100} &\textbf{98.2} &\textbf{100} &\textbf{95.8}\\
        \hline
    \end{tabular}
    \caption{Extra results on synthetic datasets\label{tab:ex_syn}}
\end{table}
\begin{table}[t!]
     \centering
    \begin{tabular}{c|c|c|c|c}
        \hline
         & AIFB & MUTAG & DGS & AM \\
         \hline
        \# of nodes  & 8285 &23644 &333845 &1666764\\
        \hline
        \# of edges  & 29043 &74227 &916199 &5988321\\
        \hline
        R-GCN  & 95.8 &73.2 &83.1 &89.3\\
        \hline
        R-GAT  & 96.9 &74.4 &86.9 &90.0\\
        \hline
        R-GNN  & 91.7 &76.5 &81.2 &89.5\\
        \hline
        R$^2$-GNN  & 91.7 &85.3 &85.5 &89.9\\ 
        \hline 
       R$^2$-GNN $\circ F$  & \textbf{97.2} &\textbf{88.2} &\textbf{88.0} &\textbf{91.4}\\ 
        \hline
    \end{tabular}
    \caption{Extra results for static real-world datasets.\label{tab:ex_realsta}}
\end{table}
\begin{table}[t!]
     \centering
    \begin{tabular}{c|c|c|c|c|c|c}
        \hline
         Models & GRU-GCN$\circ F^T$ & TGN $\circ F^T$ & R-TGNN & R-TGNN $\circ F^T$ & R$^2$-TGNN & R$^2$-TGNN$\circ F^T$ \\
         \hline
        Brain-10 & 95.0& 94.2&85.0&90.9&94.8&94.0\\
        \hline
    \end{tabular}
    \caption{Extra results for temporal real-world dataset Brain-10.\label{tab:ex_realtem}}
\end{table}

1. Extra results on synthetic datasets but using different base model architecture, where R-GAT refers to \cite{busbridge2019relational} and R$^2$-GAT refers to its extension with global readout. Please Refer to \Cref{tab:ex_syn}. These results show the generality of our results on different base models within the framework.

2. Extra results for static real-world datasets. Add a base model R-GAT\cite{busbridge2019relational} and two larger real-world datasets DGS and AM from \cite{schlichtkrull2018modeling}. Please refer to \Cref{tab:ex_realsta}. From the results for two bigger datasets DGM and AM, we can see our framework
  outperforms the other baselines, which confirms the scalability of our method and theoretical results.  These results show our method is effective both on small and large graphs.

3. Extra results for temporal real-world dataset Brain-10. Please refer to \Cref{tab:ex_realtem}. These results implies that our method is effective on different base models in temporal settings. Moreover, we can see separate improvements from global readout and graph transformation respectively. As we said in the main part, the drop in the last column may be due to the intrinsic drawbacks of current real-world datasets. Many real-world datasets can not be perfectly modeled as first-order-logic classifier.   This non-logical property may lead to less convincing experimental results. As \cite{Barcelo2020The}  commented,  these commonly used benchmarks are inadequate for testing advanced GNN variants.


\end{document}